\pgfplotsset{compat=1.18}
\title{SGD with memory: fundamental properties and stochastic acceleration}
\author{Dmitry Yarotsky \\
Skoltech, \\
Steklov Mathematical Institute\\
\texttt{d.yarotsky@skoltech.ru}
\And 
Maksim Velikanov \\
Technology Innovation Institute, \\
CMAP, Ecole Polytechnique \\
\texttt{maksim.velikanov@tii.ae}
}
\DeclareMathOperator{\Tr}{Tr}
\newtheorem{lemma}{Lemma}
\newtheorem{prop}{Proposition}
\newtheorem{ther}{Theorem}
\newcommand{\alphaeff}[1][]{\alpha_{\mathrm{eff}#1}}
\begin{document}

\maketitle

\begin{abstract} An important open problem is the theoretically feasible acceleration of mini-batch SGD-type algorithms on quadratic problems with power-law spectrum. In the non-stochastic setting, the optimal exponent $\xi$ in the loss convergence $L_t\sim C_Lt^{-\xi}$ is double that in plain GD and is achievable using Heavy Ball (HB) with a suitable schedule; this no longer works in the presence of mini-batch noise. We address this challenge by considering first-order methods with an arbitrary fixed number $M$ of auxiliary velocity vectors (\emph{memory-$M$ algorithms}). We first prove an equivalence between two forms of such algorithms and describe them in terms of suitable characteristic polynomials. Then we develop a general expansion of the loss in terms of \emph{signal and noise propagators}. Using it, we show that losses of stationary stable memory-$M$ algorithms always retain the exponent $\xi$ of plain GD, but can have different constants $C_L$ depending on their \emph{effective learning rate} that generalizes that of HB. We prove that in memory-1 algorithms we can make $C_L$ arbitrarily small while maintaining stability. As a consequence, we propose a memory-1 algorithm with a time-dependent schedule that we show heuristically and experimentally to improve the exponent $\xi$ of plain SGD.

\end{abstract}

\section{Introduction}\label{sec:intro}

Optimization is a central component of deep learning, with accelerated training algorithms providing both better final results and reducing the large amounts of compute used to train modern large models. The key aspects of this optimization is a huge number of degrees of freedom and the batch-wise learning process. The theoretical understanding of even simple gradient-based algorithms such as Stochastic Gradient Descent (SGD) is currently limited, and their acceleration challenging.

Deep neural networks can be approximated with linear models when close to convergence or in wide network regimes \citep{lee2019wide,Fort_2020}. For regression problems with MSE loss, this leads to quadratic optimization problem. The resulting spectral distributions -- the eigenvalues $\lambda_k$ of the Hessian and the coefficients $c_k$ of the expansion of the optimal solution $\mathbf{w}_*$ over the Hessian eigenvectors -- are often well described by power-laws when numerically computed on real-world data (\citet{cui2021generalization,bahri2021explaining,Wei_2022}, see more details and related work in Sec.~\ref{sec:relatedwork}). With this motivation, we focus on infinite-dimensional quadratic problems, and especially (but not exclusively) those with a power-law asymptotic form of eigenvalues $\lambda_k$ and coefficients $c_k$:
\begin{equation}\label{eq:powlaws}
\lambda_k=\Lambda k^{-\nu}(1+o(1)),\quad k\to\infty; \qquad \sum_{k:\lambda_k<\lambda}\lambda_k c_k^2= Q\lambda ^{\zeta}(1+o(1)),\quad \lambda\to0.
\end{equation}
These power laws  can be thought of as stricter versions of the classical capacity and source conditions \citep{caponnetto2007optimal}. Convergence rates of different first-order algorithms are determined by the eigenvalue and target exponents $\nu,\zeta>0$. 

Theoretically more tractable are \emph{stationary} algorithms, i.e. with iteration-independent parameters. In the non-stochastic setting, both stationary GD and Heavy Ball (HB) exhibit the same loss rate $O(t^{-\zeta})$ \citep{nemirovskiy1984iterative1}. In the presence of mini-batch noise, analysis of convergence becomes more challenging, with only the recent works \citet{berthier2020tight,varre2021last} showing that the rate of stationary GD remains $O(t^{-\zeta})$ for sufficiently hard targets with $\zeta<2-\tfrac{1}{\nu}$, while slowing down by noise to $O(t^{-(2-\frac{1}{\nu})})$ for easier targets with $\zeta>2-\tfrac{1}{\nu}$. In the rest of the paper, we will refer to these two regimes as \emph{signal-} and \emph{noise-dominated}, respectively. 

With non-stationary algorithms, the rate in non-stochastic setting can be accelerated to $O(t^{-2\zeta})$ using HB with a schedule of learning rate and momentum based on Jacobi polynomials \citep{brakhage1987ill}. However, in the stochastic mini-batch setting we are not aware of similar acceleration results for general $\zeta,\nu$ in the literature. Moreover, in Fig.~\ref{fig:accleration_loss} we show that direct application of HB with Jacobi schedule to noisy problems consistently leads to divergence of the optimization trajectory.

\begin{figure}
    \centering
    \includegraphics[scale=0.4, clip, trim=5mm 5mm 3mm 5mm]{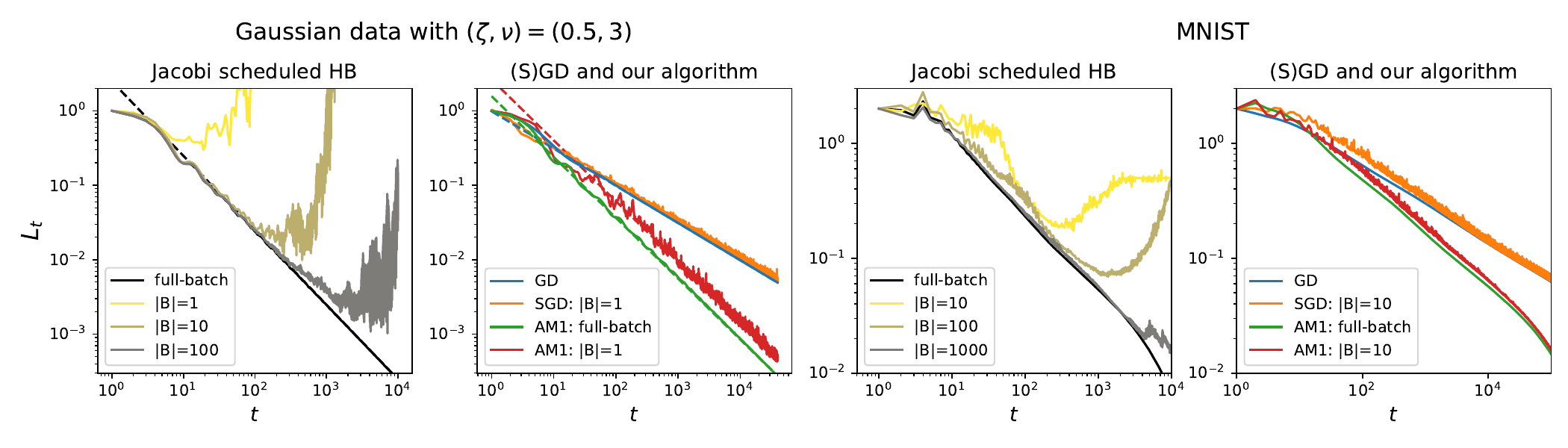}
    \caption{
    \emph{Divergence of Jacobi accelerated HB vs. stability of our accelerated memory-1 (AM1) method.} 
    For both synthetic Gaussian data \textbf{(left)} and MNIST classification with shallow ReLU network \textbf{(right)}, Jacobi HB enjoys the accelerated rate $L_t=O(t^{-2\zeta})$ in full-batch setting, but eventually diverges in stochastic one. Increasing batch size only delays the loss explosion. In contrast, our AM1 algorithm, although having weaker acceleration than full-batch Jacobi HB, is stable for small batch sizes and a long training duration. For Gaussian data with an ideal power-law spectrum, dashed lines show theoretical loss power laws with predicted exponents ($t^{-\zeta}$ for SGD and $t^{-\zeta(1+\overline{\alpha})}$ for AM1) and asymptotically match the experimental loss trajectories. Here $\overline{\alpha}>0$ is the schedule exponent of the effective learning rate $\alpha_{\mathrm{eff},t}\approx t^{\overline{\alpha}}$ used by AM1 algorithm. See precise definition of the AM1 schedule and parameters in sec.~\ref{sec:mem1}. Also, see sec.~\ref{sec:experiments} for figure details and extra experiments.
    }   
    \label{fig:accleration_loss}
\end{figure}  

Previous works point to a connection between the observed divergence of Jacobi scheduled HB and a weaker (constant-level) acceleration of the stationary algorithms. Under power law spectrum \eqref{eq:powlaws}, stationary Heavy Ball in both non-stochastic \citep{velikanov2024tight} and signal-dominated stochastic  \citep{velikanovview} settings has the loss asymptotic $L_t\sim C_Lt^{-\zeta}$. While the exponent $\zeta$ is fixed, the constant $C_L$ can vary greatly with learning rate $\alpha$ and momentum $\beta$ parameters of HB as $C_L\propto (\alpha_\mathrm{eff})^{-\zeta}$, where $\alpha_\mathrm{eff}=\tfrac{\alpha}{1-\beta}$ is the \emph{effective learning rate}. On the one hand, non-stochastic setting allows to take $\beta\to1$ without destroying the stability of the algorithm, making the constant $C_L$ arbitrarily small. In a sense, this mechanism is behind the Jacobi scheduled HB with the accelerated rate $O(t^{-2\zeta})$ which slowly takes $\beta_t\to1$ along the optimization trajectory. On the other hand, \cite{velikanovview} shows that maintaining stability under a mini-batch sampling noise requires $\alpha_\mathrm{eff}$ to be bounded, thus putting a lower bound on the loss constant $C_L$.

\paragraph{Our contributions.} In this work, we overcome the above limitation by developing a general framework to analyze loss trajectories of a broad family of first-order methods. Then, we identify a region in the space of update rules that achieves acceleration while being stable. Specifically:
\begin{enumerate}
    \item \textbf{Memory-$M$ algorithms.} We define a broad family of first-order algorithms that allow to store an arbitrary fixed amount $M$ of auxiliary vectors (i.e. generalized velocities), and use any linear combinations of them in gradient computation and parameter update. First, we connect these memory-$M$ algorithms with other multistep first-order methods. Then, in the mini-batch setting, we provide a unified description of their loss trajectories in terms of expansion in various combinations of elementary signal and noise \emph{propagators}. 
    \item \textbf{Combinatorial approach to loss asymptotics.} We use the obtained propagator expansion to enrich in several directions the phase diagram of mini-batch SGD converges rates, for stationary algorithms. First, we show that the same signal-dominated $O(t^{-\zeta})$ and noise-dominated $O(t^{-(2-\frac{1}{\nu})})$ rates remain valid for any memory-$M$ algorithm. Second, we give a simple interpretation of the two phases in terms of our propagator expansion: the loss in the signal- (resp, noise-) dominated phase is asymptotically mostly determined by configurations including one long signal (resp, noise) propagator. Finally, we generalize \emph{effective learning rate} $\alpha_\mathrm{eff}$ from HB to all memory-$M$ algorithms, and show that it largely determines the constant $C_L$ in the loss asymptotic. 
    \item \textbf{Accelerated update rule.} We then focus on general memory-1 algorithms. First, we find explicit expressions for loss asymptotic and full stability conditions of general stationary memory-$1$ algorithms. Then, in this space of stable algorithms, we find a region with arbitrary large effective learning rate $\alpha_\mathrm{eff}$, which translates into arbitrary small constant $C_L$ in the loss asymptotic. The update rule in this region can be obtained by adding an extra learning rate to HB. Finally, we design a schedule that travels deep within the region in a power-law manner, and demonstrate that it achieves accelerated convergence rates up to $O(t^{-\zeta(2-\frac{1}{\nu})})$ in the signal-dominated phase.  
\end{enumerate}

\section{The setting} We consider the regression problem of fitting a target function $y(\mathbf{x})$ with a linear model $f(\mathbf{w},\mathbf{x})=\langle\mathbf x,\mathbf w\rangle$. Inputs $\mathbf{x}$ and parameters $\mathbf{w}$ are  elements of a (infinite-dimensional) Hilbert space $\mathcal{H}$, and $\langle\cdot,\cdot\rangle$ denotes the scalar product in $\mathcal{H}$. Inputs $\mathbf{x}$ are drawn from a distribution $\rho$ on $\mathcal{H}$, and we measure the loss of the model as $L(\mathbf{w})=\tfrac{1}{2}\mathbb E_{\mathbf x\sim \rho} (\langle\mathbf x,\mathbf w\rangle - y(\mathbf x))^2$.  Importantly, we assume that the target function is representable in $\mathcal{H}$ as $y(\mathbf{x})=\langle\mathbf x,\mathbf w_*\rangle$, and require only the norm $E_{\mathbf x\sim \rho} [y(\mathbf{x})^2]$ of the target but not of the optimal parameters $\mathbf{w}_*$ to be finite. This setting is relevant to kernel methods or overparametrized neural networks, with $\mathbf{x}$ being the mapping of original inputs to either kernel feature space or gradients of the model, respectively; see Sec.~\ref{sec:relatedwork} for more details. This brings us to the quadratic objective 
\begin{equation}\label{eq:H}
    L(\mathbf w)=\frac{1}{2}\langle \Delta\mathbf w, \mathbf{H} \Delta\mathbf w\rangle, \qquad \mathbf{H}= \mathbb E_{\mathbf x\sim \rho} [\mathbf{x}\otimes\mathbf{x}],
\end{equation}
where $\mathbf{H}$ is the model's Hessian and $\Delta \mathbf{w}=\mathbf{w}-\mathbf{w}_*$. 

We will always assume that the Hessian $\mathbf{H}$ has a discrete spectrum with eigenvalues $\lambda_k\searrow 0, \lambda_k\ne 0$. We will also often (but not always) assume that the eigenvalues $\lambda_k$ and the respective eigenexpansion coefficients $c_k=\langle \mathbf e_k, \mathbf w_*\rangle$ of $\mathbf w_*$ obey power laws \eqref{eq:powlaws} with some $\nu,\zeta>0$. We allow the ``infeasible'' case $0<\zeta<1$ when $\langle \mathbf w_*, \mathbf{H}\mathbf w_*\rangle<\infty$ but $\|\mathbf w_*\|^2=\langle \mathbf w_*, \mathbf w_*\rangle=\infty.$

For stochastic optimization, we consider the random mini-batch setting. At each iteration $t$ we choose a batch $B=\{\mathbf x_i\}_{i=1}^b$ of $|B|$ i.i.d. samples from $\rho$, and compute the gradient on the respective empirical loss $L_B(\mathbf w)=\tfrac{1}{2|B|}\sum_{i=1}^{|B|} (\langle\mathbf{x}_i,\Delta \mathbf{w}\rangle)^2.$ 
We focus on the evolution of the mean loss $L_t\equiv \mathbb E [L(\mathbf w_t)]$ averaged w.r.t. the choice of all the batches $B_1,\ldots,B_t$ along the optimization trajectory.

\section{Gradient descent with memory}\label{sec:memory_M_algs}
\paragraph{General form of the algorithm.} We start with describing our gradient descent with memory in the non-stochastic setting. While later we will specialize to quadratic objectives $L$ as in Eq. \eqref{eq:H},  GD with memory can be defined as the first-order minimization method applicable to any loss $L$:
\begin{equation}\label{eq:gen_iter}
    \begin{pmatrix}{\mathbf w}_{t+1}-{\mathbf w}_{t}\\ \mathbf u_{t+1}\end{pmatrix}=\begin{pmatrix}-\alpha_t & \mathbf{b}_{t}^T \\  \mathbf c_t & D_t  \end{pmatrix} \begin{pmatrix} \nabla L({\mathbf w}_{t}+\mathbf a_{t}^T \mathbf u_t)\\ \mathbf{u}_{t}\end{pmatrix}, \quad t=0,1,2,\ldots 
\end{equation}
Here, $\mathbf w_t$ is the step-$t$ approximation to an optimal vector $\mathbf w_*$, and $\mathbf u_t$ is an auxiliary vector representing the ``memory'' of the optimizer. We assume that these auxiliary vectors have the form $\mathbf u=(\mathbf u^{(1)},\ldots\mathbf u^{(M)})^T$ with $\mathbf u^{(m)}\in \mathcal H;$ in other words, $\mathbf u_t$ is a size-$M$ column with each component belonging to the same space $\mathcal H$ as the vector $\mathbf w_t$. We will refer to $M$ as the \emph{memory size}. The parameters $\alpha_t$ (learning rates) are scalar, the parameters $\mathbf a_t, \mathbf b_t,\mathbf c_t$ are $M$-dimensional column vectors, and $D_t$ are $M\times M$ scalar matrices. By $\mathbf a_{t}^T \mathbf u_t$ we mean $\sum_{m=1}^M a^{(m)}_t \mathbf u^{(m)}_t,$ where $a_t^{(m)}$ are the scalar components of the vector $\mathbf a_t$. The algorithm can be viewed as a sequence of transformations of size-$(M+1)$ column vectors $(\begin{smallmatrix}\mathbf w_t\\\mathbf u_t\end{smallmatrix})$ with $\mathcal H$-valued components. 

Note a few important special cases:
\begin{enumerate}
    \item The basic gradient descent (GD) $\mathbf w_{t+1}=\mathbf w_t-\alpha_t \nabla L({\mathbf w}_{t})$ is the memory-0 case.
    \item The Heavy Ball (a.k.a. GD with momentum, \citet{polyak1964some}) can be defined by the two-step recurrence  $\mathbf w_{t+1}=\mathbf w_t-\alpha_t \nabla L({\mathbf w}_{t})+\beta_t(\mathbf w_t-\mathbf w_{t-1})$. It can be cast in the form \eqref{eq:gen_iter} with memory $M=1$ by introducing the auxiliary vector $\mathbf u_t=\mathbf w_t-\mathbf w_{t-1}$:
    \begin{equation}\label{eq:fbmat}
    \begin{pmatrix}\mathbf{w}_{t+1}-\mathbf{w}_{t}\\  \mathbf u_{t+1}\end{pmatrix}=\begin{pmatrix} -\alpha_t & \beta_t\\  -\alpha_t & \beta_t \end{pmatrix} \begin{pmatrix}  \nabla L(\mathbf{w}_{t})\\ \mathbf {u}_{t}\end{pmatrix}.
\end{equation}
\item Averaged GD \citep{polyak1992acceleration} can be defined by $\mathbf w_t=\tfrac{1}{t}\sum_{s=1}^t \widetilde{\mathbf w}_s$, where $\widetilde{\mathbf w}_{t+1}=\widetilde{\mathbf w}_t-\alpha_t \nabla L({\widetilde{\mathbf w}}_{t})$ (i.e., the original GD trajectory $\widetilde {\mathbf w}_t$ is subsequently averaged, usually with the purpose of noise suppression). It can be cast in the form \eqref{eq:gen_iter} with memory $M=1$ by introducing the auxiliary vector $\mathbf u_t=\widetilde{\mathbf w}_t-\mathbf w_{t}$:
\begin{equation}\label{eq:averagedgd}
    \begin{pmatrix}\mathbf{w}_{t+1}-\mathbf{w}_{t}\\  \mathbf u_{t+1}\end{pmatrix}=\begin{pmatrix} -\frac{\alpha_t}{t+1} & \frac{1}{t+1}\\  -\frac{\alpha_t t}{t+1} & \frac{t}{t+1} \end{pmatrix} \begin{pmatrix}  \nabla L(\mathbf{w}_{t}+\mathbf u_t)\\ \mathbf {u}_{t}\end{pmatrix}.
\end{equation}
\end{enumerate}
\cite{flammarion2015averaging} consider a wider family of memory-1 algorithms that also includes Nesterov's accelerated gradient \citep{nesterov1983method}. Some algorithms in the literature use memory size larger than 1 \citep{varre2022accelerated}.

Iterations \eqref{eq:gen_iter} thus represent a general framework encompassing various standard algorithms. The first entry in the l.h.s. is written as $\mathbf w_{t+1}-\mathbf w_t$ because it is natural to have all matrix elements in Eq. \eqref{eq:gen_iter} \emph{shift-invariant} while the sequence $\mathbf w_t$ is naturally \emph{shift-equivariant} (see \ref{sec:shiftinv}). 

Clearly, there is some redundancy in the definition of the algorithms \eqref{eq:gen_iter}. In particular, any linear transformation $\mathbf u'_t=C_t\mathbf u_t$ of the auxiliary vectors with some non-degenerate $M\times M$ matrices $C_t$ produces an equivalent algorithm, having new parameters $\mathbf{a}'_t, \mathbf b'_t, \mathbf c'_t, D'_t$, but preserving the trajectory $\mathbf w_t$. A large part of our work will be focused on the {stationary} setting, i.e. when $\alpha, \mathbf{a}, \mathbf b, \mathbf c, D$ are $t$-independent. In this setting we will be able to largely remove redundancy by going to an equivalent form of the dynamics.   

\paragraph{Quadratic losses.} We assume now a quadratic objective $L$ as in Eq. \eqref{eq:H}. In this case $\nabla L(\mathbf{w})=\mathbf{H}\Delta\mathbf{w}$ and iterations \eqref{eq:gen_iter} acquire the form
\begin{align}\label{eq:gen_iter2}    \begin{pmatrix}\Delta\mathbf{w}_{t+1}\\ \mathbf u_{t+1}\end{pmatrix}={}&\bigg[\begin{pmatrix}1 & \mathbf{b}_{t}^T \\ 0 & D_t  \end{pmatrix}+ \mathbf{H} \otimes \begin{pmatrix} -\alpha_t\\ \mathbf c_t  \end{pmatrix}(1, \mathbf a_{t}^T)\bigg]\begin{pmatrix}\Delta\mathbf{w}_{t}\\ \mathbf u_{t}\end{pmatrix}.
\end{align}
Here, if $(\begin{smallmatrix}\Delta \mathbf w_t\\ \mathbf u_t\end{smallmatrix})$ is viewed as an element of the tensor product $\mathcal H\otimes \mathbb R^{M+1},$ the operator $\mathbf{H}$ acts in the first factor, and the rank-1 matrix $(\begin{smallmatrix} -\alpha_t\\ \mathbf c_t  \end{smallmatrix})(\begin{smallmatrix}1 & \mathbf a_{t}^T\end{smallmatrix})$ in the second. In particular, if $\Delta \mathbf w_t$ and $\mathbf u_t$ are eigenvectors of $\mathbf{H}$ with eigenvalue $\lambda,$ then  
\begin{align}\label{eq:gen_iter3}    \begin{pmatrix}\Delta\mathbf{w}_{t+1}\\ \mathbf u_{t+1}\end{pmatrix}={}&S_{\lambda,t}\begin{pmatrix}\Delta\mathbf{w}_{t}\\ \mathbf u_{t}\end{pmatrix},\quad S_{\lambda,t}=\bigg[\begin{pmatrix}1 & \mathbf{b}_{t}^T \\ 0 & D_t  \end{pmatrix}+ \lambda  \begin{pmatrix} -\alpha_t\\ \mathbf c_t  \end{pmatrix}(1, \mathbf a_{t}^T)\bigg],
\end{align}
and the new vectors $\Delta \mathbf w_{t+1}, \mathbf u_{t+1}$ are again eigenvectors of $\mathbf{H}$ with eigenvalue $\lambda$. As a result, performing the spectral decomposition of $\Delta \mathbf w_{t}, \mathbf u_{t}$ reduces the original dynamics \eqref{eq:gen_iter} acting in $\mathcal \mathbf{H}\otimes \mathbb R^{M+1}$ to a $\lambda$-indexed collection of independent dynamics each acting in $\mathbb R^{M+1}$. 

\paragraph{Equivalence with multistep recurrences.} As noted earlier, Heavy Ball can be alternatively written either in the single-step matrix form \eqref{eq:fbmat} with auxiliary vectors $\mathbf u_t$, or as the two-step recurrence $\mathbf w_{t+1}=\mathbf w_t-\alpha_t \nabla L({\mathbf w}_{t})+\beta_t(\mathbf w_t-\mathbf w_{t-1})$ not involving auxiliary vectors, but involving two previous vectors $\mathbf w_t, \mathbf w_{t-1}$. A natural question is whether a similar equivalence holds for general algorithms \eqref{eq:gen_iter}. It appears that there is no such equivalence for a general loss function $L$. However, if $L$ is quadratic as in Eq. \eqref{eq:H} and, moreover, the parameters $\alpha, \mathbf {a,b,c}, D$ are $t$-independent, we can prove an equivalence of \eqref{eq:gen_iter} to multistep iterations
\begin{equation}\label{eq:recur}
    \mathbf w_{t+M+1}=\sum_{m=0}^M p_m\mathbf w_{t+m}+\sum_{m=0}^M q_m\nabla L(\mathbf w_{t+m}),\quad t=0,1,\ldots, 
\end{equation}
where $\sum_{m=0}^M p_m=1$. Such iterations can be viewed as general stationary shift-invariant algorithms linearly depending on 0'th and 1'st order information from previous $M$ steps.  

\begin{ther}[\ref{sec:th:equiv:proof}]\label{th:equiv}
    Suppose that the loss is quadratic as in Eq. \eqref{eq:H}. 
    \begin{enumerate}
        \item Given any $t$-independent parameters $\alpha\in \mathbb R, \mathbf {a,b,c}\in\mathbb R^M, D\in\mathbb R^{M\times M}$, define constants $(p_m)_{m=0}^M, (q_m)_{m=0}^M$ to be the coefficients in the polynomial expansions
        \begin{align}\label{eq:pcoeffpoly}
            \sum_{m=0}^{M}p_m x^m ={}& x^{M+1}-(x-1)\det(x-D),\\  \label{eq:qcoeffpoly}
            \sum_{m=0}^{M}q_m x^m={}&\det \begin{pmatrix}\mathbf a^T\mathbf c-\alpha & \mathbf a^T(1-D)-\mathbf b^T \\ \mathbf c & x-D  \end{pmatrix}.
        \end{align} Then $\sum_{m=0}^{M}p_m=1$ and any solution $(\begin{smallmatrix}\mathbf w_t\\ \mathbf u_t\end{smallmatrix})$ of  matrix dynamics \eqref{eq:gen_iter} obeys  recurrence \eqref{eq:recur}.
        \item Conversely, given constants $(p_m)_{m=0}^M, (q_m)_{m=0}^M$ such that $\sum_{m=0}^{M}p_m=1$, there exist parameters $\alpha\in \mathbb R, \mathbf {a,b,c}\in\mathbb R^M, D\in\mathbb R^{M\times M}$ such that any vector sequence $\mathbf w_t$  generated by recurrence \eqref{eq:recur} satisfies the respective $t$-independent memory-$M$ matrix dynamics \eqref{eq:gen_iter} with suitable auxiliary vectors $\mathbf u_t$. Moreover, we can set $\mathbf a=\mathbf 0$ and $ \mathbf b=(1,0.\ldots,0)^T$.
    \end{enumerate} 
\end{ther}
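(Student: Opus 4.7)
The natural approach is to reduce to a finite-dimensional problem by spectrally decomposing $\mathbf H$. As shown in Eq.~\eqref{eq:gen_iter3}, on each eigenspace with eigenvalue $\lambda$ the pair $(\Delta\mathbf w_t,\mathbf u_t)$ collapses to an $(M{+}1)$-vector of scalar coefficients evolving by the matrix $S_\lambda$. I would then attack Part 1 by applying the Cayley--Hamilton theorem to $S_\lambda$: this forces the top component $\Delta w_{t,\lambda}$ to satisfy a linear recurrence whose coefficients are those of the characteristic polynomial $\chi_\lambda(x)=\det(xI-S_\lambda)$. The entire content of Part 1 is then the polynomial identity
\[
\chi_\lambda(x)=\Bigl(x^{M+1}-\sum_{m=0}^{M}p_m x^m\Bigr)-\lambda\sum_{m=0}^{M}q_m x^m
\]
with $p_m,q_m$ as in Eqs.~\eqref{eq:pcoeffpoly}--\eqref{eq:qcoeffpoly}.

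To prove this identity I would exploit the rank-$1$ split $S_\lambda=A+\lambda\mathbf v\mathbf u^T$ visible in Eq.~\eqref{eq:gen_iter3}, with $A=\bigl(\begin{smallmatrix}1 & \mathbf b^T\\ 0 & D\end{smallmatrix}\bigr)$, $\mathbf v=(-\alpha,\mathbf c^T)^T$, $\mathbf u^T=(1,\mathbf a^T)$. The matrix determinant lemma then gives $\chi_\lambda(x)=\det(xI-A)-\lambda\,\mathbf u^T\operatorname{adj}(xI-A)\mathbf v$. The block-triangular structure of $A$ yields $\det(xI-A)=(x-1)\det(xI-D)$, matching the $p_m$ formula, and a block expression for $\operatorname{adj}(xI-A)$. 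Expanding $\mathbf u^T\operatorname{adj}(xI-A)\mathbf v$ using this block form and simplifying with $(xI-D)\operatorname{adj}(xI-D)=\det(xI-D)\,I$, I expect to recover precisely the Schur-complement expansion of the $(M{+}1)\times(M{+}1)$ determinant in Eq.~\eqref{eq:qcoeffpoly}. Cayley--Hamilton applied in each eigenspace then gives a recurrence for $\Delta w_{t,\lambda}$; recombining eigenspaces converts $\lambda\Delta\mathbf w$ into $\nabla L(\mathbf w)=\mathbf H\Delta\mathbf w$, and $\sum_m p_m=1$ (immediate from $P(1)=0$ for $P(x)=x^{M+1}-\sum p_m x^m$) allows rewriting the recurrence in terms of $\mathbf w_t$ rather than $\Delta\mathbf w_t$.

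For Part 2 I would fix $\mathbf a=\mathbf 0$, $\mathbf b=e_1$ from the outset and explicitly realize any prescribed $(p_m),(q_m)$ with $\sum_m p_m=1$. Because $\sum_m p_m=1$, the polynomial $x^{M+1}-\sum_m p_m x^m$ factors as $(x-1)\widetilde P(x)$ with $\widetilde P$ monic of degree $M$; taking $D$ to be a companion matrix of $\widetilde P$ makes the $p$-equation automatic. For the $q$-equation, Part 1 specializes to $\sum_m q_m x^m=-\alpha\det(xI-D)+e_1^T\operatorname{adj}(xI-D)\mathbf c$; I would set $\alpha=-q_M$ to fix the leading coefficient and then use a short induction exploiting $D^Te_j=e_{j+1}$ for $j<M$ in the companion form to show that the first row of $\operatorname{adj}(xI-D)$ forms a triangular basis of polynomials of degree $\le M-1$, uniquely determining $\mathbf c$. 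Finally, given any $\mathbf w_t$ satisfying \eqref{eq:recur}, I would construct $\mathbf u_0$ so that the matrix dynamics reproduces the initial values $\mathbf w_0,\ldots,\mathbf w_M$; agreement for all $t$ then follows from Part~1 together with uniqueness of solutions of both recurrences. The required solvability, again after spectral decomposition, reduces to invertibility of the ``observability'' submatrix with rows $e_1^T S_\lambda^k$, $k=1,\ldots,M$, which the same companion-form shift argument makes lower triangular with unit diagonal.

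The main obstacle I anticipate is the polynomial identity behind Part 1, i.e.\ showing that the matrix-determinant-lemma expression $-\lambda\,\mathbf u^T\operatorname{adj}(xI-A)\mathbf v$ coincides with the $(M{+}1)\times(M{+}1)$ determinant in Eq.~\eqref{eq:qcoeffpoly}. Everything else is then routine linear algebra: Cayley--Hamilton, companion-matrix bookkeeping, and the backward reconstruction of $\mathbf u_0$.
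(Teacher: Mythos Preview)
Your proposal is correct and follows essentially the same architecture as the paper's proof: Cayley--Hamilton applied to $S_\lambda$ in each eigenspace for Part~1, and for Part~2 the companion-matrix realization of $D$, the choice $\mathbf a=\mathbf 0,\ \mathbf b=e_1$, the determination $\alpha=-q_M$ followed by a triangular solve for $\mathbf c$, and the triangular ``observability'' argument to reconstruct $\mathbf u_0$. The only technical difference is in computing $\chi(x,\lambda)$: where you plan to use the matrix determinant lemma plus a block-adjugate expansion, the paper instead conjugates $S_\lambda$ by $U=\bigl(\begin{smallmatrix}1 & \mathbf a^T\\ 0 & I\end{smallmatrix}\bigr)$, which pushes the rank-one $\lambda$-term entirely into the first column and makes the two-term split $\chi(x,\lambda)=(x-1)\det(x-D)-\lambda\det(\cdots)$ fall out immediately from column multilinearity; this is a slightly cleaner route to the identity you flagged as the main obstacle, but your adjugate computation works just as well.
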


Theorem \ref{th:equiv} along with its proof provide a constructive correspondence between the two forms of iterations. The fact that we can set $\mathbf a=0$ when converting recurrence \eqref{eq:recur} to matrix iterations \eqref{eq:gen_iter} (statement 2) shows that for quadratic losses and stationary dynamics the possibility of shifting the argument in $\nabla L$ with $\mathbf a\ne 0$ does not increase the expressiveness of the algorithm.

The natural non-stationary analog of Theorem \ref{th:equiv} fails to hold in general:

\begin{prop}[\ref{th:equivnonstatproof}]\label{th:equivnonstat}
    There exist a non-stationary dynamics \eqref{eq:gen_iter} with $M=1$ that cannot be written in the form $\mathbf w_{t+2}=p_{t,1}\mathbf w_{t+1}+p_{t,0}\mathbf w_{t}+q_{t,1}\nabla L(\mathbf w_{t+1})+q_{t,0}\nabla L(\mathbf w_{t})$, even for quadratic problems.
\end{prop}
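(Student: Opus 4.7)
The plan is to exhibit an explicit non-stationary memory-1 dynamics whose induced two-step eigenspace recurrence has coefficients that are rational but not affine in the eigenvalue $\lambda$, and to pair it with a Hessian having at least three distinct nonzero eigenvalues. The key observation is that for a quadratic loss any candidate multistep recurrence $\mathbf w_{t+2}=p_{t,1}\mathbf w_{t+1}+p_{t,0}\mathbf w_t+q_{t,1}\nabla L(\mathbf w_{t+1})+q_{t,0}\nabla L(\mathbf w_t)$, projected onto a $\lambda$-eigenspace of $\mathbf H$, becomes $\Delta w_{t+2}=(p_{t,1}+q_{t,1}\lambda)\Delta w_{t+1}+(p_{t,0}+q_{t,0}\lambda)\Delta w_t$. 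Since the scalars $p_{t,m},q_{t,m}$ cannot depend on $\lambda$, the coefficients of $\Delta w_{t+1}$ and $\Delta w_t$ in any such implied two-step relation must be \emph{affine in $\lambda$}, using the same scalars across all eigenvalues.

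First I would project the memory-1 iteration \eqref{eq:gen_iter3} onto a $\lambda$-eigenspace and solve the first component for the auxiliary scalar: $u_t=(\Delta w_{t+1}-(1-\alpha_t\lambda)\Delta w_t)/(b_t-\alpha_t a_t\lambda)$. Plugging this together with the $u$-update into the step-$(t+1)$ relation for $\Delta w_{t+2}$ yields $\Delta w_{t+2}=A_t(\lambda)\Delta w_{t+1}+B_t(\lambda)\Delta w_t$ with $A_t,B_t$ rational in $\lambda$ of denominator $b_t-\alpha_t a_t\lambda$; for $a_t\ne 0$ this denominator is non-constant and generically does not divide the numerator. As a concrete choice, at time $t$ take $\alpha_t=1,a_t=1,b_t=0,c_t=0,D_t=1$, which gives $\Delta w_{t+1}=(1-\lambda)\Delta w_t-\lambda u_t$ and $u_{t+1}=u_t$; at time $t+1$ take $\alpha_{t+1}=a_{t+1}=0,b_{t+1}=1$, which gives $\Delta w_{t+2}=\Delta w_{t+1}+u_{t+1}$. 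Eliminating $u_t$ between these produces
\[\Delta w_{t+2}=\Bigl(1-\tfrac{1}{\lambda}\Bigr)\Delta w_{t+1}+\Bigl(\tfrac{1}{\lambda}-1\Bigr)\Delta w_t.\]

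Finally, if this dynamics admitted a multistep recurrence on a quadratic problem with three distinct nonzero Hessian eigenvalues $\lambda_1,\lambda_2,\lambda_3$, then scalars $p_{t,1},q_{t,1}$ would have to satisfy $p_{t,1}+q_{t,1}\lambda_i=1-1/\lambda_i$ for $i=1,2,3$, equivalently the quadratic $q_{t,1}x^2+(p_{t,1}-1)x+1$ would vanish at all three $\lambda_i$ -- impossible since its constant term is $1\ne 0$. The only substantive worry along the way is that the chosen non-stationary parameters might accidentally cause cancellation between numerator and denominator of $A_t(\lambda)$; this is ruled out by direct inspection of the formula displayed above.
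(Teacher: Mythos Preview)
Your proof is correct and takes a genuinely different route from the paper's.

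Both arguments rest on the same key observation: projecting the candidate multistep recurrence onto a $\lambda$-eigenspace forces the coefficients of $\Delta w_{t+1}$ and $\Delta w_t$ to be affine in $\lambda$, with the same scalars $p_{t,m},q_{t,m}$ across all eigenvalues. Where you differ is in the construction of the obstruction. The paper sets $\alpha_1=0$ (and $\alpha_2\ne 0$, $c_1a_2\ne 0$), which forces $\Delta w_1$ to be constant in $\lambda$ while $\Delta w_2$ has exact $\lambda$-degree~$2$; with initial data taken equal in every eigenspace, the multistep recurrence would then equate a degree-$2$ polynomial in $\lambda$ to a degree-$1$ polynomial at three or more points, which is impossible. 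You instead perform the explicit elimination of $u_t$ and choose parameters so that the implied two-step relation has the rational coefficient $1-1/\lambda$; since $(\Delta w_t,u_t)$ range freely in each eigenspace (your map $(\Delta w_t,u_t)\mapsto(\Delta w_t,\Delta w_{t+1})$ has determinant $-\lambda\ne 0$), the multistep form would force $p_{t,1}+q_{t,1}\lambda_i=1-1/\lambda_i$ at every eigenvalue, hence the nonzero quadratic $q_{t,1}x^2+(p_{t,1}-1)x+1$ would have three roots.

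What each buys: the paper's degree argument is quicker once the specific initial condition (equal components in every eigenspace) is in hand, and it shows the slightly stronger statement that for a \emph{single} generic initial condition no $p,q$ works. Your elimination approach is more structural: it makes explicit that the obstruction is a genuine $1/\lambda$ pole coming from the denominator $b_t-\alpha_t a_t\lambda$, and it avoids the need to pick a special initial condition by instead exploiting the freedom in $(\Delta w_t,u_t)$ to identify coefficients. Both constructions ultimately hinge on taking some $a_s\ne 0$, consistent with Theorem~\ref{th:equiv} showing that in the stationary case one may always set $\mathbf a=\mathbf 0$.
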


\section{Batch SGD with memory}\label{sec:batchSGD}
\paragraph{Evolution of second moments.} SGD with memory is defined similarly to GD with memory \eqref{eq:gen_iter}, but with the batch loss $L_B$ instead of $L.$ 

For quadratic loss functions, we saw the deterministic GD with memory reduced to a simple evolution \eqref{eq:gen_iter2} of the state $(\begin{smallmatrix}\Delta \mathbf w_t\\ \mathbf u_t\end{smallmatrix})$. Such a description is no longer feasible for SGD, but we can describe deterministically the evolution of the second moments $\mathbf{M}_t$ of the state $(\begin{smallmatrix}\Delta \mathbf w_t\\ \mathbf u_t\end{smallmatrix})$
\begin{equation}\label{eq:second_moments}
    \mathbf{M}_t\equiv
    \big(\begin{smallmatrix}
    \mathbf {\mathbf{C}}_t & \mathbf{J}_t\\
    \mathbf{J}_t^T & \mathbf{V}_t
    \end{smallmatrix}\big)
    , \quad \mathbf{C}_t\equiv\mathbb{E}[\Delta\mathbf{w}_t\otimes\Delta\mathbf{w}_t], \quad \mathbf{J}_t\equiv\mathbb{E}[\Delta\mathbf{w}_t\otimes\mathbf{u}^T_t], \quad \mathbf{V}_t\equiv\mathbb{E}[\mathbf{u}_t\otimes\mathbf{u}_t^T].
\end{equation}
The moments $\mathbf C_t$ determine the loss as $L_t=\tfrac{1}{2}\Tr(\mathbf{H}\mathbf C_t)$. A direct computation (see sec.~\ref{sec:moments_evolution}) yields
\begin{align}
    \label{eq:second_moments_dyn0}
    \mathbf M_{t+1} 
     ={}&\bigg[\begin{pmatrix}1 & \mathbf{b}_{t}^T \\ 0 & D_t  \end{pmatrix}+ \mathbf{H} \begin{pmatrix} -\alpha_t\\ \mathbf c_t  \end{pmatrix}(1, \mathbf a_{t}^T)\bigg] \mathbf M_{t}
     \bigg[\begin{pmatrix}1 & \mathbf{b}_{t}^T \\ 0 & D_t  \end{pmatrix}+ \mathbf{H} \begin{pmatrix} -\alpha_t\\ \mathbf c_t  \end{pmatrix}(1, \mathbf a_{t}^T)\bigg]^T \\
     &+\frac{1}{|B|}{\Sigma_\rho}\Big((1, \mathbf a_{t}^T)\mathbf M_t  (1, \mathbf a_{t}^T)^T\Big)\otimes \begin{pmatrix} -\alpha_t\\ \mathbf c_t  \end{pmatrix}\begin{pmatrix} -\alpha_t & \mathbf c_t  \end{pmatrix},\label{eq:mnoiseterm}
\end{align}
where 
\begin{equation}
    \label{eq:stochastic_noise_term}
    {\Sigma_\rho}({\mathbf{C}}) = \mathbb E_{\mathbf x\sim \rho} [\langle \mathbf x,{\mathbf{C}}\mathbf x\rangle \mathbf x\otimes\mathbf x] -\mathbf{H}\mathbf{C}\mathbf{H}.  
\end{equation}
The term in line \eqref{eq:second_moments_dyn0} corresponds to the noiseless dynamics, while the term in line \eqref{eq:mnoiseterm} reflects the stochastic noise vanishing in the limit of batch size $|B|\to\infty$. The map $\Sigma_\rho(\mathbf{C})$ preserves the positive semi-definiteness of $\mathbf{C}$ (see Lemma \ref{lm:se}), so the term in line \eqref{eq:mnoiseterm} is also positive semi-definite, and $L_t$ is always not less than its counterpart for the deterministic GD with memory. 

For general $\rho$, the first term in $\Sigma_{\rho}$ cannot be expressed via $\mathbf H$, thus preventing us from describing the loss evolution in terms of the spectral information $\lambda_k,c_k$. Following \cite{velikanovview}, we overcome this by replacing $\Sigma_{\rho}$ with the \emph{Spectrally Expressible (SE)} approximation
\begin{equation}\label{eq:se} \Sigma_{\mathrm{SE}}^{(\tau_1,\tau_2)}(\mathbf{C})\equiv \tau_1\Tr[\mathbf{H}\mathbf{C}]\mathbf{H}-\tau_2\mathbf{H}\mathbf{C}\mathbf{H},
\end{equation}
with some parameters $\tau_1,\tau_2$. We list basic properties of $\Sigma_{\mathrm{SE}}^{(\tau_1,\tau_2)}$ in Lemma \ref{lm:se}. In particular, $\Sigma_{\mathrm{SE}}^{(\tau_1,\tau_2)}$ preserves semi-definiteness of $\mathbf{C}$ whenever $0\le \tau_1\ge\tau_2$.

We can look at this replacement in two ways. First, \cite{velikanovview} show that, for certain choices of $\tau_1,\tau_2$, the equality   $\Sigma_\rho=\Sigma_{\mathrm{SE}}^{(\tau_1,\tau_2)}$ holds \emph{exactly} in several natural special scenarios (gaussian and translation-invariant data) and  \emph{approximately} for a number of real-world problems such as MNIST classification. Second, generalizing earlier ideas of \cite{bordelon2021learning} and ``uniform kurtosis'' assumptions \citep{varre2022accelerated}, we show that the loss obtained with $\Sigma_{\mathrm{SE}}^{(\tau_1,\tau_2)}$ can serve as an upper or lower bound for the loss obtained with $\Sigma_{\rho}$:  

\begin{prop}[\ref{sec:se}]\label{prop:ubloss}
    Denote by $L_t^{(\rho)}$ and $L_t^{(\tau_1,\tau_2)}$ the losses obtained using the same initial condition $\mathbf M_0\ge 0$ and either the map $\Sigma_{\rho}$ or the map $\Sigma_{\mathrm{SE}}^{(\tau_1,\tau_2)}$ in iterations \eqref{eq:mnoiseterm}, respectively. Suppose that $0\le\tau_1\ge\tau_2$. Then, if $\Sigma_{\rho}(\mathbf C)\le \Sigma_{\mathrm{SE}}^{(\tau_1,\tau_2)}(\mathbf C)$ for all $\mathbf C\ge 0$, then $L_t^{(\rho)}\le L_t^{(\tau_1,\tau_2)}$ for all $t$. Conversely, if $\Sigma_{\rho}(\mathbf C)\ge \Sigma_{\mathrm{SE}}^{(\tau_1,\tau_2)}(\mathbf C)$ for all $\mathbf C\ge 0$, then $L_t^{(\rho)}\ge L_t^{(\tau_1,\tau_2)}$ for all $t$.
\end{prop}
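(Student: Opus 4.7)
The approach is a monotonicity induction in the Loewner (PSD) order on the second-moment matrices $\mathbf{M}_t$, viewed as operators on $\mathcal{H} \otimes \mathbb{R}^{M+1}$. Both sequences $\mathbf{M}_t^{(\rho)}$ and $\mathbf{M}_t^{(\tau_1,\tau_2)}$ start from the same $\mathbf{M}_0 \ge 0$ and differ only in which map ($\Sigma_\rho$ or $\Sigma_{\mathrm{SE}}^{(\tau_1,\tau_2)}$) appears in the noise term \eqref{eq:mnoiseterm}. I would prove by induction on $t$ that $\mathbf{M}_t^{(\rho)} \le \mathbf{M}_t^{(\tau_1,\tau_2)}$ for all $t$; since $\mathbf{C}_t$ is a diagonal block of $\mathbf{M}_t$ and $L_t = \tfrac{1}{2}\Tr[\mathbf{H}\mathbf{C}_t]$ with $\mathbf{H} \ge 0$, the corresponding inequality $L_t^{(\rho)} \le L_t^{(\tau_1,\tau_2)}$ follows immediately from compressing to the block and tracing against $\mathbf{H}$.

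The inductive step rests on three Loewner-monotonicity facts. First, the noiseless transition $\mathbf{M} \mapsto A_t \mathbf{M} A_t^T$, with $A_t$ the bracketed operator of line \eqref{eq:second_moments_dyn0}, is monotone as a conjugation. Second, the reduction $\mathbf{M} \mapsto (1,\mathbf{a}_t^T)\mathbf{M}(1,\mathbf{a}_t^T)^T$ is linear and PSD-preserving, hence monotone. Third, by Lemma \ref{lm:se} the assumption $0 \le \tau_1 \ge \tau_2$ guarantees that $\Sigma_{\mathrm{SE}}^{(\tau_1,\tau_2)}$ is PSD-preserving, which by linearity upgrades to monotonicity: if $\mathbf{C}_1 \le \mathbf{C}_2$ then $\Sigma_{\mathrm{SE}}^{(\tau_1,\tau_2)}(\mathbf{C}_2 - \mathbf{C}_1) \ge 0$. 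Finally, tensoring an operator on $\mathcal{H}$ with the rank-1 PSD matrix $\left(\begin{smallmatrix}-\alpha_t\\ \mathbf{c}_t\end{smallmatrix}\right)(-\alpha_t, \mathbf{c}_t)$ preserves Loewner order.

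Assuming $\mathbf{M}_t^{(\rho)} \le \mathbf{M}_t^{(\tau_1,\tau_2)}$, the deterministic contribution satisfies $A_t \mathbf{M}_t^{(\rho)} A_t^T \le A_t \mathbf{M}_t^{(\tau_1,\tau_2)} A_t^T$. For the noise part, let $\mathbf{C}_\star^{(\rho)} := (1,\mathbf{a}_t^T)\mathbf{M}_t^{(\rho)}(1,\mathbf{a}_t^T)^T$ and define $\mathbf{C}_\star^{(\tau)}$ analogously; both are PSD and $\mathbf{C}_\star^{(\rho)} \le \mathbf{C}_\star^{(\tau)}$. Chaining the hypothesis $\Sigma_\rho(\mathbf{C}_\star^{(\rho)}) \le \Sigma_{\mathrm{SE}}^{(\tau_1,\tau_2)}(\mathbf{C}_\star^{(\rho)})$ (applicable because $\mathbf{C}_\star^{(\rho)} \ge 0$) with the monotonicity inequality $\Sigma_{\mathrm{SE}}^{(\tau_1,\tau_2)}(\mathbf{C}_\star^{(\rho)}) \le \Sigma_{\mathrm{SE}}^{(\tau_1,\tau_2)}(\mathbf{C}_\star^{(\tau)})$, then tensoring with the rank-1 PSD matrix above, yields the noise-term inequality; adding it to the deterministic one closes the induction. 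The reverse direction ($\Sigma_\rho \ge \Sigma_{\mathrm{SE}}^{(\tau_1,\tau_2)}$ on PSD inputs implies $L_t^{(\rho)} \ge L_t^{(\tau_1,\tau_2)}$) is completely symmetric. The only real obstacle is verifying that the monotonicity of $\Sigma_{\mathrm{SE}}^{(\tau_1,\tau_2)}$ on PSD operators does follow from its PSD-preservation in Lemma \ref{lm:se} under the condition $0 \le \tau_1 \ge \tau_2$, and that each quadratic-form argument $\mathbf{C}_\star^{(\rho)}$ encountered in the induction is indeed PSD — both of which reduce to $\mathbf{M}_t^{(\rho)} \ge 0$, preserved automatically by the iteration since the initial condition is PSD and the noise term is PSD.
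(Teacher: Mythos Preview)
Your proposal is correct and follows essentially the same approach as the paper's proof: an induction on $t$ maintaining the Loewner ordering $0\le \mathbf{M}_t^{(\rho)}\le \mathbf{M}_t^{(\tau_1,\tau_2)}$, using conjugation monotonicity for the deterministic term \eqref{eq:second_moments_dyn0} and the chain $\Sigma_\rho(\mathbf{C}_\star^{(\rho)})\le \Sigma_{\mathrm{SE}}^{(\tau_1,\tau_2)}(\mathbf{C}_\star^{(\rho)})\le \Sigma_{\mathrm{SE}}^{(\tau_1,\tau_2)}(\mathbf{C}_\star^{(\tau)})$ (via Lemma~\ref{lm:se}, parts 3 and 6) for the noise term \eqref{eq:mnoiseterm}. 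The paper carries the nonnegativity $\mathbf{M}_t^{(\rho)}\ge 0$ explicitly inside the induction hypothesis rather than as a separate remark, but this is only a presentational difference.
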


\paragraph{Loss expansion.} The key role in the subsequent analysis is played by an expansion of $L_t$ in terms of scalar quantities that we call \emph{propagators}. To simplify the exposition, we assume now and in the next section the stationary setting (see Sec. \ref{sec:lossexpproof} for the non-stationary generalization). Consider the linear operators $A_{\lambda}:\mathbb R^{(1+M)\times(1+M)}\to\mathbb R^{(1+M)\times(1+M)}$ associated with the eigenvalues $\lambda$ of $\mathbf H$:
\begin{equation}
    A_{\lambda}Z=S_{\lambda}
    Z
    S_{\lambda}^T
    -\tfrac{\tau_2}{|B|} \lambda^2(\begin{smallmatrix}
    -\alpha   \\
    \mathbf c 
    \end{smallmatrix})
    (\begin{smallmatrix}
    1\\\mathbf a
    \end{smallmatrix})^T
    Z
    (\begin{smallmatrix}
    1\\\mathbf a
    \end{smallmatrix})
    (\begin{smallmatrix}
    -\alpha   \\
    \mathbf c 
    \end{smallmatrix})^T, \label{eq:atl0}
\end{equation}
where $S_\lambda$ are the noiseless eigencomponent evolution matrices defined in Eq. \eqref{eq:gen_iter3}. The operators $A_{\lambda}$ describe the $\mathbf M$ matrix iterations \eqref{eq:second_moments_dyn0}--\eqref{eq:mnoiseterm} in their diagonal eigencomponents w.r.t. $\mathbf H$. If the SE parameter $\tau_2=0$, then $A_{\lambda}$ simply implements the adjoint action of $S$ on matrices, $A_\lambda=S_\lambda\otimes S_\lambda$, but in general $A_\lambda$ also includes the effect of the $\tau_2\mathbf{HCH}$ term in the SE noise \eqref{eq:se}.

Enumerate the eigenvalues of $\mathbf H$ as $\lambda_k$, and let $c_k^2=\langle \mathbf e_k,\mathbf w_*\rangle^2$ be the respective amplitudes of the target $\mathbf w^*$. We define the \emph{signal propagators} $V_t,V_t'$ by
\begin{equation}\label{eq:vtvt'}
    V_t=\sum_k \lambda_kc_k^2 \langle  (\begin{smallmatrix}
    1&0\\0&0
    \end{smallmatrix}), A^{t-1}_{\lambda_k} (\begin{smallmatrix}
    1&0\\0&0
    \end{smallmatrix})\rangle,\quad
    V'_t=\sum_k \lambda_kc_k^2 \langle  (\begin{smallmatrix}
    1\\\mathbf a
    \end{smallmatrix})(\begin{smallmatrix}
    1\\\mathbf a
    \end{smallmatrix})^T, A^{t-1}_{\lambda_k} (\begin{smallmatrix}
    1&0\\0&0
    \end{smallmatrix})\rangle,
\end{equation}
where $\langle A,B\rangle\equiv \Tr(AB^*)$ is the standard inner product between matrices. Note that the difference between $V_t$ and $V_t'$ is only in taking this inner product either with $(\begin{smallmatrix} 1&0\\0&0\end{smallmatrix})$ or  with$(\begin{smallmatrix}
    1\\\mathbf a
    \end{smallmatrix})(\begin{smallmatrix}
    1\\\mathbf a
    \end{smallmatrix})^T$.

We also define \emph{noise propagators} $U_t, U_t'$ (with a similar difference between them) by 
\begin{equation}\label{eq:utut'}
    U_t= \tfrac{\tau_1}{|B|}\sum_{k}\lambda_k^2 \langle   (\begin{smallmatrix}
    1&0\\0&0
    \end{smallmatrix}), A^{t-1}_{\lambda_k} (\begin{smallmatrix}
    -\alpha \\ \mathbf{c}  
    \end{smallmatrix})(\begin{smallmatrix}
    -\alpha \\ \mathbf{c}  
    \end{smallmatrix})^T\rangle,\quad
    U_t'= \tfrac{\tau_1}{|B|}\sum_{k}\lambda_k^2 \langle   (\begin{smallmatrix}
    1\\\mathbf a
    \end{smallmatrix})(\begin{smallmatrix}
    1\\\mathbf a
    \end{smallmatrix})^T, A^{t-1}_{\lambda_k} (\begin{smallmatrix}
    -\alpha \\ \mathbf{c}  
    \end{smallmatrix})(\begin{smallmatrix}
    -\alpha \\ \mathbf{c}  
    \end{smallmatrix})^T\rangle.
\end{equation}

\vspace{-3mm}
\begin{ther}[\ref{sec:lossexpproof}]\label{th:lossexp} Assuming $\mathbf w_0=0, \mathbf u_0=0$, for any $t=0,1,\ldots$ 
\begin{equation}\label{eq:lossexp}
    L_t =\frac{1}{2}\Big(V_{t+1}+\sum_{m=1}^t\sum_{0<t_1<\ldots<t_m<t+1} U_{t+1-t_m} U'_{t_m-t_{m-1}} U'_{t_{m-1}- t_{m-2}}\cdots U'_{t_{2}-t_{1}}V'_{t_1}\Big).
\end{equation}    
\end{ther}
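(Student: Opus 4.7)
The plan is to spectrally decompose $\mathbf{M}_t$ in the eigenbasis $\{\mathbf{e}_k\}$ of $\mathbf{H}$ and exploit the very specific structure of the SE noise. Let $m_{jk,t}\in\mathbb{R}^{(M+1)\times(M+1)}$ denote the second-moment matrix of the $\mathbf{e}_j,\mathbf{e}_k$-projections of $(\Delta\mathbf{w}_t,\mathbf{u}_t^T)^T$; then $L_t=\tfrac{1}{2}\sum_k \lambda_k(m_{kk,t})_{11}$, so only the ``diagonal'' eigenblocks enter the loss. The noiseless dynamics conjugates $m_{jk,t}$ by $S_{\lambda_j}$ and $S_{\lambda_k}^T$. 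Of the SE noise, the $-\tau_2\mathbf{H}\mathbf{C}\mathbf{H}$ piece preserves the eigenblock structure and on diagonal blocks combines with the noiseless part into exactly the operator $A_{\lambda_k}$ of Eq.~\eqref{eq:atl0}. The $\tau_1\Tr[\mathbf{H}\mathbf{C}]\mathbf{H}$ piece is rank-one in $\mathcal{H}\otimes\mathcal{H}$ and injects into every diagonal block the same scalar source $R_t:=\sum_l\lambda_l(1,\mathbf{a}^T)m_{ll,t}(1,\mathbf{a}^T)^T$.

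This yields the closed recursion
\begin{equation*}
  m_{kk,t+1}=A_{\lambda_k}\,m_{kk,t}+\tfrac{\tau_1\lambda_k}{|B|}\,R_t\,\mathbf{v}\mathbf{v}^T,\qquad \mathbf{v}=(-\alpha,\mathbf{c}^T)^T,
\end{equation*}
with initial condition $m_{kk,0}=c_k^2\,E$ where $E=\bigl(\begin{smallmatrix}1&0\\0&0\end{smallmatrix}\bigr)$ (from $\Delta\mathbf{w}_0=-\mathbf{w}_*,\mathbf{u}_0=0$, both deterministic). Unrolling, pairing with $E$ under the Frobenius inner product, and summing over $k$ with weight $\lambda_k$, the homogeneous part assembles into $V_{t+1}$ (directly from \eqref{eq:vtvt'}) and the source part into $\sum_{s=0}^{t-1}U_{t-s}R_s$ (from \eqref{eq:utut'}), giving $L_t=\tfrac{1}{2}V_{t+1}+\tfrac{1}{2}\sum_{s=0}^{t-1}U_{t-s}R_s$. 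A parallel computation, pairing instead with $P=(1,\mathbf{a}^T)^T(1,\mathbf{a}^T)$ and weighting by $\lambda_l$, yields the self-similar sub-recurrence $R_s=V'_{s+1}+\sum_{s'=0}^{s-1}U'_{s-s'}R_{s'}$.

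Iterating this convolution recurrence expresses $R_s$ as a sum over all strictly increasing integer sequences $0<t_1<\cdots<t_m\le s$ of $U'_{s+1-t_m}\cdots U'_{t_2-t_1}V'_{t_1}$, with $V'_{s+1}$ as the empty ($m=0$) case. Substituting this into the expression for $L_t$ and renaming $t_{m+1}:=s+1$ as the index of the single $U$-factor produces \eqref{eq:lossexp} verbatim. The main obstacle I anticipate is the bookkeeping: cleanly separating the $\tau_2$-contribution (absorbed into $A_\lambda$ and hence into the propagators themselves) from the $\tau_1$-contribution (responsible for the rank-one source that generates the combinatorial sum), and tracking index shifts between iteration times and propagator lengths so that the final index ranges in \eqref{eq:lossexp} are recovered exactly. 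Once the two recurrences are set up, their iteration is routine.
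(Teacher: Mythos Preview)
Your proposal is correct and follows essentially the same approach as the paper: restrict to $\mathbf{H}$-diagonal blocks, split the evolution into the block-preserving operator $A_\lambda$ (absorbing the $\tau_2$ term) and the rank-one $\tau_1$-coupling, and exploit the rank-one structure to factor everything into scalar propagators. The paper organizes this as a direct expansion of $\prod_s(A_s+P_s)$ over all configurations of $A$ versus the rank-one $P$, whereas you frame it as a Duhamel-type unrolling of the diagonal-block recursion together with a self-consistent Volterra equation for the source $R_s$; the underlying combinatorics and the resulting index bookkeeping are identical.
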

This expansion results from considering all the configurations in which the iterations are divided into those where we apply the operators $A_\lambda$, and into remaining  $t_1,\ldots,t_m$ where we apply the first term $\tau_1\Tr[\mathbf{H}\mathbf{C}]\mathbf{H}$ of the SE noise \eqref{eq:se}. In the former iterations the states evolve independently in the eigenspaces of $\mathbf H$, whereas the latter act as rank-1 operators aggregating the states across all eigenspaces. Thanks to the rank-1 property, the loss of the configuration is factored as the product of scalar signal- and noise propagators that describe the eigenspace evolutions taken over iteration intervals $(t_k, t_{k+1})$ and aggregated at their ends. Each product includes one signal propagator associated with the evolution of the initial residual $\Delta \mathbf w_0=-\mathbf w_*$, and some number of noise propagators associated with the evolution of the noise injected at $t_1,\ldots,t_m$. The propagators $U, V$  end at the loss evaluation time $t$, while the propagators $U',V'$ end at one of the noise injection times $t_1,\ldots,t_m$. Note that if $\mathbf a=0$, then $U_t$ can be identified with $U_t'$ and $V_t$ with $V_t'$.

\section{Stability and convergence of stationary memory-$M$ SGD}
\paragraph{Preliminaries.}
We will now use loss expansion \eqref{eq:lossexp} to establish various general stability and convergence properties of stationary SGD with memory. It will be convenient to assume henceforth that all the propagators $U'_{t}, U_{t}, V'_t, V_t$ are nonnegative. A particular sufficient condition for this is $\tau_1\ge 0,\tau_2\le 0,$ since for $\tau_2\le 0$ the maps $A_\lambda$  preserve positive semi-definiteness (see Eq. \eqref{eq:atl0}).
\begin{prop}
    If $\tau_1\ge 0$ and $\tau_2\le 0$, then all $U'_{t}, U_{t}, V'_t, V_t\ge 0.$ 
\end{prop}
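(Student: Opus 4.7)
The plan is to show that each propagator is a sum, over eigenvalues $\lambda_k$ of $\mathbf H$, of nonnegative scalar prefactors times Hilbert--Schmidt inner products $\langle A, B\rangle = \Tr(AB)$ of two PSD matrices. Since for PSD $A,B$ one has $\Tr(AB) = \Tr(A^{1/2}BA^{1/2})\ge 0$, each summand is nonnegative and so is the total.

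The key step is to verify that the operator $A_\lambda$ defined in \eqref{eq:atl0} preserves the cone of PSD matrices when $\tau_2\le 0$. The two terms in $A_\lambda Z$ are $S_\lambda Z S_\lambda^T$, which is PSD-preserving as a conjugation, and $-\tfrac{\tau_2}{|B|}\lambda^2\bigl[(\begin{smallmatrix}1\\\mathbf a\end{smallmatrix})^T Z(\begin{smallmatrix}1\\\mathbf a\end{smallmatrix})\bigr](\begin{smallmatrix}-\alpha\\\mathbf c\end{smallmatrix})(\begin{smallmatrix}-\alpha\\\mathbf c\end{smallmatrix})^T$, in which the bracketed quadratic form is nonnegative whenever $Z\ge 0$, the scalar coefficient $-\tau_2\lambda^2/|B|$ is nonnegative under $\tau_2\le 0$, and $(\begin{smallmatrix}-\alpha\\\mathbf c\end{smallmatrix})(\begin{smallmatrix}-\alpha\\\mathbf c\end{smallmatrix})^T$ is a rank-one PSD matrix. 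Hence $A_\lambda Z\ge 0$, and iterating gives that $A_\lambda^{t-1}$ also preserves the PSD cone.

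Now every matrix appearing in a ``slot'' of the inner products in \eqref{eq:vtvt'} and \eqref{eq:utut'} --- namely $(\begin{smallmatrix}1&0\\0&0\end{smallmatrix})$, $(\begin{smallmatrix}1\\\mathbf a\end{smallmatrix})(\begin{smallmatrix}1\\\mathbf a\end{smallmatrix})^T$, and $(\begin{smallmatrix}-\alpha\\\mathbf c\end{smallmatrix})(\begin{smallmatrix}-\alpha\\\mathbf c\end{smallmatrix})^T$ --- is rank-one PSD. Combining with the previous paragraph, each $A_{\lambda_k}^{t-1}$ sends the ``right'' slot to a PSD matrix, and its inner product with the ``left'' (also PSD) slot is nonnegative. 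The remaining scalar prefactors $\lambda_k c_k^2$ (in $V_t, V_t'$) and $\tau_1\lambda_k^2/|B|$ (in $U_t, U_t'$) are also nonnegative, the latter by $\tau_1\ge 0$. Summing over $k$ then yields the claim.

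Since the argument is essentially a sign check after unpacking the definitions, there is no real obstacle: the proposition is a direct structural consequence of $A_\lambda$ preserving the PSD cone under $\tau_2\le 0$, combined with the PSD character of the slot matrices in \eqref{eq:vtvt'}--\eqref{eq:utut'}.
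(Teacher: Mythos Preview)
Your proof is correct and takes essentially the same approach as the paper: the paper's justification is precisely that for $\tau_2\le 0$ the maps $A_\lambda$ preserve positive semi-definiteness (as stated right before the proposition), and then the propagators are nonnegative because they are inner products of PSD matrices with nonnegative scalar prefactors. You have simply filled in the details the paper leaves implicit.
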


\paragraph{Immediate divergence.} Recall that $\dim\mathcal H=\infty$ and so the Hessian $\mathbf{H}$ has infinitely many eigenvalues. In this case, even single propagators $U'_{t}, U_{t}, V'_t, V_t$ may diverge. Note that $U'_{1}=\tfrac{\tau_1}{|B|}(\mathbf a^T\mathbf c-\alpha)^2\sum_{k}\lambda_k^2$ and $V'_{1}=\sum_{k}\lambda_k c_k^2$. Accordingly, necessary conditions to avoid divergence in one step for generic $\mathbf a, \mathbf c,\alpha$ are $\sum_{k}\lambda_k^2<\infty$ and $\sum_{k}\lambda_k c_k^2<\infty$. These conditions are clearly also sufficient to ensure $U_t,U_t',V_t,V_t'<\infty$ for all $t$. Under power law spectral assumptions \eqref{eq:powlaws} we automatically have $\sum_{k}\lambda_k c_k^2<\infty$, while $\sum_{k}\lambda_k^2<\infty$ is fulfilled whenever $\nu>\tfrac{1}{2}.$ 

\paragraph{Convergence of the propagator expansion.} We analyze now loss expansion  \eqref{eq:lossexp} and relate convergence of $L_t$ to properties of the propagators. Introduce the sum $U_\Sigma=\sum_{t=1}^\infty U_t$ and analogously the sums $U'_\Sigma, V_\Sigma, V'_\Sigma$ of $U'_t, V_t, V'_t$. To simplify exposition, we state the next theorem for the case when $U_t=U_t'$ and $V_t=V_t'$ for all $t$, which holds if $\mathbf a=0$. See Section \ref{sec:genutut'} for the general version.   

\begin{ther}[\ref{sec:lexpproof}]\label{th:lexp} Let numbers $L_t$ be given by expansion \eqref{eq:lossexp} with some $U_t=U_t'\ge 0,V_t=V_t'\ge 0.$
\begin{enumerate}
    \item\textbf{[Convergence]} Suppose that $U_\Sigma<1$. At $t\to\infty,$ if $V_t=O(1)$ (respectively, $V_t=o(1)$), then also $L_t=O(1)$ (respectively, $L_t=o(1)$).
    \item \textbf{[Divergence]} If $U_\Sigma>1$ and 
    $V_t>0$ for at least one $t$, 
    then $\sup_{t=1,2,\ldots}L_t=\infty.$
    \item \textbf{[Signal-dominated regime]} Suppose that there exist constants $\xi_V,C_V>0$ such that $V_t=C_Vt^{-\xi_V}(1+o(1))$ as $t\to \infty$. Suppose also that $U_\Sigma<1$ and $U_t=O(t^{-\xi_U})$ with some $\xi_U>\max(\xi_V, 1).$ Then 
    \begin{equation}\label{eq:ltsignal}
        L_t=\frac{C_V}{2(1-U_\Sigma)}t^{-\xi_V}(1+o(1)).
    \end{equation}
    \item \textbf{[Noise-dominated regime]} Suppose that there exist constants $\xi_V>\xi_U>1, C_U>0$ such that $U_t=C_Ut^{-\xi_U}(1+o(1))$ and $V_t=O(t^{-\xi_V})$ as $t\to \infty$. Suppose also that $U_\Sigma<1$. Then 
    \begin{equation}\label{eq:ltnoise}
        L_t=\frac{V_\Sigma C_U}{2(1- U_\Sigma)^2}t^{-\xi_U}(1+o(1)).
    \end{equation}    
\end{enumerate}
\end{ther}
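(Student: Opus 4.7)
The plan is to rewrite the expansion as a single discrete convolution and then apply convolution/renewal asymptotics. Setting $\tau_0 = t_1$ and $\tau_i = t_{i+1}-t_i$ (with $t_{m+1}\equiv t+1$), and using $U = U'$, $V = V'$, I can collapse \eqref{eq:lossexp} into
\[
2 L_t = \sum_{k=1}^{t+1} V_k\, R(t+1-k), \qquad R(S) \equiv \sum_{m\geq 0} \sum_{\substack{\tau_1+\cdots+\tau_m = S \\ \tau_i\geq 1}} \prod_i U_{\tau_i},
\]
with $R(0)=1$ from the empty product. Exchanging the two outer sums gives the key identity $\sum_{S\geq 0} R(S) = \sum_{m\geq 0} U_\Sigma^m = (1-U_\Sigma)^{-1}$ when $U_\Sigma<1$; in particular $R$ is summable and $R(S)\to 0$.

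For Part 1, a uniform bound $V_t\leq K$ yields $2L_t\leq K/(1-U_\Sigma)$; the $o(1)$ version follows by splitting at a cutoff $K_\epsilon$ where $V_k<\epsilon$, bounding the tail by $\epsilon/(1-U_\Sigma)$ and noting that the (finite) head tends to zero since $R(t+1-k)\to 0$ for each fixed $k$. For Part 2, I use the generating-function identity $2\sum_{t\geq 0} L_t\, z^{t+1} = \widehat V(z)/(1-\widehat U(z))$. Since $\widehat U$ is continuous and nondecreasing on $[0,1]$ with $\widehat U(0)=0$ and $\lim_{z\to 1^-}\widehat U(z)=U_\Sigma>1$, there exists $z^*\in(0,1)$ with $\widehat U(z^*)=1$; together with $\widehat V(z^*)>0$ (from $V_{t_0}>0$) this makes the right-hand side blow up at $z^*$, which is incompatible with a bounded $L_t$.

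For Parts 3 and 4 I first transfer the decay of $U$ to $R$ via the renewal identity $R(S)=\mathbf{1}_{S=0}+\sum_{\tau=1}^S U_\tau R(S-\tau)$. Splitting at $\tau=S/2$ and inducting, $U_t=O(t^{-\xi_U})$ propagates to $R(S)=O(S^{-\xi_U})$; sharpening the same argument under $U_t\sim C_U t^{-\xi_U}$ gives $R(S)\sim C_U(1-U_\Sigma)^{-2}\, S^{-\xi_U}$. Then I analyze $\sum_k V_k R(t+1-k)$ by splitting at $k=(1-\delta)t$ for small $\delta>0$. In Part 3, substituting $j=t+1-k$ on the head $\{k>(1-\delta)t\}$ puts the slowly-decaying $V$ at indices near $t$; the ratio $V_{t+1-j}/V_t$ is bounded on this range and converges pointwise to $1$, so dominated convergence gives head $\sim V_t\sum_j R(j)=V_t/(1-U_\Sigma)$ on letting $t\to\infty$ and then $\delta\to 0$. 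The tail is bounded by $R((1-\delta)t)\cdot\sum_{k\leq(1-\delta)t} V_k$, which is $o(t^{-\xi_V})$ exactly under $\xi_U>\max(\xi_V,1)$ (handling separately $\xi_V>1$, $\xi_V=1$, $\xi_V<1$). Part 4 is the mirror image: now $R$ is the slowly-decaying factor and $V_\Sigma<\infty$ (since $\xi_V>\xi_U>1$) makes $V$ summable; splitting at $S=(1-\delta)t$ and repeating the dominated-convergence argument yields $L_t\sim\tfrac{1}{2}V_\Sigma R(t)\sim V_\Sigma C_U t^{-\xi_U}/(2(1-U_\Sigma)^2)$.

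The main obstacle is the convolution asymptotic: the ratio $V_{t+1-j}/V_t$ is not uniformly bounded on the full range $j\in\{0,\ldots,t\}$, so one cannot apply dominated convergence globally. A split at the adaptive scale $(1-\delta)t$ (rather than $t/2$ or at a fixed constant) is essential to obtain both the exact constant $1/(1-U_\Sigma)$ in the head and an error that is genuinely $o(t^{-\xi_V})$ under the sharp hypothesis $\xi_U>\max(\xi_V,1)$; the bootstrap from the $O$-bound and the $\sim$-asymptotic on $U$ to the corresponding statements on $R$ also requires the inductive split.
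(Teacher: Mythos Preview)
Your proposal is correct and is organized around the same resolvent representation $2L_t=\sum_k V_k R(t+1-k)$ with $R=(1-\mathbb U)^{-1}$ that the paper uses (their Eq.~\eqref{eq:lt1uvts} and Lemma~\ref{lm:utpow} are exactly your convolution identity and your bound $R(S)=O(S^{-\xi_U})$). Parts 1 and 3 are essentially the paper's arguments, only with your single cutoff $(1-\delta)t$ replacing their sequence $k_t\to\infty,\ k_t=o(t)$; both work for the same reason.

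The two places where you genuinely diverge from the paper are Parts 2 and 4. For Part 2 the paper gives a direct combinatorial lower bound: truncate $U$ to a finite sequence $A$ with $\sum A=a>1$, observe $L_{t+s}\ge (A^{*k})_t V_{s+1}$, and use $\sum_t(A^{*k})_t=a^k\to\infty$ on a window of length $kt_0$. Your generating-function argument (a zero of $1-\widehat U$ inside $(0,1)$ forces $\widehat L(z)\to\infty$ there, contradicting $\sup_t L_t<\infty$) is shorter and arguably more conceptual, though the paper's bound gives an explicit exponential growth rate of partial sums. For Part 4 the paper works directly with the loss expansion: it isolates one long $U$-factor using an intermediate cutoff $T_1(T)=T^{(\xi_U+\xi_V)/(2\xi_V)}$, replaces it by $U_T$, and extends the remaining short sums to infinity. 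You instead first upgrade the $O$-bound on $R$ to the sharp asymptotic $R(S)\sim C_U(1-U_\Sigma)^{-2}S^{-\xi_U}$ via the renewal equation, and then run the mirror of Part~3. Your route packages the two factors of $(1-U_\Sigma)^{-1}$ into a single renewal lemma and makes the symmetry between the two regimes transparent; the paper's route avoids proving that separate lemma. One small sloppiness: in the Part~3 tail you write ``$R((1-\delta)t)$'' as if $R$ were monotone --- what you actually use is the bound $R(t+1-k)\le C(\delta t)^{-\xi_U}$ coming from $R(S)=O(S^{-\xi_U})$.
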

The first two statement characterize convergence in general, while the last two describe the regimes occurring under power-law propagator decay assumptions. Note that in the non-stochastic GD setting the noise propagators $U_t$ vanish so that $L_t=\tfrac{1}{2}V_{t+1}$. Then, statements 1 and 2  say that convergence of SGD with memory is equivalent to that of the respective non-stochastic GD  plus the condition $U_\Sigma<1$. In the general case of $\mathbf a\ne 0$, when $U_\Sigma\ne U'_\Sigma$, the condition is $U'_\Sigma<1$ (Sec. \ref{sec:genutut'}).

Formulas \eqref{eq:ltsignal}, \eqref{eq:ltnoise} have a simple meaning (see Fig. \ref{fig:leadterms}, right). In the signal-dominated case the leading contribution to $L_t$ in expansion \eqref{eq:lossexp} comes from terms having one long signal propagator and several short noise propagators. The factor $C_V t^{-\xi_V}$ in Eq. \eqref{eq:ltsignal} comes from the long signal propagator, while the coefficient $(1-U_\Sigma)^{-1}$ is the accumulated contribution of the noise propagator products, summed over all possible partitions. In the noise-dominated case, the leading terms have one long noise propagator contributing the factor $C_U t^{-\xi_U}$ in Eq. \eqref{eq:ltnoise}. The  coefficient $\tfrac{V_\Sigma}{(1-U_\Sigma)^2}$ is the accumulated effect of remaining short propagators before and after the long noise propagator.

\paragraph{Stability of the maps $A_\lambda$.}
Now we start analyzing the particular form \eqref{eq:vtvt'}, \eqref{eq:utut'} of our propagators $U_t, U_t', V_t, V_t'$. Note first that the linear matrix maps $A=(A_{\lambda})$ defined in Eq. \eqref{eq:atl0} may not be stable in the sense of having eigenvalues larger than 1 in absolute value. As a result, this would generally lead to an exponential growth of the propagators and hence an eventual divergence of $L_t$. 

At $\lambda=0,$ $A_{\lambda=0}$ acts as the tensor square $S_{\lambda=0}\otimes S_{\lambda=0}$, and its eigenvalues are the products $\mu_1\mu_2$, where $\mu_1,\mu_2$ are the eigenvalues of $S_{\lambda=0}$. The eigenvalues of $S_{\lambda=0}$ are just 1 and the eigenvalues of $D$. We  will assume henceforth that all the eigenvalues of $D$ are strictly less than 1 in absolute value. We will refer to matrices with this property as \emph{strictly stable}. When $D$ is strictly stable, $A_{\lambda=0}$ has the simple leading eigenvalue 1 with the eigenvector $\mathbf M_0=(\begin{smallmatrix}
 1 & 0\\
 0 & 0
\end{smallmatrix}).$ All the other eigenvalues of $A_{\lambda=0}$ are less than 1 in absolute value. This condition generalizes the standard condition $|\beta|<1$ for the momentum \citep{tugay1989properties, roy1990analysis}. 

As $\lambda$ increases, $A_{\lambda}$ acquires a more complicated form, and its eigenvalues continuously deform from the eigenvalues of $A_{\lambda=0}$. 
It is hard to give an exact condition of stability for general $\lambda$ and $\tfrac{\tau_2}{|B|}$, but we prove the following:
\begin{ther}[\ref{th:stabproof}]\label{th:stab} Suppose that $D$ is strictly stable. Define effective learning rate $\alpha_{\mathrm{eff}}$ by 
\begin{equation}\label{eq:alphaeff}
    \alpha_{\mathrm{eff}}=\alpha-\mathbf b^T(1-D)^{-1}\mathbf c=\frac{\sum_{m=0}^M q_m}{\sum_{m=0}^M mp_m-M-1}.
\end{equation}
Then for sufficiently small $\lambda$ the leading eigenvalue of the linear maps $A_\lambda$ is $\mu_{A,\lambda}=1-2\alpha_{\mathrm{eff}}\lambda+O(\lambda^2).$ If $\alpha_{\mathrm{eff}}>0$, then for sufficiently small $\lambda>0$ the linear maps $A_\lambda$ are strictly stable.   
\end{ther}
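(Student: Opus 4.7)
The strategy is standard first-order eigenvalue perturbation theory applied to $A_\lambda$, regarded as an analytic family in $\lambda$. Writing $S_\lambda=S_0+\lambda T$ with $T=(\begin{smallmatrix}-\alpha\\\mathbf c\end{smallmatrix})(1,\mathbf a^T)$, I would expand
\[
A_\lambda(Z)=\underbrace{S_0 Z S_0^T}_{A_0(Z)} + \lambda\underbrace{\bigl(S_0 Z T^T + T Z S_0^T\bigr)}_{B_1(Z)} + \lambda^2 B_2(Z),
\]
where $B_2$ absorbs $T Z T^T$ together with the $\tfrac{\tau_2}{|B|}$ noise piece from \eqref{eq:atl0}. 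Crucially, the SE noise contribution is already $O(\lambda^2)$, so it cannot influence the first-order correction to the leading eigenvalue.

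I would then diagonalize $A_0$. Since $A_0$ acts by two-sided multiplication by $S_0$, its eigenvalues are products $\mu_i\mu_j$ of eigenvalues of the block-triangular $S_0=(\begin{smallmatrix}1 & \mathbf b^T\\0 & D\end{smallmatrix})$, whose spectrum is $\{1\}\cup\mathrm{spec}(D)$. Strict stability of $D$ thus makes the eigenvalue $1$ of $A_0$ simple and places every other eigenvalue strictly inside the open unit disk. A direct check gives the right $1$-eigenvector $\mathbf M_0=(\begin{smallmatrix}1 & 0\\0 & 0\end{smallmatrix})$ and the trace-dual (left) $1$-eigenvector $W_0=yy^T$, where $y=(\begin{smallmatrix}1\\(1-D^T)^{-1}\mathbf b\end{smallmatrix})$ solves $S_0^T y=y$; the normalization $\langle W_0,\mathbf M_0\rangle=(y^T e_1)^2=1$ is automatic.

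The first-order perturbation formula for a simple eigenvalue of a (possibly non-normal) operator gives $\mu_{A,\lambda}=1+\lambda\langle W_0,B_1(\mathbf M_0)\rangle + O(\lambda^2)$. Using $S_0 e_1=e_1$, a short calculation produces
\[
B_1(\mathbf M_0)=S_0\mathbf M_0 T^T + T\mathbf M_0 S_0^T=\begin{pmatrix}-2\alpha & \mathbf c^T\\ \mathbf c & 0\end{pmatrix},
\]
so $\langle W_0,B_1(\mathbf M_0)\rangle = y^T B_1(\mathbf M_0) y = -2\alpha + 2\,\mathbf b^T(1-D)^{-1}\mathbf c = -2\alpha_{\mathrm{eff}}$, which is the claimed expansion. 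To verify the equivalence with the polynomial expression in \eqref{eq:alphaeff}, I would substitute $x=1$ into \eqref{eq:pcoeffpoly} and its derivative (yielding $\sum p_m=1$ and $\sum mp_m-(M+1)=-\det(I-D)$), and into \eqref{eq:qcoeffpoly}, expanding the block determinant via the Schur complement of the invertible bottom-right block $(1-D)$ to obtain $\sum q_m=-\det(I-D)\,\alpha_{\mathrm{eff}}$; the ratio then matches.

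For the final claim, strict stability of $A_\lambda$ for small $\lambda>0$ with $\alpha_{\mathrm{eff}}>0$ is immediate from continuity: the leading eigenvalue becomes $1-2\alpha_{\mathrm{eff}}\lambda+O(\lambda^2)<1$, and the remaining eigenvalues of $A_0$ sit strictly inside the unit disk, so upper semi-continuity of the spectrum under the analytic perturbation keeps them inside for small enough $\lambda$. The main obstacle is really just bookkeeping in the $(M+1)\times(M+1)$ block structure, namely identifying $y$ correctly so that the first-order inner product collapses to $-2\alpha_{\mathrm{eff}}$; once that is in place, the rest is a direct verification.
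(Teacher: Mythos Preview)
Your argument is correct. Both your proof and the paper's use first-order perturbation of the simple eigenvalue $1$ of $A_0$, and both note that the $\tau_2$ noise term enters only at order $\lambda^2$. The execution differs: the paper reduces to perturbing the $(M{+}1)\times(M{+}1)$ matrix $S_\lambda$ itself, writing $\mu_{A,\lambda}=z_\lambda^2+O(\lambda^2)$ for the leading eigenvalue $z_\lambda$ of $S_\lambda$, and then reads off $dz_\lambda/d\lambda|_{\lambda=0}$ by implicit differentiation of the characteristic polynomial $\chi(x,\lambda)$ already computed in Eq.~\eqref{eq:chixldet}. That route makes the equivalence of the two expressions for $\alpha_{\mathrm{eff}}$ essentially automatic, since $\partial_x\chi(1,0)=\det(1-D)$ and $\partial_\lambda\chi(1,0)=\det(1-D)\bigl(\alpha-\mathbf b^T(1-D)^{-1}\mathbf c\bigr)$ drop out of the same determinant formula. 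Your route instead works at the level of the operator $A_0$ on $(M{+}1)\times(M{+}1)$ matrices: you identify the left eigenvector $W_0=yy^T$ with $y=(\begin{smallmatrix}1\\(1-D^T)^{-1}\mathbf b\end{smallmatrix})$, compute $B_1(\mathbf M_0)$ explicitly, and pair them. This is slightly more hands-on but has the virtue of being completely self-contained and of exhibiting the left eigenvector of $A_0$, which is not made explicit in the paper. The price is that the polynomial form of $\alpha_{\mathrm{eff}}$ then requires a separate Schur-complement verification, which you carry out correctly.

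One terminological nit: for the last step you want continuity of the eigenvalues of the analytic family $A_\lambda$ (or of finitely many algebraic functions of $\lambda$), not ``upper semi-continuity of the spectrum''; but the conclusion is of course the same.
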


\paragraph{The power-law phase diagram.}
We verify now the hypotheses of Theorem \ref{th:lexp} under the power-law spectral assumptions \eqref{eq:powlaws}.

\begin{ther}[\ref{th:powasympproof}]\label{th:powasymp}
Keep assumptions of Theorem \ref{th:stab} and suppose that $A_\lambda$ are strictly stable for all $0<\lambda\le\lambda_{\max}\equiv\max_k\lambda_k$. 
Then, under spectral assumptions \eqref{eq:powlaws} with $\nu>\tfrac{1}{2}$, the values $V_t,V_t', U_t,U_t'$ given by Eqs. \eqref{eq:vtvt'}, \eqref{eq:utut'} obey, as $t\to\infty$,
\begin{align}
V_t',V_t ={}& (1+o(1))Q\Gamma(\zeta+1)(2\alpha_{\mathrm{eff}}t)^{-\zeta},\\
    U'_t, U_t={}& (1+o(1))\frac{(\alpha_{\mathrm{eff}}\Lambda)^{1/\nu}\tau_1\Gamma(2-1/\nu)}{|B|\nu}(2t)^{1/\nu-2}.
\end{align}
\end{ther}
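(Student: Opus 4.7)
The plan is to reduce the propagator sums \eqref{eq:vtvt'}--\eqref{eq:utut'} to Laplace transforms of suitable spectral measures and then apply an Abelian theorem. First I would combine Theorem~\ref{th:stab} with the strict-stability assumption to obtain a uniform spectral decomposition $A_\lambda^{t-1}Y = \mu_{A,\lambda}^{t-1}\langle\psi_\lambda, Y\rangle\phi_\lambda + R_\lambda^{t-1}(Y)$ for $\lambda\in(0,\lambda_{\max}]$, where $\phi_\lambda,\psi_\lambda$ are the right and left eigenvectors at the leading eigenvalue $\mu_{A,\lambda}$. At $\lambda=0$ the operator $A_0=S_0\otimes S_0$ has a simple leading eigenvalue $1$: the right eigenvector is $\phi_0 = \bigl(\begin{smallmatrix}1 & 0\\ 0 & 0\end{smallmatrix}\bigr) = e_1e_1^T$, while the left eigenvector is $\psi_0=ww^T$ with $w=(1,\mathbf b^T(I-D)^{-1})^T$ the left $1$-eigenvector of $S_0$. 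Simplicity follows from strict stability of $D$, and analytic perturbation theory ensures $\phi_\lambda,\psi_\lambda$ depend continuously on $\lambda$ near $\lambda=0$.

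Substituting this decomposition into \eqref{eq:vtvt'}--\eqref{eq:utut'} factorises the leading contribution of each propagator at eigenvalue $\lambda_k$ as $\mu_{A,\lambda_k}^{t-1}$ times an inner-product coefficient evaluated at $\lambda=0$. For both $V_t$ and $V'_t$ this coefficient equals $1$ (for the primed version one uses $e_1^T\bigl(\begin{smallmatrix}1\\\mathbf a\end{smallmatrix}\bigr)=1$), while for $U_t$ and $U'_t$ it equals $\bigl(w^T\bigl(\begin{smallmatrix}-\alpha\\\mathbf c\end{smallmatrix}\bigr)\bigr)^2=\alpha_{\mathrm{eff}}^2$ by the definition \eqref{eq:alphaeff} of $\alpha_{\mathrm{eff}}$. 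Since $\mu_{A,\lambda}=1-2\alpha_{\mathrm{eff}}\lambda+O(\lambda^2)$ and the dominant integration region as $t\to\infty$ is $\lambda\sim 1/t$, the $O(\lambda^2)$ correction contributes only to lower-order terms, and $\mu_{A,\lambda}^{t-1}$ can be replaced by $e^{-2\alpha_{\mathrm{eff}}\lambda t}$.

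The remaining task is to evaluate $\sum_k\lambda_k c_k^2 e^{-s\lambda_k}$ and $\sum_k\lambda_k^2 e^{-s\lambda_k}$ as $s=2\alpha_{\mathrm{eff}}t\to\infty$. With $F(\lambda)=\sum_{k:\lambda_k\le\lambda}\lambda_k c_k^2$ and $G(\lambda)=\sum_{k:\lambda_k\le\lambda}\lambda_k^2$, assumption \eqref{eq:powlaws} gives $F(\lambda)\sim Q\lambda^\zeta$, and $\lambda_k\sim \Lambda k^{-\nu}$ with $\nu>\tfrac12$ yields $G(\lambda)\sim\tfrac{\Lambda^{1/\nu}}{2\nu-1}\lambda^{2-1/\nu}$ as $\lambda\to 0^+$. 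By the Karamata Abelian theorem for Laplace--Stieltjes transforms, $\int_0^\infty e^{-s\lambda}dF(\lambda)\sim Q\Gamma(\zeta+1)s^{-\zeta}$ and $\int_0^\infty e^{-s\lambda}dG(\lambda)\sim \tfrac{\Lambda^{1/\nu}\Gamma(3-1/\nu)}{2\nu-1}s^{1/\nu-2}$; the latter constant simplifies to $\Lambda^{1/\nu}\Gamma(2-1/\nu)/\nu$ via $\Gamma(3-1/\nu)=(2-1/\nu)\Gamma(2-1/\nu)$ and $(2-1/\nu)/(2\nu-1)=1/\nu$. Multiplying by the projection coefficients $1$ and $\alpha_{\mathrm{eff}}^2$ respectively and substituting $s=2\alpha_{\mathrm{eff}}t$ yields the two claimed asymptotics (the factor $\alpha_{\mathrm{eff}}^2$ combines with $(2\alpha_{\mathrm{eff}}t)^{1/\nu-2}$ to give the $(\alpha_{\mathrm{eff}}\Lambda)^{1/\nu}(2t)^{1/\nu-2}$ appearing in the statement).

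The main technical obstacle is quantitative control of the remainders: showing that the subdominant spectral part $R_\lambda^{t-1}$, the $O(\lambda^2)$ correction in $\mu_{A,\lambda}$, and the $o(1)$ remainders in the spectral power laws all contribute strictly lower-order corrections to the final asymptotics. The natural strategy is to split the sums at a small threshold $\lambda_0$: on $[\lambda_0,\lambda_{\max}]$ the uniform strict stability of $A_\lambda$ gives spectral radius at most $1-\delta$ for some $\delta>0$, so the contribution decays exponentially in $t$; on $(0,\lambda_0]$ a dominated-convergence argument for the Laplace integrals, together with continuity of $\phi_\lambda,\psi_\lambda$ at $\lambda=0$, recovers the stated constants up to $(1+o(1))$ errors after sending $\lambda_0$ to $0$ slowly compared to $1/t$.
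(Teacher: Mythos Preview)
Your proposal is correct and follows essentially the same approach as the paper: isolate the leading spectral projector of $A_\lambda$, identify the projection coefficients at $\lambda=0$ as $1$ (for $V,V'$) and $\alpha_{\mathrm{eff}}^2$ (for $U,U'$), split off an exponentially small contribution from $\lambda$ bounded away from $0$, and evaluate the remaining spectral sums via an Abelian/Tauberian argument under the power-law assumptions. The only cosmetic difference is that you obtain $\alpha_{\mathrm{eff}}^2$ by writing down the left eigenvector $w=(1,\mathbf b^T(I-D)^{-1})^T$ of $S_0$ explicitly, whereas the paper computes the same number as $\lim_{t\to\infty}\big(-\alpha+\mathbf b^T\sum_{s=0}^{t-1}D^s\mathbf c\big)$.
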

We see that the phase diagram for stationary SGD \cite{berthier2020tight} remains valid for arbitrary memory size (see Fig. \ref{fig:leadterms}). Moreover, the algorithm parameters affect the leading terms in $V_t,V_t', U_t,U_t'$ only though the effective learning rate $\alpha_{\mathrm{eff}}$ and the batch size $|B|$. However, loss asymptotics \eqref{eq:ltsignal}, \eqref{eq:ltnoise} depend not only on the leading terms in $V_t,U_t$, but also on their sums $V_\Sigma, U_\Sigma$ (in general on $V'_\Sigma, U_\Sigma, U'_\Sigma$, see Sec. \ref{sec:genutut'}). In particular, in the signal-dominated scenario, if we wish to accelerate convergence of $L_t,$ we need to increase $\alpha_{\mathrm{eff}}$ while keeping $U'_\Sigma<1.$ In principle, if $U'_\Sigma<\infty,$ we can always ensure $U'_\Sigma<1$ by taking $|B|$ large enough. One can ask, however, if $\alpha_{\mathrm{eff}}$ can be indefinitely increased while keeping $|B|$ and preserving the condition $U'_\Sigma<1$. It was shown in \cite{velikanovview} that this is impossible for SGD with momentum. We will prove, however, that this can be done even with memory $M=1$, by slightly generalizing the model.

\begin{figure}
\begin{center}
\begin{tikzpicture}[scale=0.58]
    \tikzstyle{every node}=[font=\large]
\begin{axis}
[
  xmin = 0,
  xmax = 3.2,
  ymin = 0,
  ymax = 3.2, 
  axis lines = left,
  xtick = {1,2},
  ytick = {1,2},
  x label style={at={(axis description cs:0.5, -0.05)},anchor=north},
  xlabel = {$\zeta$},
  ylabel = {$1/\nu$},
  ]

  \coordinate (A0) at (0,3);
  \coordinate (A1) at (3,3);
  \coordinate (B0) at (0,2);
  \coordinate (B1) at (3,2);
  \coordinate (C0) at (0,1);
  \coordinate (C1) at (1,1);
  \coordinate (C2) at (3,1);
  \coordinate (D0) at (0,0);
  \coordinate (D1) at (1,0);
  \coordinate (D2) at (2,0);
  \coordinate (D3) at (3,0);
  
  \draw[dotted] (C1) -- (D1);

  \fill[magenta] (A0) [snake=coil,segment aspect=0] -- (A1) -- (B1) [snake=none] -- (B0) -- cycle; 
  \fill[pink] (B0) -- (B1) [snake=coil,segment aspect=0] -- (C2) [snake=none] -- (C0) -- cycle;
  \fill[green] (C0) -- (C1) -- (D2) -- (D0) -- cycle;
  \fill[yellow] (C1) -- (C2) [snake=coil,segment aspect=0] -- (D3) [snake=none] -- (D2) -- cycle;

  \node[text width=4.5cm, align=center] at (1.5, 1.5) {Eventual divergence \\ $ \sum_k\lambda_k^2<\infty,\;\sum_k\lambda_k=\infty$};
  \node[text width=4cm, align=center] at (1.5, 2.5) {Immediate divergence \\ $\sum_k\lambda_k^2=\infty$};
  \node[text width=3cm, align=center] at (0.75, 0.3) {Signal dominated \\ $L(t)\propto t^{-\zeta}$};
  \node[text width=3cm, align=center] at (2.2, 0.6) {Noise dominated \\ $L(t)\propto t^{1/\nu-2}$};
  
  \node[star, star point ratio=2.25, minimum size=5pt, inner sep=0pt, draw=black, solid, fill=blue, label={[scale=0.7, xshift=14.5mm, yshift=-5mm] MNIST+MLP}] at (0.25, 1/1.30) {};

  \node[star, star point ratio=2.25, minimum size=5pt, inner sep=0pt, draw=black, solid, fill=blue, label={[scale=0.7, xshift=18mm, yshift=-3.5mm] CIFAR10+ResNet}] at (0.23, 1/1.12) {};

\end{axis}
  
\end{tikzpicture}
\hspace{10mm}
\begin{tikzpicture}
\def\r{0.1} 
\xdef\x{0.}
\xdef\dx{3.6}
\draw[blue, thick] (\x,\r) -- (\x,-\r);
\draw[blue, thick] (\x-\dx,\r) -- (\x-\dx,-\r);
\draw[blue, thick] (\x,0) -- (\x-\dx,0);
\node[align=center] at (\x-0.5*\dx,0.3) {$V'$};
\node[] at (\x,-0.3) {$0$};
\xdef\x{\x-\dx-0.1}
\node[] at (\x+0.1,-0.3) {$t_1$};

\foreach \n/\dx [count=\nn] in {2/0.5, 3/0.2, 4/0.4}
{
    \draw[red, thick] (\x,\r) -- (\x,-\r);
    \draw[red, thick] (\x-\dx,\r) -- (\x-\dx,-\r);
    \draw[red, thick] (\x,0) -- (\x-\dx,0);
    \node[align=center] at (\x-0.5*\dx,0.3) {$U'$};
    \xdef\x{\x-\dx-0.1}
    \node[] at (\x+0.1,-0.3) { $t_\n$};
    }

\xdef\dx{0.3}
\draw[red, thick] (\x,\r) -- (\x,-\r);
\draw[red, thick] (\x-\dx,\r) -- (\x-\dx,-\r);
\draw[red, thick] (\x,0) -- (\x-\dx,0);
\node[align=center] at (\x-0.5*\dx,0.3) {$U$};
\xdef\x{\x-\dx-0.1}
\node[] at (\x+0.1,-0.3) {$t$};
\node[align=center] at (-2.6,0.8) {Signal-dominated};

\xdef\x{0}
\xdef\y{-2}
\xdef\dx{0.4}
\draw[blue, thick] (\x,\y+\r) -- (\x,\y-\r);
\draw[blue, thick] (\x-\dx,\y+\r) -- (\x-\dx,\y-\r);
\draw[blue, thick] (\x,\y) -- (\x-\dx,\y);
\node[align=center] at (\x-0.5*\dx,\y+0.3) {$V'$};
\node[] at (\x,\y-0.3) {$0$};
\xdef\x{\x-\dx-0.1}
\node[] at (\x+0.1,\y-0.3) {$t_1$};

\foreach \n/\dx [count=\nn] in {2/0.4, 3/0.2, 4/3.3, 5/0.3}
{
    \draw[red, thick] (\x,\y+\r) -- (\x,\y-\r);
    \draw[red, thick] (\x-\dx,\y+\r) -- (\x-\dx,\y-\r);
    \draw[red, thick] (\x,\y) -- (\x-\dx,\y);
    \node[align=center] at (\x-0.5*\dx,\y+0.3) {$U'$};
    \xdef\x{\x-\dx-0.1}
    \node[] at (\x+0.1,\y-0.3) { $t_\n$};
    }

\xdef\dx{0.3}
\draw[red, thick] (\x,\y+\r) -- (\x,\y-\r);
\draw[red, thick] (\x-\dx,\y+\r) -- (\x-\dx,\y-\r);
\draw[red, thick] (\x,\y) -- (\x-\dx,\y);
\node[align=center] at (\x-0.5*\dx,\y+0.3) {$U$};
\xdef\x{\x-\dx-0.1}
\node[] at (\x+0.1,\y-0.3) {$t$};
\node[align=center] at (-2.6,\y+0.8) {Noise-dominated};
    
\end{tikzpicture}
\end{center}

\caption{\textbf{Left:} The phase diagram of stationary SGD with momentum from \cite{velikanovview}. Theorems \ref{th:lexp}, \ref{th:powasymp} show that it remains valid for general memory size $M$. \textbf{Right:} Leading terms in the loss expansion \eqref{eq:lossexp}. In the signal-dominated phase the leading terms have one long signal propagator $V'$ and many short noise propagators $U'$. In the noise-dominated phase the leading terms have one long noise propagator $U'$, while the other propagators $U'$ and the signal propagators $V'$ are short.}\label{fig:leadterms}
\end{figure}
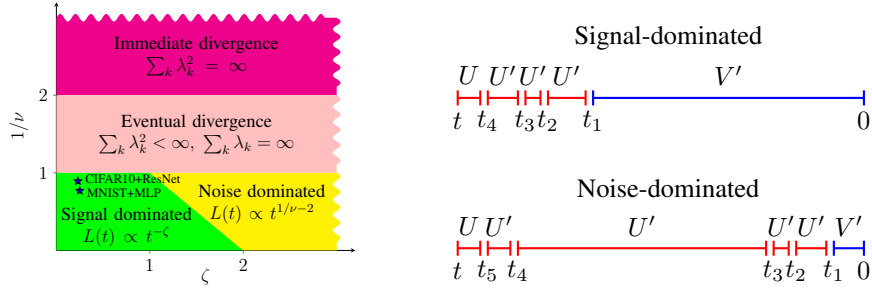

\section{Detailed investigation of memory-$1$ algorithm}\label{sec:mem1}
We aim to check whether memory-$1$ algorithm can achieve accelerated convergence in the signal-dominated phase $\zeta<2-\tfrac{1}{\nu}$ with a suitable choice of its five scalar hyperparameters $\alpha_t, a_t, b_t, c_t, D_t$. In this section, we will assume that $\tau_2=0$ to ensure factorization $A_\lambda=S_\lambda\otimes S_\lambda$, and strict stability of $A_\lambda$ reduces to that of $S_\lambda$.  

Let us start with the stationary case. According to theorems \ref{th:lexp} and \ref{th:powasymp}, stable loss trajectories have the loss asymptotic $L_t\sim C_Lt^{-\zeta}$ with $C_L\propto\alpha_\mathrm{eff}^{-\zeta}$. Hence, we want to maximize $\alpha_\mathrm{eff}(\alpha,a,b,c,D)$ under the stability constraints. The latter consist of the noiseless stability condition $|S_\lambda|<1$ followed by the noisy condition $U'_\Sigma<1$. To proceed, we find it convenient to denote $\delta=1-D$ and use multi-step recurrence parametrization of Theorem \ref{th:equiv} with $p_1=2-\delta$ and $q_0,q_1$ given by \eqref{eq:qcoeffpoly}. In particular, applying \eqref{eq:alphaeff} to memory-1 case gives $\alpha_{\mathrm{eff}}=-\frac{q_0+q_1}{\delta}$. Then, we characterize the stability with
\begin{ther}[\ref{th:m1basicproof}]\label{th:m1basic} Stability of all memory-1 algorithms is fully determined by the triplet $(\delta,\alpha_\mathrm{eff},q_0)$.
    \begin{enumerate}
        \item Given $\lambda_{\max}>0$, the matrix $D$ and the maps $S_{\lambda}$ are strictly stable for all $0<\lambda\le \lambda_{\max}$ if and only if $0<\delta<2$, $-q_0<\tfrac{\delta}{\lambda_{\max}},$ and $0<\delta\alpha_\mathrm{eff}<\tfrac{4-2\delta}{\lambda_{\max}}-2q_0.$
        \item Assume $0<\delta<2$ and $\alpha_{\mathrm{eff}}>0$. If $q_1\le - \alpha_{\mathrm{eff}},$ then the eigenvalues of $S_\lambda$ are real for all $\lambda>0.$ If $q_1>-\alpha_{\mathrm{eff}},$ then they are either real or lie on the circle $\{x\in\mathbb C: |q_1x+q_0|^2=\delta^2\alpha_{\mathrm{eff}}(\alpha_{\mathrm{eff}}+q_1)\}$ (see Fig.~\ref{fig:circle}).
        \item Denote $R_\lambda=\sum_{t=0}^\infty \langle (\begin{smallmatrix}
    1 \\\ a
    \end{smallmatrix})
        (\begin{smallmatrix}
    1 \\\  a
    \end{smallmatrix})^T, A^{t}_{\lambda} 
    (\begin{smallmatrix}
    -\alpha \\\ c  
    \end{smallmatrix}) (\begin{smallmatrix}
    -\alpha \\\ c  
    \end{smallmatrix})^T \rangle$ so that $U'_\Sigma=\tfrac{\tau_1}{|B|}\sum_k \lambda_k^2 R_{\lambda_k}$. Then for strictly stable $S_\lambda$ and $\tau_2=0$, 
        \begin{equation}\label{eq:stability_sum}
        R_\lambda
    =\frac{2\lambda q_0^2+(\lambda q_0+2-\delta)\delta \alpha_{\mathrm{eff}}}{\lambda(\delta + \lambda q_{0})(4- 2 \delta - \lambda (2q_{0}+\delta\alphaeff))}.
        \end{equation}
    \end{enumerate}
\end{ther}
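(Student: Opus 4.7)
The plan is to reduce everything to the $2{\times}2$ characteristic polynomial of $S_\lambda$. Using the recurrence parametrization of Theorem~\ref{th:equiv}, a direct calculation gives $\mathrm{tr}(S_\lambda)=2-\delta+\lambda q_1$ and $\det(S_\lambda)=1-\delta-\lambda q_0$, with $q_1=ac-\alpha$ and $q_0=\alpha D-c(a-b)$. The key algebraic identity used throughout is $q_0+q_1=-\delta\alpha_{\mathrm{eff}}$, obtained directly from \eqref{eq:alphaeff}.

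For Part~1, strict stability of the scalar $D$ is $|D|<1$, i.e.\ $0<\delta<2$. For $S_\lambda$, I apply the Schur--Cohn criterion ($P(1)>0$, $P(-1)>0$, $|\det S_\lambda|<1$). With the above trace and determinant, these become $\lambda\delta\alpha_{\mathrm{eff}}>0$, $4-2\delta-\lambda(2q_0+\delta\alpha_{\mathrm{eff}})>0$, and $0<\delta+\lambda q_0<2$. The first forces $\alpha_{\mathrm{eff}}>0$; the upper bound $\delta+\lambda q_0<2$ follows from the first two; and the remaining constraints are monotone in $\lambda$, hence tightest at $\lambda=\lambda_{\max}$, giving exactly the claimed inequalities (with necessity clear at $\lambda_{\max}$).

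For Part~2, the discriminant of the characteristic polynomial, using $q_0+q_1=-\delta\alpha_{\mathrm{eff}}$, equals $\mathcal D(\lambda)=\delta^2-2\lambda\delta(2\alpha_{\mathrm{eff}}+q_1)+\lambda^2 q_1^2$. As a quadratic in $\lambda$ with $\mathcal D(0)=\delta^2>0$ and nonnegative leading coefficient, its sub-discriminant is $4\delta^2\alpha_{\mathrm{eff}}(\alpha_{\mathrm{eff}}+q_1)$, so $\mathcal D\ge 0$ on $\lambda>0$ precisely when $q_1\le-\alpha_{\mathrm{eff}}$. When $q_1>-\alpha_{\mathrm{eff}}$ and the eigenvalues are complex, write $x=u+iv$ and use Vieta: $2u=2-\delta+\lambda q_1$ and $u^2+v^2=1-\delta-\lambda q_0$. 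Substituting these into $|q_1x+q_0|^2=q_1^2(u^2+v^2)+2q_0q_1u+q_0^2$ makes the $\lambda$ terms cancel, leaving a constant that collapses via $q_0+q_1=-\delta\alpha_{\mathrm{eff}}$ to $\delta^2\alpha_{\mathrm{eff}}(\alpha_{\mathrm{eff}}+q_1)$.

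For Part~3, the choice $\tau_2=0$ gives $A_\lambda=S_\lambda\otimes S_\lambda$, and a cyclic-trace identity collapses the summand of $R_\lambda$ to a scalar square: $R_\lambda=\sum_{t=0}^\infty v_t^2$ with $v_t=(1,a)S_\lambda^t(-\alpha,c)^T$. A direct check yields the clean generating function $V(z)=(q_1+q_0z)/(1-\mathrm{tr}(S_\lambda)z+\det(S_\lambda)z^2)$, where the numerator simplifies because $v_1-\mathrm{tr}(S_\lambda)v_0=q_0$. Applying Parseval and computing $\oint V(z)V(1/z)\,dz/z$ by residues at the two eigenvalues of $S_\lambda$ inside the unit disk gives a rational function of $\mathrm{tr}(S_\lambda)$ and $\det(S_\lambda)$ whose denominator factors as $(1-\det)(1-\mathrm{tr}+\det)(1+\mathrm{tr}+\det)$. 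After substitution, this becomes $\lambda\delta\alpha_{\mathrm{eff}}(\delta+\lambda q_0)(4-2\delta-\lambda(2q_0+\delta\alpha_{\mathrm{eff}}))$---precisely the product of the three stability quantities from Part~1 (up to the factor $\delta\alpha_{\mathrm{eff}}$), matching the fact that $R_\lambda$ must diverge at the stability boundary. The main obstacle is the final algebraic simplification: grouping the raw numerator to expose its $(q_0+q_1)$ factor---so that the resulting $\delta\alpha_{\mathrm{eff}}$ cancels against $P(1)$ in the denominator---is what produces the compact form \eqref{eq:stability_sum}.
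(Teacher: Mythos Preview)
Your argument is correct in all three parts. Part~1 is essentially the paper's proof restated through the Schur--Cohn criterion: the paper also checks $\chi(\pm 1,\lambda)$ and $|\det S_\lambda|<1$, then reduces to the endpoint $\lambda=\lambda_{\max}$ using affinity in $\lambda$.

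Parts~2 and~3, however, take genuinely different routes from the paper. For Part~2 the paper argues geometrically: it locates the root $-q_0/q_1$ of the linear $\lambda$-term relative to the roots $1,\,1-\delta$ of the quadratic term, splits into sub-cases $q_1\gtrless 0$, and identifies the circle via the Joukowsky map $J(z)=z+1/z$ applied to $(q_1x+q_0)/h$. Your discriminant analysis is more direct---the sub-discriminant $16\delta^2\alpha_{\mathrm{eff}}(\alpha_{\mathrm{eff}}+q_1)$ together with the positivity of the root sum/product immediately gives the dichotomy---and your verification of the circle by plugging Vieta's relations into $|q_1x+q_0|^2$ avoids the conformal-map machinery entirely. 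For Part~3 the paper simply solves the $3\times 3$ linear system $(1-A_\lambda)Z=(\begin{smallmatrix}-\alpha\\c\end{smallmatrix})(\begin{smallmatrix}-\alpha\\c\end{smallmatrix})^T$ symbolically in Sympy. Your reduction to $R_\lambda=\sum_t v_t^2$ via the rank-one trace identity, followed by the Parseval/residue evaluation of $V(z)=(q_1+q_0z)/(1-Tz+Dz^2)$, is an analytic derivation that explains \emph{why} the denominator of $R_\lambda$ factors as $(1-\det)P(1)P(-1)$---i.e., as the product of the three Schur--Cohn stability quantities---something the Sympy computation leaves opaque. The factor $(q_0+q_1)=-\delta\alpha_{\mathrm{eff}}$ you extract from the numerator then cancels against $P(1)=\lambda\delta\alpha_{\mathrm{eff}}$, giving the stated formula. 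Both approaches are valid; yours is more self-contained and structurally informative.
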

Let us discuss implications of the above result. First, taking into account $-q_0<\tfrac{\delta}{\lambda_\mathrm{max}}$, the last stability condition in statement 1 gives the effective learning rate bound $\alpha_\mathrm{eff}<\delta^{-1}\tfrac{4}{\lambda_{\max}}$. Therefore, the only possibility to have $\alpha_\mathrm{eff}\to\infty$ is to take $\delta\to0$. Now consider the noise stability condition $U'_\Sigma<1$. 
In the limit $\delta\to 0$, \eqref{eq:stability_sum} can be simplified,
leading to three spectral regions separated by $\lambda^\mathrm{cr}_1=\tfrac{\delta}{q_0}$ and $\lambda^\mathrm{cr}_2=\tfrac{\delta\alpha_\mathrm{eff}}{q_0^2}$ and having a distinct impact on the stability sum $\sum_{k=1}^\infty \lambda_k^2 R_{\lambda_k}$:
\begin{equation}\label{eq:Rlambda_asym}
    \lambda^2 R_\lambda \asymp \lambda\frac{\delta\alpha_\mathrm{eff}+\lambda q_0^2}{\delta + \lambda q_0}\asymp\begin{cases}
        \alpha_\mathrm{eff} \lambda, \quad &0<\lambda<\lambda_1^\mathrm{cr}\\
        \delta\alpha_\mathrm{eff}/q_0, \quad &\lambda_1^\mathrm{cr}<\lambda<\lambda_2^\mathrm{cr}\\
        q_0\lambda, \quad &\lambda_2^\mathrm{cr}<\lambda<\lambda_\mathrm{max}-\varepsilon,
    \end{cases}  
\end{equation}
where $\epsilon$ is a small constant and $f\asymp g$ denotes $cg<f<c'g$ for some $c,c'>0$. In the last equivalence of \eqref{eq:Rlambda_asym} we assumed $q_0>0$, which turns out to be necessary for $\sum_{k=1}^\infty \lambda_k^2 R_{\lambda_k}<\infty$.  
These observations lead to a simple criterion of stable acceleration:

\begin{ther}[\ref{sec:accelerated_stability_proof}]\label{th:acceelrated_stability} Consider a spectrum $\lambda_k$ s.t. $\sum_k\lambda_k<\infty, \lambda_k<\lambda_\mathrm{max}=1$, and assume $\tau_2=0$.
    \begin{enumerate}
        \item  
        Suppose we are adjusting the parameters $\delta, \alpha_{\mathrm{eff}}, q_0$ so that $\alpha_{\mathrm{eff}}\to \infty$ while maintaining the strict stability of $S_\lambda$ as described in Theorem \ref{th:m1basic}, part 1. Then,
        the necessary and sufficient conditions for having $\sum_{k=1}^\infty \lambda_k^2 R_{\lambda_k}=O(1)$ are 
        \begin{equation*}   \delta\to 0;  \quad q_0>0;
             \quad \sum_{k: \lambda_k<\frac{\delta}{q_0}}\!\! \lambda_k = O(\tfrac{1}{\alpha_\mathrm{eff}});  \quad \big|\big\{k: \tfrac{\delta}{q_0}<\lambda_k < \tfrac{\delta\alpha_\mathrm{eff}}{q_0^2}\big\}\big| = O\big(\tfrac{q_0}{\delta\alpha_\mathrm{eff}}\big).
        \end{equation*}
        \vspace{-5mm}
        \item Conditions from item 1 can always be satisfied.  In the particular case of power law $\lambda_k=(1+o(1))\Lambda k^{-\nu}, \nu>1,$ this can be achieved by choosing $\alpha_{\mathrm{eff}}=\delta^{-h}, \;q_0=\delta^{g}$ with any $0\leq g<1$ and $h\le h_\mathrm{max}=(1-\tfrac{1}{\nu})(1-g).$ Moreover, for $g>0$ and $h<h_\mathrm{max}$, the noisy stability condition $U'_\Sigma<1$ can be satisfied with any batch size $|B|$ as $\delta\to0$.
    \end{enumerate}   
\end{ther}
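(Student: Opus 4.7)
The plan is to decompose the target sum $\sum_k\lambda_k^2 R_{\lambda_k}$ across the three spectral regions identified in \eqref{eq:Rlambda_asym} and estimate each region's contribution using the explicit formula \eqref{eq:stability_sum} from Theorem \ref{th:m1basic}. The main auxiliary fact, which I would extract from the strict stability condition in Theorem \ref{th:m1basic}(1), is that $\delta\alphaeff < 4/\lambda_{\max} - O(\delta, q_0)$ stays bounded. Combined with the hypothesis $\alphaeff\to\infty$, this immediately forces $\delta\to 0$, giving the first necessary condition. One can then rigorously verify the three-regime estimate \eqref{eq:Rlambda_asym} by plugging $\delta,q_0\to 0$, $\delta\alphaeff=O(1)$ into \eqref{eq:stability_sum} and tracking which of $\delta$ vs.\ $\lambda q_0$ dominates the denominator factor $\delta+\lambda q_0$ and which of $\delta\alphaeff$ vs.\ $\lambda q_0^2$ dominates the numerator factor $\delta\alphaeff+\lambda q_0^2$.

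For part 1 (necessity and sufficiency), write $\sum_k\lambda_k^2R_{\lambda_k} = S_1+S_2+S_3$ with $I_1=\{k:\lambda_k<\lambda_1^{\mathrm{cr}}\}$, $I_2=\{k:\lambda_1^{\mathrm{cr}}<\lambda_k<\lambda_2^{\mathrm{cr}}\}$, $I_3=\{k:\lambda_k\ge\lambda_2^{\mathrm{cr}}\}$. By \eqref{eq:Rlambda_asym}, $S_1\asymp \alphaeff\sum_{k\in I_1}\lambda_k$, $S_2\asymp (\delta\alphaeff/q_0)|I_2|$, $S_3\asymp q_0\sum_{k\in I_3}\lambda_k\le q_0\sum_k\lambda_k$. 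Since the summands are nonnegative, sufficiency follows immediately: under the four stated conditions, $S_1, S_2 = O(1)$ directly, and $S_3=O(1)$ because $q_0$ is kept bounded (e.g.\ $q_0<\delta/\lambda_{\max}$ fails stability only when negative, so positive bounded $q_0$ is admissible). For necessity, I argue by contradiction on each condition: failure of either the third or fourth condition forces $S_1$ or $S_2$ to be unbounded; and if $q_0\le 0$, then $\lambda_1^{\mathrm{cr}}\le 0$, $I_1$ absorbs the entire spectrum, and $S_1\gtrsim \alphaeff\sum_k\lambda_k\to\infty$, contradicting $O(1)$ (here I use $\sum_k\lambda_k<\infty$ is a fixed positive number, independent of the tunable parameters).

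For part 2, I first establish feasibility by noting that the conditions leave a whole scaling family open; the cleanest construction is precisely the power-law ansatz. Substituting $\alphaeff=\delta^{-h}$, $q_0=\delta^g$ into the conditions for the power-law spectrum $\lambda_k\sim\Lambda k^{-\nu}$:
\begin{equation*}
\sum_{\lambda_k<\delta/q_0}\lambda_k \asymp \bigl(\tfrac{\delta}{q_0}\bigr)^{1-1/\nu} = \delta^{(1-g)(1-1/\nu)}, \qquad |I_2|\le\bigl|\{k:\lambda_k>\tfrac{\delta}{q_0}\}\bigr| \asymp \bigl(\tfrac{q_0}{\delta}\bigr)^{1/\nu} = \delta^{(g-1)/\nu},
\end{equation*}
which must be $O(\delta^h)$ and $O(\delta^{g-1+h})$, respectively. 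A short algebraic check shows that both reduce to the single inequality $h\le(1-g)(1-1/\nu)=h_{\max}$. Finally, when $g>0$ and $h<h_{\max}$, both inequalities are strict, so $\sum_k\lambda_k^2R_{\lambda_k}=o(1)$ as $\delta\to 0$; consequently $U'_\Sigma=(\tau_1/|B|)\cdot o(1)<1$ eventually, for any fixed batch size $|B|$.

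I expect the main obstacle to be making the three-region asymptotic estimate \eqref{eq:Rlambda_asym} rigorous uniformly in $\delta,q_0,\alphaeff$, particularly near the stability boundary $\lambda\approx(4-2\delta)/(2q_0+\delta\alphaeff)$ where the third factor in the denominator of \eqref{eq:stability_sum} vanishes and $R_\lambda$ blows up. One needs to verify that the eigenvalues $\lambda_k\le\lambda_{\max}=1$ stay a fixed positive distance from this pole as $\delta\to 0$; this reduces to showing $\delta\alphaeff$ is bounded strictly away from $4-2\delta-2q_0$, which follows from the strict-stability regime being slightly contracted (one can work with $\alphaeff$ multiplied by any factor $<1$ without changing the exponents, absorbing constants into the $\asymp$ relations).
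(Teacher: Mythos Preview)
Your approach is essentially the same as the paper's: decompose $\sum_k\lambda_k^2R_{\lambda_k}$ over the three spectral regions in \eqref{eq:Rlambda_asym}, reduce to the simplified form $R_\lambda\asymp(\lambda q_0^2+\delta\alphaeff)/[\lambda(\delta+\lambda q_0)]$, and read off the conditions. Two points of your write-up need repair, however.

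First, your argument for the necessity of $q_0>0$ is backwards. If $q_0\le 0$ then $\lambda_1^{\mathrm{cr}}=\delta/q_0\le 0$, so $I_1=\{k:\lambda_k<\lambda_1^{\mathrm{cr}}\}$ is \emph{empty}, not the whole spectrum. More to the point, the three-region estimate \eqref{eq:Rlambda_asym} is derived under the assumption $q_0>0$ and cannot be invoked here. The paper instead works directly from the simplified $R_\lambda$: when $q_0\le 0$ one has $0<\delta+\lambda q_0\le\delta$ (positivity from the stability constraint $-q_0<\delta/\lambda_{\max}$), hence $R_\lambda\ge c\,\alphaeff/\lambda$ for every $\lambda_k$, and $\sum_k\lambda_k^2R_{\lambda_k}\ge c\,\alphaeff\sum_k\lambda_k\to\infty$.

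Second, your fix for the pole at $\lambda=(4-2\delta)/(2q_0+\delta\alphaeff)$ is not the right one. Shrinking $\alphaeff$ by a constant factor would help for sufficiency (part 2), but part 1 concerns arbitrary parameter sequences, so you cannot modify them. The paper's resolution is cleaner and uses only the hypothesis: since $\lambda_k<\lambda_{\max}=1$ strictly, there is a fixed $\varepsilon>0$ with $\lambda_k\le 1-\varepsilon$ for all $k$. Strict stability at $\lambda_{\max}=1$ gives $2q_0+\delta\alphaeff<4-2\delta$, so for every eigenvalue
\[
4-2\delta-\lambda_k(2q_0+\delta\alphaeff)\ge 4-2\delta-(1-\varepsilon)(4-2\delta)=\varepsilon(4-2\delta),
\]
which is bounded below uniformly as $\delta\to 0$. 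This is what lets you replace the full formula \eqref{eq:stability_sum} by the simplified two-factor form.
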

This result fully settles our initial question of having arbitrarily small constant $C_L$ in the loss asymptotic, while maintaining stability. 

\begin{figure}
    \centering
\begin{tikzpicture}[scale=2]

\newcommand\plotspec[5]{
\pgfmathsetmacro{\delta}{#1}
\pgfmathsetmacro{\alphaeff}{#2}
\pgfmathsetmacro{\q}{#3}

\pgfmathsetmacro{\qq}{-\q - \delta*\alphaeff} 
\pgfmathsetmacro{\center}{-\q/\qq}
\pgfmathsetmacro{\rad}{abs(\delta*sqrt(\alphaeff*(\alphaeff+\qq))/\qq)}

\pgfmathsetmacro{\cond1}{sqrt(\q+\delta))}
\pgfmathsetmacro{\cond2}{sqrt(4-2*\delta-\q+\qq))}

\pgfmathsetmacro{\discrim}{(2-\delta+\qq)*(2-\delta+\qq)-4*(1-\delta-\q)}
\pgfmathsetmacro{\mu}{((2-\delta+\qq)-sqrt(\discrim))/2}
\pgfmathsetmacro{\muu}{((2-\delta+\qq)+sqrt(\discrim))/2}

\begin{scope}[shift={(#4, 0)}]
\draw[dotted] (0,0) circle (1);


\draw[thick, decoration={markings, mark=at position 0.25 with {\arrow{>}}, mark=at position 0.75 with {\arrow{<}}}, postaction={decorate}] (\center,0) circle (\rad);

\draw[thick, decoration={markings, mark=at position 0.5 with {\arrow{>}}}, postaction={decorate}] 
    (1-\delta, 0) -- (\center+\rad, 0);

\draw[thick, decoration={markings, mark=at position 0.5 with {\arrow{>}}}, postaction={decorate}] 
    (1, 0) -- (\center+\rad, 0);

\draw[thick, decoration={markings, mark=at position 0.5 with {\arrow{>}}}, postaction={decorate}] 
    (\center-\rad, 0) -- (\mu, 0);

\draw[thick, decoration={markings, mark=at position 0.5 with {\arrow{>}}}, postaction={decorate}] 
    (\center-\rad, 0) -- (\muu, 0);

\draw[dotted] (-1, 0) node[left] {$-1$} -- (1, 0) node[right] {$1$};

\node at (0.0, 0.5) {#5};

\end{scope}
}

\plotspec{0.25}{13.99}{0}{0}{Heavy Ball}
\plotspec{0.15}{4.0}{1.3}{3}{Generalized memory-1}

\end{tikzpicture}
    \caption{The geometric position and movement of complex eigenvalues of a memory-1 matrix $S_\lambda$ as $\lambda$ increases from 0 to 1 (see Theorem \ref{th:m1basic}, part 2). \textbf{Left:} Classical Heavy Ball ($q_0=0$, \citet{polyak1964some, tugay1989properties}) with $\delta=0.25, \alpha_{\mathrm{eff}}=14$. The circle of non-real eigenvalues is centered at the origin and has radius $r=\sqrt{1-\delta}$. Acceleration ($\alpha_{\mathrm{eff}}\gg 1$) requires $\delta\ll 1$ and hence a large circle, $r\approx 1$.  \textbf{Right:} Our generalized memory-1 algorithm with $\delta=0.15, \alpha_{\mathrm{eff}}=4, q_0=1.3$. The accelerated regime of Theorem \ref{th:acceelrated_stability} corresponds to small circles close to point 1. 
    }
    \label{fig:circle}
\end{figure}

Now, let us look closer at the parameters of the obtained accelerated algorithm. First, we see from the second item of theorem \ref{th:acceelrated_stability} that, at given $\delta$, the maximal $\alpha_\mathrm{eff}=\delta^{-h}$ is reached with $h=1-\tfrac{1}{\nu}$ by choosing $g=0$, corresponding to $q_0=const$. Although these values are specific to power-law spectrum with exponent $\nu$, they suggest that in the space of algorithm parameters $(\delta,\alpha_\mathrm{eff},q_0)\in\mathbb{R}^3$ the region $\delta^{-1}\gg\alpha_\mathrm{eff}\gg q_0=const$ is the most promising for stable acceleration. 

\paragraph{Power-law schedule with accelerated rate.} 
Jacobi scheduled HB uses momentum $1-\beta_t=\delta_t \propto t^{-1}$ to achieve accelerated rate $O(t^{-2\zeta})$ in the noiseless setting \citep{brakhage1987ill}. Heuristically, we can view this non-stationary algorithm as a sequence of stationary algorithms gradually changing so as to increase the effective learning rate $\alpha_\mathrm{eff}=\tfrac{\alpha}{\delta}\to\infty$ while maintaining stability.
Mimicking this connection, we take the accelerated stationary algorithm obtained above and construct a schedule $(\delta_t,\alpha_{\mathrm{eff},t},q_{0,t})$ that also achieves improved convergence rate. We call the result \emph{accelerated memory-1 algorithm} (AM1), heuristically derive its convergence rate below, and verify it empirically in figures~\ref{fig:accleration_loss} and ~\ref{fig:accleration_divergence_Gauss}. We additionally discuss intuition behind AM1 update rule in sec.~\ref{sec:experiments}. 

We construct AM1 using the power-law ansatz $(\delta_t,\alpha_{\mathrm{eff},t},q_{0,t})\sim(t^{-\overline{\delta}},t^{\overline{\alpha}},const)$. Assuming an approximate momentary stationarity as above and substituting this ansatz in Theorem \ref{th:acceelrated_stability} part 2, we get the stability condition $\overline{\alpha}\le \overline{\delta}(1-\tfrac{1}{\nu})$. (We emphasize, however, that, strictly speaking, Theorem \ref{th:acceelrated_stability} only holds for stationary algorithms.)

Now we examine the loss convergence rate for these stable $\overline{\delta},\overline{\alpha}$. By Theorem \ref{th:lexp}, the rate of $L_t$ in the signal-dominated phase is the same as that of the noiseless loss $V_t$. We assume that this is also true for our non-stationary algorithm, so we only need to estimate $V_t$. In sec.~~\ref{sec:pl_alg_convergence}, we replace evolution matrix $A_\lambda$ with its maximal eigenvalue $ |\mu_\mathrm{max}(A_\lambda)|$ in eq.~\eqref{eq:vtvt'}, and obtain $V_t=O(t^{-\zeta(1+\overline{\alpha})})$ for $\overline{\delta}<1$. Summarizing, we arrive at the following convergence rate of AM1 algorithm:
\begin{equation}
    L_t =O(t^{-\zeta(1+\overline{\alpha})}), \qquad 0<\overline{\alpha}\le \overline{\delta}(1-\tfrac{1}{\nu}), \quad 0<\overline{\delta}<1.
\end{equation}
The maximal limiting convergence rate $O(t^{-\zeta(2-\frac{1}{\nu})})$ is obtained by setting $\overline{\delta}=1, \; \overline{\alpha}=1-\tfrac{1}{\nu}$.

\section{Discussion}
We have developed a general framework of SGD algorithms with size-$M$ memory and analyzed their convergence properties. We developed a loss expansion in terms of signal and noise propagators, and generalized the existing SGD phase diagram of power-law regimes to arbitrary $M$. In the signal-dominated regime, we have found the asymptotic loss convergence to depend on the algorithm mostly through the batch size and a scalar parameter $\alpha_{\mathrm{eff}}$ that we call the effective learning rate. We have proved  that already within memory-1 algorithms the effective learning rate can be arbitrarily increased while maintaining stability. As a consequence, we proposed a time-dependent schedule for memory-1 SGD which is heuristically expected to improve the standard SGD convergence $L_t=O(t^{-\zeta})$ up to $O(t^{-\zeta(2-\frac{1}{\nu})})$. This conjecture was confirmed by experiments with MNIST and synthetic problems.    

Interestingly, while our framework allows shifts $\mathbf w_t+\mathbf a^T \mathbf u_t$ in the arguments of computed gradients $\nabla L$, and these shifts are important for Nesterov momentum \citep{nesterov1983method} and averaging \citep{polyak1992acceleration}, we have not found $\mathbf a\ne 0$ to be useful in our setting of quadratic problems and mini-batch noise. The special regime of stably increasing $\alpha_{\mathrm{eff}}$ that we identified in Theorem \ref{th:acceelrated_stability} 
requires separation of scales between the three main memory-1 parameters unrelated to $\mathbf a$, and does not seem to be easily interpretable in terms of averaging or related general ideas.

Our results leave two obvious open problems: rigorously proving the accelerated convergence $L_t=O(t^{-\zeta(2-\frac{1}{\nu})})$ and further improving convergence by considering larger memory sizes $M.$

\section*{Acknowledgment}
We thank Ivan Oseledets and the anonymous reviewers for useful discussions and suggestions that helped improve the paper.

\bibliography{main, extra}
\bibliographystyle{iclr2025_conference}

\newpage
\appendix

\tableofcontents

\section{Related work}\label{sec:relatedwork}

\paragraph{Linearization of neural networks.} The relevance of quadratic optimization problems to deep learning is given by various scenarios where a neural network function $f(\mathbf{w},\mathbf{x})$ can be linearized with respect to parameters $\mathbf{w}$
\begin{equation}
    f_\mathrm{lin}(\mathbf{w}, \mathbf{x}) = f(\mathbf{w}_{0}, \mathbf{x}) + \langle\mathbf{w}-\mathbf{w}_0, \nabla_\mathbf{w}f(\mathbf{w}_0,\mathbf{x})\rangle,
\end{equation}
often referred to as kernel regime since $f_\mathrm{lin}(\mathbf{w}, \mathbf{x})$ can be described in terms of respective neural tangent kernel (NTK) $K(\mathbf{x},\mathbf{x'})=\langle\nabla_\mathbf{w}f(\mathbf{w}_0,\mathbf{x}), \nabla_\mathbf{w}f(\mathbf{w}_0,\mathbf{x}')\rangle$. One of such scenario is infinite width limit in NTK regime \cite{jacot2018neural,lee2019wide,chizat2019lazy}, where linearization becomes exact. Another scenario, is an approximate linearization of practical neural network at certain stage of its training \cite{Fort_2020,maddox2021fast,ortiz-jimenez2021what,lee2020finite}. We cover this scenario by linearized network features $\nabla_\mathbf{w}f(\mathbf{w}_0,\mathbf{x})$ as inputs $\mathbf{x}$ in \eqref{eq:H}.

\paragraph{Power laws.} Both classical kernels as well as NTK often display power-law shapes of their spectral distributions when applied to real world data. Typical examples include MNIST and CIFAR on either classical kernels or NTK of different architectures such as MLP or resnet \citep{cui2021generalization, bahri2021explaining,lee2020finite,Canatar_2021,kopitkov2020neural,Dou_2020,atanasov2021neural,bordelon2021learning,basri2020frequency,bietti2021approximation, Wei_2022, atanasov2021neural}. Power-law spectrum also appears theoretically in settings involing highly oscilating Fourier harmonics \citep{yang2020finegrained, Spigler_2020,basri2020frequency, velikanov2021explicit}. Finally, power-law spectrum is often used as an assumption \citep{caponnetto2007optimal} for different optimization and generalization results on kernel methods.

Importantly, the exponent $\zeta$ in the power law spectra of real world problems is typically $\zeta<1$, for example for MNIST different works report $\zeta\in[0.2,0.4]$. This implies that the respective infinite dimensional problem with $\zeta<1$ will have divergent series of coefficients $\|\mathbf{w}_*\|^2=\sum_k c_k^2 = \infty$, so we can refer to respective targets as \emph{infeasible}. Such setting makes vacuous many classical results relying on norm $\|\mathbf{w}_*\|^2$ \citep{Wei_2022,velikanov2024tight}. Results in the present paper were developed to include the case $\zeta<1$.

\paragraph{Quadratic optimization under mini-batch noise.} The properties of stochastic optimization can greatly vary with the structure of the noise. In particular, a common setting of additive noise \citep{polyak1992acceleration,wu2020noisy, zhu2018anisotropic} differs substantially from multiplicative noise \citep{Moulines_Bach_2011,Bach_Moulines_2013} considered in the current paper. The mini-batch setting for power-law spectrum and its phase diagram with signal-dominated $\zeta<2-\tfrac{1}{\nu}$ and noise-dominated $\zeta>2-\tfrac{1}{\nu}$ regimes was identified in \citep{berthier2020tight,varre2021last} on the level of rigorous upper bounds, and in \citep{velikanovview} on the level of loss symptotics under spectrally expressible (SE) approximation. The latter work relies on a generating function technique and provides the asymptotic form for the partial sums $\sum_{t\le T} t L_t$ rather than for the losses $L_t$. In contrast, the combinatorial approach of propagator expansions used in the present work allows to directly obtain the asymptotics of $L_t$. 

As we mentioned in the main paper, SE approximation, or its $\tau_2=0$ version, is often used as an upper bound assumption on the noise covariance \citep{bordelon2021learning,pmlr-v65-flammarion17a,harder_JMLR_2017,pmlr-v134-zou21a,varre2022accelerated,pmlr-v162-wu22p}. Another series of work \citep{paquette2021dynamics,paquette2021sgd,lee2022trajectory,paquette2022homogenizationsgdhighdimensionsexact,collinswoodfin2023hittinghighdimensionalnotesode} consider a different high-dimensional setting where SGD converges deterministic dynamics described by convolution Volterra equation. This setting differes from ours in that eigenvalues are described by a limiting density, leading to the regime of immediate divergence  $\sum_k \lambda_k^2 =\infty$, which has to be overcome by scaling the learning rate with dimension as $d^{-1}$. Apart from this difference, algebraic structure of continuous time Volterra equation is close to our discrete time propagator expansion \eqref{eq:lossexp}, and we expect some of our results to be extendable to the aforementioned setting.

\paragraph{Accelerated algorithms.} It is a standard textbook material that Heavy Ball accelerates convergence of plain non-stochastic gradient descent on quadratic problems \citep{Polyak87}. In the setting of non-stochastic optimization of non strictly convex quadratic problems with power law spectral data, convergence rates of different GD-related methods were first analized by \cite{nemirovskiy1984iterative1, nemirovsky1984iterative2}. In particular, they showed that under the source condition corresponding to the second equation in \eqref{eq:powlaws} with some $\zeta$, the loss convergence rate of standard stationary GD was $L_t=O(t^{-\zeta})$, and also gave a number of other upper and lower bounds for different algorithms. \cite{brakhage1987ill} introduced Heavy Ball with a time-dependent schedule based on Jacobi polynomials and showed that such schedules provide acceleration of the standard GD rate $L_t=O(t^{-\zeta})$ to $L_t=O(t^{-2\zeta})$ for any spectral exponent $\zeta$. This rate is optimal among algorithms with predefined schedule, though, as shown already by  \cite{nemirovskiy1984iterative1}, Conjugate Gradients can provide an even faster convergence $L_t=O(t^{-(2+\nu)\zeta})$ (but in this algorithm the update coefficients are not predefined). See \cite{hanke1991accelerated, hanke1996asymptotics, velikanov2024tight} for several other results concerning acceleration of non-stochastic gradient descent on non-strongly convex quadratic problems under power law spectral assumptions. 

A series of works \citep{flammarion2015averaging,varre2022accelerated} consider a special schedule of memory-1 SGD parameters such that after rescaling the iterates $\Delta\widetilde{\mathbf{w}}_t =t\Delta\mathbf{w}_t$ the update rule become stationary. Then, the stability of $\widetilde{\mathbf{w}}_t$ leads to $O(t^{-2})$ convergence of original iterates $\Delta\mathbf{w}_t$ as long as $\|\Delta \mathbf{w}_0\|^2=\|\mathbf{w}_*\|^2<\infty$. In particular, for mini-batch SGD under uniform kurtosis assumption (i.e. noise is bound by our SE term), \cite{varre2022accelerated} gives an upper bound $O(\frac{d}{t^2})$, where problem dimension $d$ appears from downscaling learning rate as $\alpha=O(d^{-1})$ to ensure stability. Potentially, the necessity to downscale learning rate of the algorithm of \cite{varre2022accelerated} can be traced to third stability condition of theorem \ref{th:acceelrated_stability}, with the set $\big\{k: \tfrac{\delta}{q_0}<\lambda_k < \tfrac{\delta\alpha_\mathrm{eff}}{q_0^2}\big\}$ covering the whole spectrum, thus requiring small learning rate inversely proportional to the size of the set.

\section{GD with memory}

\subsection{Shift-invariant and equivariant vectors}\label{sec:shiftinv}

The vectors $\mathbf w_{t}$ in the evolution \eqref{eq:gen_iter} are \emph{shift-equivariant} while $\mathbf u_{t}$ are \emph{shift-invariant}, in the following sense. Given a loss function $L(\cdot)$ and an initial approximation $\mathbf w_0,$ consider their shifts by some vector $\mathbf v:$  $L'(\cdot)=L(\cdot-\mathbf v),\mathbf w_0'=\mathbf w_0+\mathbf v.$ Then running the algorithm \eqref{eq:gen_iter} with the same parameters $\alpha_t, \mathbf a_t,\mathbf b_t,\mathbf c_t, D_t$ yields the shifted trajectory $\mathbf w_t'=\mathbf w_t+\mathbf v$, as is commonly expected, while preserving the auxiliary trajectory $\mathbf u_t'=\mathbf u_t$. This can be seen as a consequence of using the difference $\mathbf w_{t+1}-\mathbf w_t$ in the l.h.s., since $\mathbf w_t$ being shift-equivariant for all $t$ is equivalent to $\mathbf w_{t+1}-\mathbf w_t$ being shift-invariant for all $t$. Since the gradients $\nabla L(\mathbf w_t+\mathbf a_t^T\mathbf u_t)$ are also shift-invariant, all the matrix elements appearing in Eq. \eqref{eq:fbmat} are shift-invariant, which is consistent with the mentioned in- and equi-variance of the trajectories $\mathbf u_t$ and $\mathbf w_t$. 

\subsection{Proof of Theorem \ref{th:equiv}}\label{sec:th:equiv:proof}

\begin{proof}
1. Consider the characteristic polynomial 
\begin{equation}\label{eq:charpoly}
    \chi(x,\lambda) = \det(x-S_\lambda) = \det \bigg[x-\bigg(\begin{pmatrix}1 & \mathbf{b}^T \\ 0 & D  \end{pmatrix}+ \lambda  \begin{pmatrix} -\alpha\\ \mathbf c  \end{pmatrix}(1, \mathbf a^T)\bigg)\bigg]=\sum_{m,k}r_{mk}x^m\lambda^k.
\end{equation}
We can express $\chi$ in terms of $\alpha,\mathbf{a,b,c}, D$ by conjugating $S_\lambda$ with the matrices
\begin{equation}
    U = \begin{pmatrix}
        1& \mathbf a^T\\
        0 & \mathbf 1
    \end{pmatrix},\quad 
    U^{-1} = \begin{pmatrix}
        1& -\mathbf a^T\\
        0 & \mathbf 1
    \end{pmatrix},    
\end{equation}
namely 
\begin{align}
    \chi(x,\lambda) ={}& \det(x-US_\lambda U^{-1})\\
    ={}&\det \bigg[x-\bigg(\begin{pmatrix}1 & \mathbf{b}^T+\mathbf a^T(D-1) \\ 0 & D  \end{pmatrix}+ \lambda  \begin{pmatrix} \mathbf a^T\mathbf c-\alpha\\ \mathbf c  \end{pmatrix}(1, \mathbf 0)\bigg)\bigg]\\
    ={}&\det \begin{pmatrix}x-1-\lambda(\mathbf a^T\mathbf c-\alpha) & -(\mathbf{b}^T+\mathbf a^T(D-1)) \\ -\lambda\mathbf c & x-D  \end{pmatrix}\\
    ={}&\det \begin{pmatrix}x-1 & -(\mathbf{b}^T+\mathbf a^T(D-1)) \\ 0 & x-D  \end{pmatrix}-\lambda \det \begin{pmatrix}\mathbf a^T\mathbf c-\alpha & -(\mathbf{b}^T+\mathbf a^T(D-1)) \\ \mathbf c & x-D  \end{pmatrix}\\
    ={}&(x-1)\det(x-D)-\lambda \det \begin{pmatrix}\mathbf a^T\mathbf c-\alpha & \mathbf a^T(1-D)-\mathbf b^T \\ \mathbf c & x-D  \end{pmatrix}.\label{eq:chixldet}
\end{align}
This shows, in particular, that $\chi$ can be written in the form
\begin{equation}\label{eq:chipq}
    \chi(x,\lambda)=x^{M+1}-\sum_{m=0}^M p_mx^m-\lambda\sum_{m=0}^M q_mx^m
\end{equation}
with some coefficients $(p_m)_{m=0}^M, (q_m)_{m=0}^M$ such that $\sum_{m=0}^M p_m=1.$

On the other hand, by the Cayley-Hamilton theorem, $\chi(S_\lambda,\lambda)=0.$ In the time-independent scenario $\lambda$-eigenvectors $(\begin{smallmatrix}\Delta\mathbf w_t\\ \mathbf u_t\end{smallmatrix})$ are evolved by
\begin{equation}    
\begin{pmatrix}
\Delta\mathbf{w}_{t+m}\\ \mathbf u_{t+m}\end{pmatrix}=S_{\lambda}^m\begin{pmatrix}\Delta\mathbf{w}_{t}\\ \mathbf u_{t}\end{pmatrix}.
\end{equation}
Recalling the expansion $\chi(x,\lambda)=\sum_{m,k}r_{mk}x^m\lambda^k$ of the characteristic polynomial, it follows that
\begin{equation}  0=\Big(\sum_{m,k}r_{mk}S_\lambda^m\lambda^k \Big)\begin{pmatrix}\Delta\mathbf{w}_{t}\\ \mathbf u_{t}\end{pmatrix} = \sum_{m,k}r_{mk}\lambda^k \begin{pmatrix}\Delta\mathbf{w}_{t+m}\\ \mathbf u_{t+m}\end{pmatrix}=\sum_{m,k}r_{mk} \mathbf{H}^k\begin{pmatrix}\Delta\mathbf{w}_{t+m}\\ \mathbf u_{t+m}\end{pmatrix}.
\end{equation}
In particular, taking the top row,
\begin{equation}
    \sum_{m,k}r_{mk}\mathbf{H}^k \Delta\mathbf{w}_{t+m}=0.
\end{equation}
Since this equation is $\lambda$-independent, it holds not only for eigenvectors $\Delta\mathbf{w}_{t+m}$, but, by linearity, for any vectors $\Delta\mathbf{w}_{t+m}$ connected by the matrix dynamics \eqref{eq:gen_iter}.

Finally, we obtain the desired identity \eqref{eq:recur} by substituting the expression \eqref{eq:chipq} for $\chi$, recalling that $\sum_{m=0}^1p_m=1$, and noting that $\mathbf{H}\Delta \mathbf w=\nabla L(\mathbf w):$
\begin{align}
    0 ={}& \Delta\mathbf{w}_{t+M+1}-\sum_{m=0}^{M}p_{m}\Delta\mathbf{w}_{t+m}-\sum_{m=0}^{M}q_{m}\mathbf{H}\Delta\mathbf{w}_{t+m}\\
    ={}&\mathbf{w}_{t+M+1}-\sum_{m=0}^{M}p_{m}\mathbf{w}_{t+m}-\sum_{m=0}^{M}q_{m}\nabla L(\mathbf{w}_{t+m}).
\end{align}

2. The proof will consist of two components:
\begin{itemize}
    \item[a)] Construct $\alpha, \mathbf{a,b,c}, D$ so as to satify Eqs. \eqref{eq:pcoeffpoly},\eqref{eq:qcoeffpoly}. By part 1) of the theorem, this will ensure that the sequence $\mathbf w_t$ satisfies the recurrence \eqref{eq:recur}. 
    \item[b)] Show that for any sequence $\mathbf w_0,\mathbf w_1, \ldots,\mathbf w_M$ required to initialize recurrence \eqref{eq:recur} one can find an initial condition  $(\begin{smallmatrix} \mathbf w_0\\ \mathbf u_0\end{smallmatrix})$ for the matrix evolution \eqref{eq:gen_iter} to produce this initial sequence.
\end{itemize}

For a), we first satisfy Eq. \eqref{eq:pcoeffpoly}, i.e. construct a matrix $D$ such that \begin{equation}
    \det(x-D)=\frac{x^{M+1}-\sum_{m=0}^Mp_mx^m}{x-1}=x^M+\sum_{m=0}^{M-1}p'_mx^m,
\end{equation} 
where
\begin{equation}
    p_m'=\sum_{k=0}^m p_k,\quad m=0,\ldots,M-1.
\end{equation}
A $M\times M$ matrix $D$ with such $\det (x-D)$ can be given as
\begin{equation}\label{eq:dpm}
    D = \begin{pmatrix}
        -p_{M-1}' & 1 & 0 & \cdots & 0\\
        -p_{M-2}' & 0 & 1 & \cdots & 0\\
        &&\cdots && \\
        -p_1' & 0 & 0 & \cdots & 1 \\
        -p_0' & 0 & 0 & \cdots & 0
    \end{pmatrix}.
\end{equation}
Now to satisfy Eq. \eqref{eq:qcoeffpoly} we  choose $\mathbf a=\mathbf 0, \mathbf b=(1,0,\ldots,0)^T$, so that this condition becomes
\begin{align}
    \sum_{m=0}^{M}q_m x^m={}&\det \begin{pmatrix}-\alpha & -\mathbf b^T \\ \mathbf c & x-D  \end{pmatrix}\\
    ={}&-\alpha\det(x-D)+\begin{pmatrix}
        c_1 & -1 & 0 & \cdots & 0\\
        c_2 & x & -1 & \cdots & 0\\
        &&\cdots && \\
        c_{M-1} & 0 & 0 & \cdots & -1 \\
        c_M & 0 & 0 & \cdots & x
    \end{pmatrix}\\
    ={}&-\alpha\det(x-D)+\sum_{m=0}^{M-1}c_{M-1-m}x^m.
\end{align}
Observe that there is a unique assignment of $\alpha$ and $\mathbf c$ that satisfies this identity: first we assign $\alpha=-q_M$ so that $\sum_{m=0}^M q_m x^m+\alpha \det(x-D)$ is a polynomial of degree not greater than $M-1$, and its coefficients are then exactly the components of the vector $\mathbf c$.

Having constructed $\alpha, \mathbf {a,b,c}, D$, we check now the completeness condition b). Note that since we have set $\mathbf a=0,$ the gradients $\nabla L(\mathbf w_t+\mathbf a^T\mathbf u_t)=\nabla L(\mathbf w_t)$ appearing in Eq. \eqref{eq:gen_iter} only depend on the target sequence $\mathbf w_t$ and not on the auxiliary sequence $\mathbf u_t$. In the first step of matrix iterations we have
\begin{equation}
    \mathbf w_1-\mathbf w_0=-\alpha\nabla L(\mathbf w_0)+\mathbf b^T \mathbf u_0 = \mathbf v_1+\mathbf u_0^{(1)}
\end{equation}
with some $\mathbf v_1$ that depends only on  $\mathbf w_0$ and not on $\mathbf u_0$. In the second step, using the explicit form \eqref{eq:dpm} of the matrix $D,$
\begin{align}
    \mathbf w_2-\mathbf w_1={}&-\alpha\nabla L(\mathbf w_1)+\mathbf b^T \mathbf u_1\\
    ={}&-\alpha\nabla L(\mathbf w_1)+\mathbf b^T (\nabla L(\mathbf w_0)\mathbf c+D\mathbf u_0)\\
    ={}&-\alpha\nabla L(\mathbf w_1)+\nabla L(\mathbf w_0)\mathbf c_1-p'_{M-1}\mathbf u_0^{(1)}+\mathbf u^{(2)}_0\\
    ={}& \mathbf v_2+\mathbf u_0^{(2)},
\end{align}
where $\mathbf v_2$ is a vector depending on $\mathbf w_0, \mathbf w_1$ and the first component $\mathbf u_0^{(1)}$ of $\mathbf u_0,$ but not on the second component $\mathbf u_0^{(2)}.$

By repeating this process, we see that for all $m=1,2,\ldots M$ we can write
\begin{equation}
    \mathbf w_m-\mathbf w_{m-1}=\mathbf v_m+\mathbf u_0^{(m)}
\end{equation}
with some vector $\mathbf v_m$ that depends on $\mathbf w_0,\ldots, \mathbf w_{m-1}$ and on $\mathbf u_0^{(1)},\ldots,\mathbf u_0^{(m-1)},$ but not on $\mathbf u_0^{(m)}.$ This shows that by sequentially adjusting the components $\mathbf u_0^{(m)}$ we can ensure the desired values for all the differences $\mathbf w_{m}-\mathbf w_{m-1},m=1,\ldots,M,$ i.e. we can find the initial  condition $(\begin{smallmatrix} \mathbf w_0\\ \mathbf u_0\end{smallmatrix})$ producing the desired sequence $\mathbf w_0,\ldots,\mathbf w_M.$

\end{proof}

\subsection{Proof of Proposition \ref{th:equivnonstat}}\label{th:equivnonstatproof}
Suppose that $L$ is quadratic and consider iterations \eqref{eq:gen_iter3} in the eigenspace representation, starting from some $(\begin{smallmatrix}\Delta\mathbf w_0\\ \mathbf u_0\end{smallmatrix})$. Observe that $(\begin{smallmatrix}\Delta\mathbf w_t\\ \mathbf u_t\end{smallmatrix})$ is polynomial in $\lambda$, with degree at most $t$ and the associated leading term resulting from applying the rank-1 $\lambda$-term in each step:
\begin{equation}
    \begin{pmatrix}\Delta\mathbf{w}_{t}\\ \mathbf u_{t}\end{pmatrix}=\lambda^t \begin{pmatrix}-\alpha_t\\ c_t\end{pmatrix}\Big[\prod_{s=1}^{t-1}(c_sa_{s+1}-\alpha_s)\Big](\Delta\mathbf{w}_{0}+a\mathbf u_{0}) + O(\lambda^{t-1}).
\end{equation}
In particular,
\begin{align}
    \Delta\mathbf{w}_{1}={}&\lambda(-\alpha_1)(\Delta\mathbf{w}_{0}+a_1\mathbf u_{0})+O(1)\\
    \Delta\mathbf{w}_{2}={}&\lambda^2(-\alpha_2)(c_1 a_2-\alpha_1)(\Delta\mathbf{w}_{0}+a_1\mathbf u_{0})+O(\lambda)
\end{align}
Suppose now that $\alpha_1=0$ while $\alpha_2\ne 0$ and $c_1a_2\ne 0$. Then the leading term in $\Delta\mathbf{w}_{1}$ vanishes and the degree of $\Delta\mathbf{w}_{1}$ drops to 0, while the degree of $\Delta\mathbf{w}_{2}$ is exactly 2. Then, it is impossible for generic $\Delta\mathbf{w}_{0}, \mathbf u_{0}$ to represent $\Delta\mathbf{w}_{2}$ as
\begin{equation}\label{eq:dwimposs}
    \Delta\mathbf w_{2}=\lambda q_{1}\Delta\mathbf w_{1}+\lambda q_{0}\Delta\mathbf w_{0} +p_{1}\Delta\mathbf w_{1}+p_{0}\Delta\mathbf w_{0},
\end{equation}
since the r.h.s. is a polynomial in $\lambda$ of degree at most 1. Representation \eqref{eq:dwimposs} is equivalent to
\begin{equation}
    \mathbf w_{2}=p_{1}\mathbf w_{1}+p_{0}\mathbf w_{0}+q_{1}\nabla L(\mathbf w_{1})+q_{0}\nabla L(\mathbf w_{0})-(p_1+p_0-1)\mathbf w_*.
\end{equation}
We can assume $\mathbf w_*=0$, then this has exactly the form as in the statement of the proposition. 

\section{SGD with memory}
\subsection{Evolution of second moments}\label{sec:moments_evolution}
In a single step of SGD, the parameters and memory vectors are updated with stochastic batch Hessian $\mathbf{H}_t = \frac{1}{|B_t|}\sum_{\mathbf{x}_i\in B_t} \mathbf{x}_i \otimes \mathbf{x}_i$ as
\begin{equation}
    \begin{pmatrix}\Delta\mathbf{w}_{t+1}\\ \mathbf u_{t+1}\end{pmatrix}=\mathbf{S}_t\begin{pmatrix}\Delta\mathbf{w}_{t}\\ \mathbf u_{t}\end{pmatrix}, \quad \mathbf{S}_t\equiv \begin{pmatrix}1 & \mathbf{b}_{t}^T \\ 0 & D_t  \end{pmatrix}+ \mathbf{H}_t \otimes \begin{pmatrix} -\alpha_t\\ \mathbf c_t  \end{pmatrix}(1, \mathbf a_{t}^T)
\end{equation}
To lighten notation, we will sometimes drop the tensor product sign between Hessian and the algorithm parameter matrix, writing simply as  $\mathbf{H}_t\big(\begin{smallmatrix} -\alpha_t\\ \mathbf c_t  \end{smallmatrix}\big)(1, \mathbf a_{t}^T).$

To obtain the update of second moments \eqref{eq:second_moments}, we simply substitute the update of parameters above, and take the expectation with respect to draw of batches $B_1,B_2,\ldots B_t$
\begin{equation}
\begin{split}
    \mathbf{M}_{t+1} &= {}\mathbb{E}_{B_1,\ldots,B_{t-1},B_t}\left[\begin{pmatrix}
    \Delta \mathbf{w}_{t+1}\\ \mathbf u_{t+1}
    \end{pmatrix}
    \begin{pmatrix}
    \Delta \mathbf{w}_{t+1}\\ \mathbf u_{t+1}
    \end{pmatrix}^T\right]\\
    &= \mathbb{E}_{B_1,\ldots,B_{t-1},B_t} \left[ \mathbf{S}_t\begin{pmatrix}
    \Delta \mathbf{w}_t\\ \mathbf u_t
    \end{pmatrix}
    \begin{pmatrix}
    \Delta \mathbf{w}_t\\ \mathbf u_t
    \end{pmatrix}^T
    \mathbf{S}_t^T\right] = \mathbb{E}_{B_t} \left[ \mathbf{S}_t \mathbf{M}_t
    \mathbf{S}_t^T\right].
\end{split}
\end{equation}
To calculate the last expectation, we use that samples $\mathbf{x}_i\in B_t$ are drawn i.i.d. from distribution $\rho$. Then, for any fixed matrix $\mathbf{G}$ in $\mathcal{H}$, the expectations involving two batch Hessians are computed as
\begin{equation}
\begin{split}
    \mathbb{E}_{B_t}[\mathbf{H}_t\mathbf{G}\mathbf{H}_t] &=\mathbb{E}_{B_t} \left[\frac{1}{|B_t|^2}\sum_{i,j=1}^{|B_t|} \langle \mathbf{x}_i, \mathbf{G}\mathbf{x}_j\rangle\mathbf{x}_i \otimes \mathbf{x}_j \right] \\
    &= \frac{|B_t|-1}{|B_t|}\mathbf{H}\mathbf{G}\mathbf{H} + \frac{1}{|B_t|}\mathbb{E}_{\mathbf{x}\sim\rho}[\langle \mathbf{x}, \mathbf{G}\mathbf{x}\rangle\mathbf{x} \otimes \mathbf{x} ] \\
    &= \mathbf{H}\mathbf{G}\mathbf{H} + \frac{1}{|B_t|}\Sigma_\rho(\mathbf{G}),
\end{split}
\end{equation}
where the sampling noise variance $\Sigma_\rho(\mathbf{G})$ is defined by eq. \eqref{eq:stochastic_noise_term}.

Using the above, we finish the computation of second moments update
\begin{equation}
    \begin{split}
        \mathbf{M}_{t+1} =&\mathbb{E}_{B_t}\Big(\begin{pmatrix}1 & \mathbf{b}_{t}^T \\ 0 & D_t  \end{pmatrix}+ \mathbf{H}_t\begin{pmatrix} -\alpha_t\\ \mathbf c_t  \end{pmatrix}(1, \mathbf a_{t}^T)\Big)\mathbf{M}_t\Big(\begin{pmatrix}1 & \mathbf{b}_{t}^T \\ 0 & D_t  \end{pmatrix}+ \mathbf{H}_t\begin{pmatrix} -\alpha_t\\ \mathbf c_t  \end{pmatrix}(1, \mathbf a_{t}^T)\Big)^T\\
        =&\Big(\begin{pmatrix}1 & \mathbf{b}_{t}^T \\ 0 & D_t  \end{pmatrix}+ \mathbf{H}\begin{pmatrix} -\alpha_t\\ \mathbf c_t  \end{pmatrix}(1, \mathbf a_{t}^T)\Big)\mathbf{M}_t\Big(\begin{pmatrix}1 & \mathbf{b}_{t}^T \\ 0 & D_t  \end{pmatrix}+ \mathbf{H}\begin{pmatrix} -\alpha_t\\ \mathbf c_t  \end{pmatrix}(1, \mathbf a_{t}^T)\Big)^T\\
        &+\frac{1}{|B_t|}\Sigma_\rho\left(\mathbf{C}_t+2\mathbf{J}_t\mathbf{a}_t + \mathbf{a}_t^T\mathbf{V}\mathbf{a}_t^T\right)\otimes\begin{pmatrix}-\alpha_t \\ \mathbf{c}_t\end{pmatrix}(-\alpha_t, \mathbf{c}_t).
    \end{split}
\end{equation}

\subsection{Spectrally-expressible approximation}\label{sec:se}
As usual, for two Hermitian operators $A,B$ we write $A\ge B$ (or $A\le B$) meaning that $A-B$ is positive (negative) semi-definite.

Recall the maps $\Sigma_\rho$ and $\Sigma_{\mathrm{SE}}^{(\tau_1,\tau_2)}$ defined in Eqs. \eqref{eq:stochastic_noise_term}, \eqref{eq:se}: 
\begin{align}
    {\Sigma_\rho}({\mathbf{C}}) ={}& \mathbb E_{\mathbf x\sim \rho} [\langle \mathbf x,{\mathbf{C}}\mathbf x\rangle \mathbf x\otimes\mathbf x] -\mathbf{H}\mathbf{C}\mathbf{H},\\
    \Sigma_{\mathrm{SE}}^{(\tau_1,\tau_2)}(\mathbf{C})={}&\tau_1\Tr[\mathbf{H}\mathbf{C}]\mathbf{H}-\tau_2\mathbf{H}\mathbf{C}\mathbf{H}.
\end{align}

In the following lemma we collect a few useful properties of these maps. 
\begin{lemma}\label{lm:se} Suppose that $\mathbf{H}\ge 0.$
    \begin{enumerate}
        \item For any $\mathbf{C}\ge 0$ we have $\mathbf{H}\mathbf{C}\mathbf{H}\le \Tr(\mathbf{H}\mathbf{C})\mathbf{H}$.  
        \item For any rank-1 Hermitian $\mathbf{H}$ we have $\mathbf{H}\mathbf{C}\mathbf{H}=\Tr(\mathbf{H}\mathbf{C})\mathbf{H}.$
        \item If $0\le \tau_1\ge\tau_2,$ then $\Sigma_{\mathrm{SE}}^{(\tau_1,\tau_2)}(\mathbf{C})\ge 0$ for any $\mathbf{C}\ge 0,$ and, more generally, $\Sigma_{\mathrm{SE}}^{(\tau_1,\tau_2)}(\mathbf{C})\ge \Sigma_{\mathrm{SE}}^{(\tau_1,\tau_2)}(\mathbf{C}')$ for any $\mathbf{C}\ge \mathbf{C}'.$
        \item If $\tau_1<0$ or $\tau_2>\tau_1$, then it is no longer true in general that $\Sigma_{\mathrm{SE}}^{(\tau_1,\tau_2)}(\mathbf{C})\ge 0$ for any $\mathbf{C}\ge 0.$  
        \item $\Sigma_{\mathrm{SE}}^{(\tau_1+\tau,\tau_2+\tau)}(\mathbf C)\ge \Sigma_{\mathrm{SE}}^{(\tau_1,\tau_2)}(\mathbf C)$ for any $\mathbf C\ge 0$ and $\tau\ge 0$.
        \item $\Sigma_\rho(\mathbf{C})\ge 0$ for any $\mathbf{C}\ge 0$ and, more generally, $\Sigma_\rho(\mathbf{C})\ge \Sigma_\rho(\mathbf{C}')$ for any $\mathbf{C}\ge \mathbf{C}'.$     
    \end{enumerate}
\end{lemma}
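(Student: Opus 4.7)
The plan is to establish parts 1 and 2 first (since they are the backbone for parts 3 and 5), then deduce the PSD and monotonicity statements by linearity, construct explicit counterexamples for part 4, and finally handle $\Sigma_\rho$ via a Jensen/variance argument.

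For \textbf{part 1}, I would use the spectral decomposition of $\mathbf{C} \ge 0$: write $\mathbf{C} = \sum_k c_k \mathbf{u}_k \otimes \mathbf{u}_k$ with $c_k \ge 0$, so that by linearity it suffices to verify $(\mathbf{H}\mathbf{u})(\mathbf{H}\mathbf{u})^T \le \langle \mathbf{u}, \mathbf{H}\mathbf{u}\rangle \mathbf{H}$ for any single vector $\mathbf{u}$. Testing against an arbitrary $\mathbf{v}$, this reduces to $\langle \mathbf{v}, \mathbf{H}\mathbf{u}\rangle^2 \le \langle \mathbf{u}, \mathbf{H}\mathbf{u}\rangle \langle \mathbf{v}, \mathbf{H}\mathbf{v}\rangle$, which is exactly the Cauchy--Schwarz inequality for the semi-inner product $(\mathbf{u},\mathbf{v}) \mapsto \langle \mathbf{u}, \mathbf{H}\mathbf{v}\rangle$ associated with the PSD operator $\mathbf{H}$. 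For \textbf{part 2}, with $\mathbf{H} = h\, \boldsymbol{\phi}\otimes\boldsymbol{\phi}$, one checks directly that both $\mathbf{H}\mathbf{C}\mathbf{H}$ and $\Tr(\mathbf{H}\mathbf{C})\mathbf{H}$ equal $h^2 \langle \boldsymbol{\phi}, \mathbf{C}\boldsymbol{\phi}\rangle\, \boldsymbol{\phi}\otimes\boldsymbol{\phi}$; equivalently, Cauchy--Schwarz is saturated in the rank-one case.

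For \textbf{part 3}, if $\tau_2 \ge 0$ then part 1 gives $\tau_2 \mathbf{H}\mathbf{C}\mathbf{H} \le \tau_2 \Tr(\mathbf{H}\mathbf{C})\mathbf{H}$, so $\Sigma_{\mathrm{SE}}^{(\tau_1,\tau_2)}(\mathbf{C}) \ge (\tau_1-\tau_2)\Tr(\mathbf{H}\mathbf{C})\mathbf{H} \ge 0$ since $\tau_1 \ge \tau_2$ and $\Tr(\mathbf{H}\mathbf{C}) \ge 0$. If $\tau_2 < 0$, both $\tau_1 \Tr(\mathbf{H}\mathbf{C})\mathbf{H}$ and $-\tau_2 \mathbf{H}\mathbf{C}\mathbf{H}$ are manifestly PSD (the latter because $\langle \mathbf{v}, \mathbf{H}\mathbf{C}\mathbf{H}\mathbf{v}\rangle = \langle \mathbf{H}\mathbf{v}, \mathbf{C}\,\mathbf{H}\mathbf{v}\rangle \ge 0$). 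Monotonicity follows by applying the PSD statement to $\mathbf{C} - \mathbf{C}' \ge 0$ and using linearity of $\Sigma_{\mathrm{SE}}^{(\tau_1,\tau_2)}$. \textbf{Part 5} is immediate: the difference $\Sigma_{\mathrm{SE}}^{(\tau_1+\tau,\tau_2+\tau)}(\mathbf{C}) - \Sigma_{\mathrm{SE}}^{(\tau_1,\tau_2)}(\mathbf{C}) = \tau\bigl(\Tr(\mathbf{H}\mathbf{C})\mathbf{H} - \mathbf{H}\mathbf{C}\mathbf{H}\bigr)$ is PSD by part 1 whenever $\tau \ge 0$.

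For \textbf{part 4}, I would pick $\mathbf{H}$ with two positive eigenvalues $h_1,h_2$ on orthogonal vectors $\boldsymbol{\phi}_1,\boldsymbol{\phi}_2$, and set $\mathbf{C} = \boldsymbol{\phi}_1\otimes\boldsymbol{\phi}_1$. A direct computation gives
\begin{equation*}
\Sigma_{\mathrm{SE}}^{(\tau_1,\tau_2)}(\mathbf{C}) = (h_1\tau_1 - h_1^2\tau_2)\,\boldsymbol{\phi}_1\otimes\boldsymbol{\phi}_1 + h_1 h_2 \tau_1\, \boldsymbol{\phi}_2\otimes\boldsymbol{\phi}_2,
\end{equation*}
whose PSD-ness requires both $\tau_1 \ge 0$ and $\tau_1 \ge h_1\tau_2$. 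If $\tau_1 < 0$ the second coefficient is negative; if $\tau_2 > \tau_1$ (even with $\tau_1 \ge 0$), one can rescale $\mathbf{H}$ so that $h_1 > \tau_1/\tau_2$, violating the first coefficient.

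Finally, for \textbf{part 6}, writing $\mathbf{H}_{\mathbf{x}} \equiv \mathbf{x}\otimes\mathbf{x}$ one has $\langle \mathbf{x}, \mathbf{C}\mathbf{x}\rangle \mathbf{x}\otimes\mathbf{x} = \mathbf{H}_{\mathbf{x}}\mathbf{C}\mathbf{H}_{\mathbf{x}}$ and $\mathbf{H} = \mathbb{E}[\mathbf{H}_{\mathbf{x}}]$, so
\begin{equation*}
\langle \mathbf{v}, \Sigma_\rho(\mathbf{C})\mathbf{v}\rangle = \mathbb{E}\bigl\|\mathbf{C}^{1/2}\mathbf{H}_{\mathbf{x}}\mathbf{v}\bigr\|^2 - \bigl\|\mathbb{E}[\mathbf{C}^{1/2}\mathbf{H}_{\mathbf{x}}\mathbf{v}]\bigr\|^2 \ge 0
\end{equation*}
by Jensen's inequality (a variance is non-negative). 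Monotonicity follows again from linearity applied to $\mathbf{C} - \mathbf{C}' \ge 0$. I expect no real obstacle: the only slightly delicate step is the rank-one reduction in part 1, which must be phrased carefully in the infinite-dimensional setting (so one works with the spectral measure of $\mathbf{C}$ rather than an eigen-sum if $\mathbf{C}$ lacks a discrete spectrum), but this is a standard maneuver.
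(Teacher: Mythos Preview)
Your proposal is correct and close in spirit to the paper's proof, with minor methodological variations and one arithmetic slip to fix.

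\textbf{Part 1.} The paper takes a slightly different route: instead of decomposing $\mathbf{C}$ spectrally and invoking Cauchy--Schwarz for the $\mathbf{H}$-inner product, it tests against an arbitrary $\mathbf{u}$, substitutes $\mathbf{u}\mapsto\mathbf{H}^{1/2}\mathbf{u}$, and uses that for a unit vector $\mathbf{v}$ and PSD operator $A=\mathbf{H}^{1/2}\mathbf{C}\mathbf{H}^{1/2}$ one has $\langle\mathbf{v},A\mathbf{v}\rangle\le\Tr A$. This sidesteps the spectral-measure subtlety you flag at the end; your argument is equally valid but the paper's avoids the rank-one reduction entirely.

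\textbf{Part 3.} The paper avoids your case split on the sign of $\tau_2$ by writing $\Sigma_{\mathrm{SE}}^{(\tau_1,\tau_2)}(\mathbf{C})=\tau_1\bigl[\Tr(\mathbf{H}\mathbf{C})\mathbf{H}-\mathbf{H}\mathbf{C}\mathbf{H}\bigr]+(\tau_1-\tau_2)\mathbf{H}\mathbf{C}\mathbf{H}$, with both terms PSD under $0\le\tau_1\ge\tau_2$. Your split works too; the paper's is just more compact.

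\textbf{Part 4.} Your computation is off: with $\mathbf{C}=\boldsymbol{\phi}_1\otimes\boldsymbol{\phi}_1$ one gets $\Tr(\mathbf{H}\mathbf{C})\mathbf{H}=h_1^2\,\boldsymbol{\phi}_1\otimes\boldsymbol{\phi}_1+h_1h_2\,\boldsymbol{\phi}_2\otimes\boldsymbol{\phi}_2$, so the $\boldsymbol{\phi}_1$-coefficient is $h_1^2(\tau_1-\tau_2)$, not $h_1\tau_1-h_1^2\tau_2$. With the correct coefficient the case $\tau_2>\tau_1$ fails immediately for any $h_1>0$, so no rescaling of $\mathbf{H}$ is needed. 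The paper instead uses two separate counterexamples (taking $\mathbf{H}=\mathbf{1}$ with rank-one $\mathbf{C}$ for $\tau_1<0$, and rank-one $\mathbf{H}$ combined with part 2 for $\tau_2>\tau_1$); your single two-eigenvalue construction is arguably cleaner once the arithmetic is fixed.

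\textbf{Parts 2, 5, 6.} Essentially identical to the paper. For part 6 the paper reduces to rank-one $\mathbf{C}=\mathbf{v}\otimes\mathbf{v}$ and applies $\mathbb{E}[f^2]\ge(\mathbb{E}[f])^2$ with $f(\mathbf{x})=\langle\mathbf{u},\mathbf{x}\rangle\langle\mathbf{x},\mathbf{v}\rangle$; your $\mathbf{C}^{1/2}$ formulation handles general $\mathbf{C}$ directly and is the same variance argument.
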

\begin{proof}\mbox{}
    \begin{enumerate}
        \item We need to show that for any vector $\mathbf u$\begin{equation}\label{eq:trhchc1}\langle \mathbf u, \mathbf{H}\mathbf{C}\mathbf{H}\mathbf u\rangle\le \Tr(\mathbf{H}\mathbf{C})\langle \mathbf u, \mathbf{H}\mathbf u\rangle.\end{equation} By considering $\mathbf u=\mathbf{H}^{\frac{1}{2}}\mathbf u$ and rescaling $\mathbf v=\mathbf u/\|\mathbf u\|$, it is sufficient to show that $\langle \mathbf v, \mathbf{H}^{\frac{1}{2}}\mathbf{C}\mathbf{H}^{\frac{1}{2}}\mathbf v\rangle\le \Tr(\mathbf{H}\mathbf{C})$ for any unit vector $\mathbf v$. But this just follows from the positive semi-definiteness of $\mathbf{H}^{\frac{1}{2}}\mathbf{C}\mathbf{H}^{\frac{1}{2}}$ and the cyclic property of the trace:
        \begin{equation}
            \langle \mathbf v, \mathbf{H}^{\frac{1}{2}}\mathbf{C}\mathbf{H}^{\frac{1}{2}}\mathbf v\rangle\le \Tr(\mathbf{H}^{\frac{1}{2}}\mathbf{C}\mathbf{H}^{\frac{1}{2}})=\Tr(\mathbf{H}\mathbf{C}).
        \end{equation}
        \item Immediate.
        \item For $\mathbf C\ge 0$ we have $
            \Sigma_{\mathrm{SE}}^{(\tau_1,\tau_2)}(\mathbf{C}) = \tau_1[\Tr(\mathbf{H}\mathbf{C})\mathbf{H}-\mathbf{H}\mathbf{C}\mathbf{H}]+(\tau_2-\tau_1)\mathbf{H}\mathbf{C}\mathbf{H}\ge 0
        $
        by statement 1 and positive semi-definiteness of $\mathbf{H}\mathbf{C}\mathbf{H}.$ 
        
        Monotonicity follows from the linearity of $\Sigma_{\mathrm{SE}}^{(\tau_1,\tau_2)}$.
        \item If $\tau_1<0,$ consider $\mathbf{H}=\mathbf 1$ and a rank-1 $\mathbf{C}$. Then $\Sigma_{\mathrm{SE}}^{(\tau_1,\tau_2)}(\mathbf{C})=\tau_1(\Tr \mathbf{C})\mathbf 1-\tau_2 \mathbf{C},$ which is not positive semi-definite unless the space is 1-dimensional.

        If $\tau_2>\tau_1$, consider a rank-1 $\mathbf{H}$. By Statement 2, $\Sigma_{\mathrm{SE}}^{(\tau_1,\tau_2)}(\mathbf{C})=(\tau_1-\tau_2)\mathbf{H}\mathbf{C}\mathbf{H},$ which is negative semi-definite. 
        \item Follows from statement 1.
        \item It suffices to show that for any two vectors $\mathbf u,\mathbf v$ we have $\langle\mathbf u,\Sigma_\rho(\mathbf v\otimes\mathbf v)\mathbf u\rangle\ge 0,$ i.e.
        \begin{equation}
            \mathbb E_{\mathbf x\sim\rho}[\langle \mathbf u,\mathbf x\rangle^2\langle\mathbf x,\mathbf v\rangle^2]\ge \mathbb E_{\mathbf x\sim\rho}^2[\langle \mathbf u,\mathbf x\rangle\langle\mathbf x,\mathbf v\rangle]. 
        \end{equation}
        But this is just a special case of the Cauchy inequality $\mathbb E^2_{\mathbf x\sim\rho}[f]\le \mathbb E_{\mathbf x\sim\rho}[f^2]$ with $f(\mathbf x)=\langle \mathbf u,\mathbf x\rangle\langle\mathbf x,\mathbf v\rangle.$

        Monotonicity again follows from the linearity of $\Sigma_\rho$.
    \end{enumerate}
\end{proof}
We can now prove Proposition \ref{prop:ubloss}.
\begin{proof}[Proof of Proposition \ref{prop:ubloss}] By statements 3 and 6 of Lemma, both $\Sigma_{\mathrm{SE}}^{(\tau_1,\tau_2)}$ and $\Sigma_\rho$ preserve positive semi-definiteness and are monotone in the operator sense. 

Let us prove that $\Sigma_{\rho}(\mathbf C)\le \Sigma_{\mathrm{SE}}^{(\tau_1,\tau_2)}(\mathbf C)$ for all $\mathbf C\ge 0$ implies $L_t^{(\rho)}\le L_t^{(\tau_1,\tau_2)}$ for all $t$.  
It suffices to show that for all $t=0,1\ldots$ we have $0\le \mathbf{C}_t\le \mathbf{C}_t',$ where $\mathbf{C}_t$ and $\mathbf{C}_t'$ denote the $\mathbf{C}$-components of the matrices $\mathbf M_t,\mathbf M_t'$ corresponding to the dynamics with $\Sigma_\rho$ and $\Sigma_{\mathrm{SE}}^{(\tau_1,\tau_2)},$ respectively. In turn, it is sufficient to show that $0\le \mathbf M_t\le \mathbf M_t'$ for all $t$. By induction, suppose that $0\le \mathbf M_{t}\le\mathbf M_{t}'$ holds for some $t.$ Then, in particular, 
\begin{equation}
    0\le (1, \mathbf a_{t}^T)\mathbf M_t  (1, \mathbf a_{t}^T)^T\le (1, \mathbf a_{t}^T)\mathbf M_t'  (1, \mathbf a_{t}^T)^T.
\end{equation} 
Then, using the hypothesis of the proposition and the monotonicity of $\Sigma_{\mathrm{SE}}^{(\tau_1,\tau_2)}$,
\begin{align}
    0\le{}& {\Sigma_\rho}\Big((1, \mathbf a_{t}^T)\mathbf M_t  (1, \mathbf a_{t}^T)^T\Big)\\
    \le{}& {\Sigma_{\mathrm{SE}}^{(\tau_1,\tau_2)}}\Big((1, \mathbf a_{t}^T)\mathbf M_t  (1, \mathbf a_{t}^T)^T\Big)\\
    \le{}& {\Sigma_{\mathrm{SE}}^{(\tau_1,\tau_2)}}\Big((1, \mathbf a_{t}^T)\mathbf M'_t  (1, \mathbf a_{t}^T)^T\Big).
\end{align}
It follows that the noise term \eqref{eq:mnoiseterm} of the $\Sigma_\rho$-dynamics is positive semi-definite and dominated by the respective term of the ${\Sigma_{\mathrm{SE}}^{(\tau_1,\tau_2)}}$-dynamics. The same holds for the signal terms \eqref{eq:second_moments_dyn0}. It follows that $0\le \mathbf M_{t+1}\le \mathbf M_{t+1}',$ as desired. 

The proof of the converse statement is completely analogous.
\end{proof}

\section{Proof of Theorem \ref{th:lossexp} }\label{sec:lossexpproof}

We will derive a more general, non-stationary version of loss expansion \eqref{eq:lossexp}, which will imply the simplified stationary version.

Consider the $\mathbf M$ matrix iterations \eqref{eq:second_moments_dyn0}--\eqref{eq:mnoiseterm}, in which we, as assumed, use the SE approximation $\Sigma_{\mathrm{SE}}^{(\tau_1,\tau_2)}$ instead of $\Sigma_\rho$:  
\begin{align}
    \label{eq:second_moments_dyn01}
    \mathbf M_{t+1} 
     ={}&\bigg[\begin{pmatrix}1 & \mathbf{b}_{t}^T \\ 0 & D_t  \end{pmatrix}+ \mathbf{H} \begin{pmatrix} -\alpha_t\\ \mathbf c_t  \end{pmatrix}(1, \mathbf a_{t}^T)\bigg] \mathbf M_{t}
     \bigg[\begin{pmatrix}1 & \mathbf{b}_{t}^T \\ 0 & D_t  \end{pmatrix}+ \mathbf{H} \begin{pmatrix} -\alpha_t\\ \mathbf c_t  \end{pmatrix}(1, \mathbf a_{t}^T)\bigg]^T \\
     &+\frac{1}{|B|}{\Sigma_{\mathrm{SE}}^{(\tau_1,\tau_2)}}\Big((1, \mathbf a_{t}^T)\mathbf M_t  (1, \mathbf a_{t}^T)^T\Big)\otimes \begin{pmatrix} -\alpha_t\\ \mathbf c_t  \end{pmatrix}\begin{pmatrix} -\alpha_t & \mathbf c_t  \end{pmatrix},\label{eq:mnoiseterm1}\\
     \Sigma_{\mathrm{SE}}^{(\tau_1,\tau_2)}(\mathbf{C})={}& \tau_1\Tr[\mathbf{H}\mathbf{C}]\mathbf{H}-\tau_2\mathbf{H}\mathbf{C}\mathbf{H}\label{eq:se1}.
\end{align}

Let us write these iterations in the form \begin{equation}\mathbf{M}_{t+1}= F_t\mathbf{M}_t=(A_t+P_t)\mathbf{M}_t,\end{equation} where $F_t,A_t,P_t$ are linear operators defining this transformation: 
\begin{enumerate}
\item $F_t$ is the full transformation; 
\item $P_t$ represents the part of $F_t$ associated with the term $\tau_1\Tr[\mathbf{H}\mathbf{C}]\mathbf{H}$ in SE Eq. \eqref{eq:se1} as part of the noisy term \eqref{eq:mnoiseterm1};
\item $A_t$ represents the remaining part of $F_t$, i.e. the noiseless term \eqref{eq:second_moments_dyn01} and the part $\tau_2\mathbf H\mathbf C\mathbf H$ of the noisy term \eqref{eq:mnoiseterm1}. 
\end{enumerate}

The matrix $\mathbf M_t$ has a diagonal component $\mathbf M_{t,\mathrm{diag}}$ w.r.t. the operator $\mathbf H$ (below we refer to such matrices as \emph{$\mathbf H$-diagonal}): given the eigenbasis $\mathbf e_k$ of $\mathbf{H}$, we can represent $\mathbf M_t=\sum_{k,l}(\mathbf e_k\otimes\mathbf e_l)\otimes Z_{t,k,l}$ with some matrices $Z_{t,k,l}\in\mathbb R^{(1+M)\times(1+M)},$ and then $\mathbf M_{t,\mathrm{diag}}=\sum_{k}(\mathbf e_k\otimes\mathbf e_k)\otimes Z_{t,k,k}$. Whereas the full matrices $\mathbf M_t$ live in the space $(\mathcal H\otimes \mathbb R^{1+M})^{\otimes 2}$, the matrices $\mathbf M_{t,\mathrm{diag}}$ naturally live in the space $\mathcal H\otimes (\mathbb R^{1+M})^{\otimes 2}$. We will argue now that we need to only keep track of this diagonal component of $\mathbf M_t$ during the iterations. 

Observe first that the diagonal part $\mathbf M_{t,\mathrm{diag}}$ evolves independently of the off-diagonal part of $\mathbf M_t$. Indeed, if $\mathbf M_t=(\mathbf e_k\otimes\mathbf e_l)\otimes Z$ with some $Z\in\mathbb R^{(1+M)\times(1+M)},$ then also $A_t\mathbf{M}_t=(\mathbf e_k\otimes\mathbf e_l)\otimes Z'$ with some $Z'\in\mathbb R^{(1+M)\times(1+M)}.$  On the other hand,  $P_t\mathbf M_t$ depends on $\mathbf M_t$ only through $\mathbf M_{t,\mathrm{diag}}$ and produces an $\mathbf H$-diagonal result.

Note also that the averaged loss $L_t$ only depends on the diagonal part of $\mathbf M_t$:
\begin{equation}
    L_t = \tfrac{1}{2}\Tr[\mathbf{H} \mathbf{C}_t]= \tfrac{1}{2}\Tr[\mathbf{H} \mathbf{C}_{t,\mathrm{diag}}].
\end{equation}
It follows that $L_t$ is completely determined by the self-consistent evolution of the diagonal components $\mathbf M_{t,\mathrm{diag}}$. 

Since $\mathbf w_0=0, \mathbf u_0=0$, and accordingly $\Delta \mathbf w_0=-\mathbf w_*$, we can then take the initial second moment matrix $\mathbf M_0$ to only have a $\mathbf H$-diagonal nontrivial component $\mathbf C_0,$ i.e.
$\mathbf M_0=(\begin{smallmatrix}\mathbf C_0 & 0\\  0 & 0\end{smallmatrix}),$ where $\mathbf C_0=\operatorname{diag}(c_k^2), c_k^2=\langle \mathbf e_k,\mathbf w_*\rangle^2$. Then,  
\begin{equation}\label{eq:lt12tr}
    L_t = \tfrac{1}{2}\Tr[\mathbf{H} \mathbf{C}_t]=\tfrac{1}{2}\langle \pmb \lambda \otimes (\begin{smallmatrix}
    1 & 0\\0 & 0
    \end{smallmatrix}), F_t F_{t-1}\cdots F_1(\begin{smallmatrix}
    \mathbf{C}_0&0\\0&0
    \end{smallmatrix})\rangle,  
\end{equation}
where $\pmb\lambda=\operatorname{diag}(\lambda_1,\lambda_2,\ldots)$ and the inner product is between matrices, $\langle A,B\rangle\equiv \Tr(AB^*).$

We can now describe the action of the operators $P_t$ and $A_t$ on $\mathbf H$-diagonal matrices $\mathbf M$. The operator $P_t$ is a rank-1 operator that can be written as 
\begin{equation}
P_t=\tfrac{\tau_1}{|B|}\pmb\lambda\otimes (\begin{smallmatrix} -\alpha_t\\ \mathbf c_t  \end{smallmatrix})(\begin{smallmatrix} -\alpha_t & \mathbf c_t^T\end{smallmatrix})\langle\pmb \lambda\otimes  (\begin{smallmatrix}
    1\\\mathbf a_t
    \end{smallmatrix})(\begin{smallmatrix}
    1&\mathbf a_t^T
    \end{smallmatrix}),\cdot\rangle.
\end{equation}
The operator $A_t$ is a system of $(1+M)\times(1+M)$-matrices $A_{t,\lambda}$ acting independently in each $\lambda$-subspace:  
\begin{equation}
    A_t = (A_{t,\lambda}),\quad
    A_{t,\lambda}Z=S_{\lambda,t}
    Z
    S_{\lambda,t}^T
    -\tfrac{\tau_2}{|B|} \lambda^2(\begin{smallmatrix}
    -\alpha_t   \\
    \mathbf c_t 
    \end{smallmatrix})
    (\begin{smallmatrix}
    1\\\mathbf a_t
    \end{smallmatrix})^T
    Z
    (\begin{smallmatrix}
    1\\\mathbf a_t
    \end{smallmatrix})
    (\begin{smallmatrix}
    -\alpha_t   \\
    \mathbf c_t 
    \end{smallmatrix})^T, \label{eq:atl1}
\end{equation}
with $S_{\lambda,t}$ defined in Eq. \eqref{eq:gen_iter3}. 

We obtain the desired loss expansion by substituting $F_t=P_t+A_t$ for each $t$ in Eq. \eqref{eq:lt12tr} and expand the result over various choices of the iterations $t_1,\ldots,t_m$ in which we choose the term $P$ while in the other iterations we choose the term $A$. Since $P_t$ is rank-1, the contribution of each configuration is split into a product of scalar propagators:
\begin{equation}\label{eq:lbinexp2}
    L_t = \frac{1}{2}\Big(V_{t+1}+\sum_{m=1}^t\sum_{0<t_1<\ldots<t_m<t+1} U_{t+1,t_m} U'_{t_m,t_{m-1}} U'_{t_{m-1}, t_{m-2}}\cdots U'_{t_{2},t_{1}}V'_{t_1}\Big).  
\end{equation}
Here the \emph{propagators} $U'_{t,s}, U_{t,s}, V'_{t}, V_{t}$  have the form
\begin{align}
    \label{eq:U_t_def}
    U'_{t,s} ={}& \tfrac{\tau_1}{|B|}\langle \pmb \lambda  \otimes(\begin{smallmatrix}
    1\\\mathbf a_t
    \end{smallmatrix})(\begin{smallmatrix}
    1&\mathbf a_t^T
    \end{smallmatrix}), A_{t-1}A_{t-2}\cdots A_{s+1} [\pmb\lambda\otimes(\begin{smallmatrix}
    -\alpha_s \\ \mathbf{c}_{s}  
    \end{smallmatrix})(\begin{smallmatrix}-\alpha_s & \mathbf c_s^T\end{smallmatrix})]\rangle\\
    ={}&\tfrac{\tau_1}{|B|}\sum_{k}\lambda_k^2 \langle   (\begin{smallmatrix}
    1\\\mathbf a_t
    \end{smallmatrix})(\begin{smallmatrix}
    1&\mathbf a_t^T
    \end{smallmatrix}), A_{t-1,\lambda_k}\cdots A_{s+1,\lambda_k} (\begin{smallmatrix}
    -\alpha_s \\ \mathbf{c}_{s}  
    \end{smallmatrix})(\begin{smallmatrix}-\alpha_s & \mathbf c_s^T\end{smallmatrix})\rangle,\\
    V'_{t} ={}& \langle \pmb \lambda \otimes (\begin{smallmatrix}
    1\\\mathbf a_t
    \end{smallmatrix})(\begin{smallmatrix}
    1&\mathbf a_t^T
    \end{smallmatrix}), A_{t-1}A_{t-2}\cdots A_{1} (\begin{smallmatrix}
    \mathbf C_0&0\\0&0
    \end{smallmatrix})\rangle\\
    ={}&\sum_k \lambda_kc_k^2 \langle  (\begin{smallmatrix}
    1\\\mathbf a_t
    \end{smallmatrix})(\begin{smallmatrix}
    1&\mathbf a_t^T
    \end{smallmatrix}), A_{t-1,\lambda_k}\cdots A_{1,\lambda_k} (\begin{smallmatrix}
    1&0\\0&0
    \end{smallmatrix})\rangle,
    \label{eq:vt'}
\end{align}
and $U_{t,s}, V_{t}$ are obtained from $U'_{t,s}, V'_{t}$ by replacing $(\begin{smallmatrix}
    1\\\mathbf a_t
    \end{smallmatrix})(\begin{smallmatrix}
    1&\mathbf a_t^T
    \end{smallmatrix})$ with $(\begin{smallmatrix}
    1 & 0\\ 0 & 0
    \end{smallmatrix})$ in the left arguments of scalar products in these formulas. The propagators $U,V$ are different from $U',V'$ in that the former end at the loss evaluation time $t$, whereas the latter end at one of the noise injection times $t_1,\ldots,t_m$. Each term in Eq. \eqref{eq:lbinexp2} contains exactly one signal propagator $V$ or $V'$ and some number of noise propagators $U$ or $U'$. 
        
In the stationary case the above formulas simplify as given in Eqs. \eqref{eq:vtvt'}, \eqref{eq:utut'}, \eqref{eq:lossexp}:  we can use a single time index $U'_{t,s}\equiv U'_{t-s}, U_{t,s}\equiv U_{t-s}$, and replace the products $A_{t-1}\ldots A_{s+1}$ by the power $A^{t-s-1}$.

\section{Propagator identities and bounds}\label{sec:oprepr}
In this section we collect auxiliary results that will be used in the proof of Theorem \ref{th:lexp}.

\subsection{General non-stationary case} \label{sec:propagators_nonstat}
Recall that in the general non-stationary case the loss expansion is given by Eqs. \eqref{eq:lbinexp2}-\eqref{eq:vt'}. 

We will derive a convenient  representation of the vector of loss  values,
\begin{equation}\label{eq:lvec}
    \mathbf L = (L_0,L_1,\ldots),
\end{equation}
in terms of suitable linear operators. First, consider the values
\begin{equation}\label{eq:l'binexp1}
    L'_t = \frac{1}{2}\Big(V'_{t+1}+\sum_{m=1}^t\sum_{0<t_1<\ldots<t_m<t+1} U'_{t+1,t_m} U'_{t_m,t_{m-1}} U'_{t_{m-1}, t_{m-2}}\cdots U'_{t_{2},t_{1}}V'_{t_1}\Big)  
\end{equation}
and the respective vector 
\begin{equation}
    \mathbf L' = (L'_0,L'_1,\ldots)^T
\end{equation}
Observe that 
\begin{equation}
    \mathbf L=\tfrac{1}{2}\mathbf V+\mathbb U\mathbf L',
\end{equation}
where
\begin{equation}\label{eq:vvec}
    \mathbf V = (V_1,V_2,\ldots)^T
\end{equation}
and $\mathbb U$ is the infinite matrix
\begin{equation}\label{eq:uop}
    (\mathbb U)_{ts} = \begin{cases}
        U_{t+1,s+1}, & t>s,\\
        0,& t\le s.
    \end{cases}
\end{equation}
Note now that $\mathbf L'$ satisfies the equation
\begin{equation}\label{eq:l'vec}
    \mathbf L'=\tfrac{1}{2}\mathbf V'+\mathbb U'\mathbf L',
\end{equation}
where
\begin{align}
    \mathbf V' = {}&(V'_1,V'_2,\ldots)^T,\\
    (\mathbb U')_{ts} ={}& \begin{cases}
        U'_{t+1,s+1}, & t>s,\\
        0,& t\le s.
    \end{cases}
\end{align}
The representation \eqref{eq:l'binexp1} corresponds to the formal solution of Eq. \eqref{eq:l'vec}:
\begin{equation}
    \mathbf L' = \tfrac{1}{2}(1-\mathbb U')^{-1}\mathbf V'=\tfrac{1}{2}(\mathbf V'+\mathbb U'\mathbf V'+\mathbb U'^2\mathbf V'+\ldots).
\end{equation}
When viewed in suitable Banach spaces, these operator representations can be conveniently used to derive upper bounds on $L_t$. Specifically, let $a_t>0$ be a positive sequence and consider the Banach space of sequences with the norm 
\begin{equation}
\label{eq:lnorm}
    \|\mathbf f\|=\sup_t |a_t  f_t|,\quad \mathbf f=(f_0,f_1,\ldots).
\end{equation}
Then 
\begin{align}
    \|\mathbb U\| ={}& \sup_{t}\sum_{s}|U_{t+1,s+1}|\frac{a_t}{a_s},\\
    \|\mathbb U'\| ={}& \sup_{t}\sum_{s}|U'_{t+1,s+1}|\frac{a_t}{a_s}.
\end{align}
Assuming $\|\mathbb U'\|<1$, we get
\begin{equation}
    \|\mathbf L'\|=\tfrac{1}{2}\|(1-\mathbb U')\mathbf V'\|\le\frac{\|\mathbf V'\|}{2(1-\|\mathbb U'\|)}
\end{equation}
and then 
\begin{equation}
    \|\mathbf L\|=\|\tfrac{1}{2}\mathbf V+\mathbb U\mathbf L'\|\le\tfrac{1}{2}\|\mathbf V\|+\|\mathbb U\| \|\mathbf L'\|.
\end{equation}

For example, choosing $a_t=(t+1)^\xi, t=0,1,\ldots,$ with a constant exponent $\xi\ge 0$, we obtain a simple sufficient condition for the bound $L_t=O(t^{-\xi})$:
\begin{prop}\label{prop:ltboundvtut}
    Suppose that 
    \begin{align}
        |V_t|\le{}& C_{V}t^{-\xi},\quad t=1,2,\ldots\\
        |V'_t|\le{}& C_{V'}t^{-\xi},\quad t=1,2,\ldots\\
        C_{U}={}&\sup_{t\ge 1}\sum_{s=1}^{t-1} |U_{t,s}|\Big(\frac{t}{s}\Big)^{\xi} < \infty
        \\
        C_{U'}={}&\sup_{t\ge 1}\sum_{s=1}^{t-1} |U'_{t,s}|\Big(\frac{t}{s}\Big)^{\xi} < 1
    \end{align}
    with some constants $C_{V}, C_{V'}, C_{U}, C_{U'}$. Then 
    \begin{equation}
        L_t\le C_L(t+1)^{-\xi},\quad C_{L}=\tfrac{1}{2}C_V+C_{U}\frac{C_{V'}}{2(1-C_{U'})}.
    \end{equation}
\end{prop}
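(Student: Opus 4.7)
The plan is to apply the operator-theoretic framework developed just before the proposition statement. The key observation is that the hypotheses are tailored exactly so that, in the weighted Banach space defined by the norm $\|\mathbf{f}\|=\sup_t|a_t f_t|$ with the specific choice $a_t=(t+1)^\xi$, the quantities $\mathbf V, \mathbf V', \mathbb U, \mathbb U'$ have explicitly bounded norms, after which everything reduces to a Neumann-series estimate.

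First, I would translate each hypothesis into a norm bound in this weighted space. For the signal side, note that $(\mathbf V)_t = V_{t+1}$ in the indexing \eqref{eq:vvec}, so $a_t|(\mathbf V)_t|=(t+1)^\xi|V_{t+1}|\le C_V$, giving $\|\mathbf V\|\le C_V$ and analogously $\|\mathbf V'\|\le C_{V'}$. For the noise side, using the formulas for $\|\mathbb U\|,\|\mathbb U'\|$ displayed right before the proposition and reindexing $(t,s)\mapsto(t+1,s+1)$,
\begin{equation*}
\|\mathbb U\|=\sup_t\sum_s|U_{t+1,s+1}|\Big(\tfrac{t+1}{s+1}\Big)^{\xi}=\sup_{t'\ge 1}\sum_{s'=1}^{t'-1}|U_{t',s'}|\Big(\tfrac{t'}{s'}\Big)^{\xi}=C_U,
\end{equation*}
and similarly $\|\mathbb U'\|=C_{U'}<1$.

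Next I would invoke the operator equations \eqref{eq:l'vec} and $\mathbf L=\tfrac12\mathbf V+\mathbb U\mathbf L'$ derived above. Since $\|\mathbb U'\|=C_{U'}<1$, the Neumann series $(1-\mathbb U')^{-1}=\sum_{k\ge 0}(\mathbb U')^k$ converges absolutely in operator norm in this Banach space, so $\mathbf L'=\tfrac12(1-\mathbb U')^{-1}\mathbf V'$ is well-defined and satisfies
\begin{equation*}
\|\mathbf L'\|\le\frac{\|\mathbf V'\|}{2(1-\|\mathbb U'\|)}\le\frac{C_{V'}}{2(1-C_{U'})}.
\end{equation*}
Applying the triangle inequality to $\mathbf L=\tfrac12\mathbf V+\mathbb U\mathbf L'$ then yields $\|\mathbf L\|\le\tfrac12 C_V+C_U\cdot\tfrac{C_{V'}}{2(1-C_{U'})}=C_L$, and unpacking the norm gives $(t+1)^\xi L_t\le C_L$, which is the desired bound.

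I do not expect any real obstacle here: the proposition is essentially a corollary of the operator framework in \ref{sec:propagators_nonstat}, and the only thing requiring care is bookkeeping of the index shift between the entries of $\mathbf V,\mathbb U$ (indexed from $0$) and the propagators $V_t, U_{t,s}$ (indexed from $1$), together with verifying that the absolute values appearing in the hypotheses are what govern the operator norms — which they do, since the norm $\|\cdot\|$ is a weighted sup and $\|\mathbb U\|$ is a weighted $\ell^1\to\ell^\infty$-type row-sum norm. A minor point worth mentioning is that the convergence of the defining series \eqref{eq:l'binexp1} is automatic from $\|\mathbb U'\|<1$, so we do not need to separately justify exchanging the sum order.
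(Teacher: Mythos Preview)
Your proposal is correct and follows exactly the paper's own approach: the proposition is stated immediately after the operator framework of Section~\ref{sec:propagators_nonstat} as its direct specialization to $a_t=(t+1)^\xi$, and your translation of the four hypotheses into the bounds $\|\mathbf V\|\le C_V$, $\|\mathbf V'\|\le C_{V'}$, $\|\mathbb U\|=C_U$, $\|\mathbb U'\|=C_{U'}<1$ followed by the Neumann-series estimate is precisely what the paper does. The index-shift bookkeeping you flag is the only thing to check, and you handle it correctly.
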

In particular, by taking $\xi=0$ we see that if $\sup_{t\ge 1}\sum_{s=1}^{t-1} |U'_{t,s}|<1$ and $\sup_{t\ge 1}\sum_{s=1}^{t-1} |U_{t,s}|<\infty$, then uniform boundedness of $V_t, V_t'$ implies uniform boundedness of $L_t.$ 

\subsection{Stationary case} \label{sec:opformstat}
Recall the stationary version \eqref{eq:lossexp} of the loss expansion:
\begin{align}\label{eq:lexpstatio0}
    L_t ={}& \frac{1}{2}\Big(V_{t+1}+\sum_{m=1}^t\sum_{0<t_1<\ldots<t_m<t+1} U_{t+1-t_m} U'_{t_m-t_{m-1}} U'_{t_{m-1}- t_{m-2}}\cdots U'_{t_{2}-t_{1}}V'_{t_1}\Big).
\end{align}

It is will be occasionally convenient to slightly adjust  the operator formalism of previous section by considering two-sided sequences. We adjust the definitions \eqref{eq:lvec} and \eqref{eq:vvec} of the vectors $\mathbf L$ and $ \mathbf V$ by padding them by 0 to the left:
\begin{align}
\mathbf L={}&(\ldots,0,L_0,L_1,\ldots),\\
\mathbf V={}&(\ldots,0,V_1,V_2,\ldots).
\end{align}
Similar convention applies to $\mathbf L', \mathbf V'$.

We extend the definition \eqref{eq:uop} of operator $\mathbb U$ to double-sided sequences, also taking into account that due to the stationarity $\mathbb U$ is now translation equivariant:
\begin{equation}\label{eq:uoptreqv}
    (\mathbb U)_{ts} = \begin{cases}
        U_{t-s}, & t>s\\
        0,& t\le s
    \end{cases}\equiv (\mathbb U)_{t-s}.
\end{equation}
Similar convention applies to $\mathbb U'.$

We now establish a useful general  result showing that $U_t=O(t^{-\xi_U})$ with $\xi_U>1$ implies a similar bound for the matrix elements of $(1-\mathbb U)^{-1}$. This can be easily done if we assume not just $U_\Sigma<1$ and $U_t=O(t^{-\xi_U})$ as in Theorem \ref{th:lexp}, but rather $U_t\le C' t^{-\xi_U}$ with a sufficiently small $C'$:

\begin{lemma}\label{lm:u2}
    Suppose that $U_t=0$ for $t\le 0$ and $0\le U_t\le C't^{-\xi_U}$ for $t>0$ with some $C'$ and $\xi_U>1$. Then, if $C'$ is sufficiently small, we have $(\mathbb U^2)_{t}\le \tfrac{1}{2}C't^{-\xi_U}$ for all $t>0$. By iterating this bound, we get \begin{equation}\label{eq:1-u}
        ((1-\mathbb U)^{-1})_{t}\begin{cases}\le 2C't^{-\xi_U},&t> 0\\
        =1, &t=0\\
        =0,&t<0        
        \end{cases}
    \end{equation}
\end{lemma}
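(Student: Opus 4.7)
The plan is to recognize $\mathbb U$ as a strictly lower-triangular, translation-equivariant convolution operator on two-sided sequences, so that $(\mathbb U^k)_t$ is simply the $k$-fold convolution of the one-sided sequence $(U_t)_{t\ge 1}$. In particular $(\mathbb U^k)_t=0$ for $t\le 0$, and for $t\ge 2$ we have $(\mathbb U^2)_t=\sum_{s=1}^{t-1}U_s U_{t-s}$ (with $(\mathbb U^2)_1=0$ trivially).

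First I would prove the key convolution bound: using $U_r\le C' r^{-\xi_U}$,
\[
(\mathbb U^2)_t\le C'^{\,2}\sum_{s=1}^{t-1}s^{-\xi_U}(t-s)^{-\xi_U}.
\]
Splitting the sum at $s=t/2$ and using $(t-s)^{-\xi_U}\le 2^{\xi_U}t^{-\xi_U}$ on the lower half (and symmetrically on the upper half), together with $\sum_{s\ge 1}s^{-\xi_U}\le \zeta(\xi_U)<\infty$ (since $\xi_U>1$), yields
\[
(\mathbb U^2)_t\le K_{\xi_U}\,C'^{\,2}\,t^{-\xi_U},\qquad K_{\xi_U}:=2^{\xi_U+1}\zeta(\xi_U).
\]
Choosing $C'$ small enough that $K_{\xi_U}C'\le \tfrac12$ gives the first claim $(\mathbb U^2)_t\le\tfrac12 C' t^{-\xi_U}$.

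For the bound \eqref{eq:1-u} on $(1-\mathbb U)^{-1}=\sum_{k\ge 0}\mathbb U^k$, I would iterate the same estimate. Let $A_k:=\sup_{t\ge 1}t^{\xi_U}(\mathbb U^k)_t$. Applying the split-at-$t/2$ argument to $(\mathbb U^{k+1})_t=\sum_{s=1}^{t-1}(\mathbb U^k)_{t-s}U_s$ gives $A_{k+1}\le K_{\xi_U}C'\,A_k\le \tfrac12 A_k$, hence $A_k\le C'\cdot 2^{-(k-1)}$. Summing the geometric series yields $\sum_{k\ge 1}(\mathbb U^k)_t\le 2C' t^{-\xi_U}$ for $t>0$; the cases $t=0$ (where only the identity contributes) and $t<0$ (where all $\mathbb U^k$ vanish by triangularity) are immediate.

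The only nontrivial ingredient is the power-law convolution estimate in the first step; everything else is then a clean geometric iteration of that single bound, so I do not expect any genuine obstacle—the quantitative content lies entirely in the explicit constant $K_{\xi_U}$ and the smallness requirement $C'\le 1/(2K_{\xi_U})$ it imposes.
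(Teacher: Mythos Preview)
Your proposal is correct and follows essentially the same route as the paper: both use the split-at-$t/2$ bound on the convolution sum to obtain $(\mathbb U^2)_t\le \big[2^{\xi_U+1}\zeta(\xi_U)\,C'\big]\,C' t^{-\xi_U}$, make the bracketed factor $\le\tfrac12$ by choosing $C'$ small, and then sum the geometric series in $k$. Your explicit introduction of $A_k$ and the recursion $A_{k+1}\le K_{\xi_U}C'A_k$ just spells out in detail what the paper abbreviates as ``repeatedly applying'' the convolution bound.
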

\begin{proof}
    We have for $t>0$
    \begin{align}
        (\mathbb U^2)_{t}={}&\sum_{r=1}^{t-1}U_{t-r}U_{r}\\
        \le{}&2\sum_{r=1}^{t/2}C'(t/2)^{-\xi_U}U_{r}\\
        \le{}&C't^{-\xi_U} 2^{1+\xi_U}\sum_{r=1}^{t/2}U_{r}\\
        \le{}&\Big[ 2^{1+\xi_U}C'\sum_{r=1}^\infty r^{-\xi_U}\Big]C't^{-\xi_U}.
    \end{align}
    The factor $2^{1+\xi_U}C'\sum_{r=1}^\infty r^{-\xi_U}$ can be made less than $1/2$ by taking $C'$ small enough.

    The relation \eqref{eq:1-u} follows by expanding $(1-\mathbb U)^{-1}=1+\mathbb U+\mathbb U^2+\ldots$ and repeatedly applying $\sum_{r=1}^{t-1}[C'(t-r)^{-\xi_U}][C'r^{-\xi_U}]\le \tfrac{1}{2}C'(t-r)^{-\xi_U}.$
\end{proof}
Now we prove a more subtle statement that only uses the weaker assumptions made in Theorem \ref{th:lexp}.
\begin{lemma}\label{lm:utpow}
    Suppose that $U_\Sigma<1$ and $U_t=O(t^{-\xi})$ for some $\xi>1$. Then $((1-\mathbb U)^{-1})_t=O(t^{-\xi}).$
\end{lemma}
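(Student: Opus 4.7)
The plan is to prove $\phi_t := ((1-\mathbb{U})^{-1})_t = O(t^{-\xi})$ by strong induction on $t$, starting from the Toeplitz convolution recurrence $\phi_0 = 1$ and $\phi_t = \sum_{s=1}^{t} U_s \phi_{t-s}$ for $t \ge 1$. Nonnegativity of $U_s$ propagates to $\phi_t \ge 0$, and summing the recurrence yields the finite $\ell^1$ total $\Phi := \sum_{t \ge 0} \phi_t = (1-U_\Sigma)^{-1}$, which will serve as a uniform absolute-summability control throughout. I also fix $C>0$ and $t_0$ so that $U_t \le C t^{-\xi}$ for all $t \ge t_0$, provided by the hypothesis $U_t = O(t^{-\xi})$.

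The heart of the argument is an asymmetric split of the convolution sum at $s = \epsilon t$, where $\epsilon \in (0,1)$ is chosen small enough that $\theta := (1-\epsilon)^{-\xi} U_\Sigma < 1$; this is possible precisely because $U_\Sigma < 1$. For $1 \le s < \epsilon t$ the inductive hypothesis gives $\phi_{t-s} \le K (t-s)^{-\xi} \le K (1-\epsilon)^{-\xi} t^{-\xi}$, and multiplying by $U_s$ and summing contributes at most $K \theta \, t^{-\xi}$. For $\epsilon t \le s \le t-1$, provided $t \ge t_0/\epsilon$, one uses $U_s \le C s^{-\xi} \le C(\epsilon t)^{-\xi}$ and bounds the residual $\sum_{r=1}^{\lfloor t(1-\epsilon)\rfloor} \phi_r$ by $\Phi$, giving a contribution at most $C \epsilon^{-\xi} \Phi \, t^{-\xi}$. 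Adding the isolated term $U_t \le C t^{-\xi}$, the induction closes as soon as $K$ satisfies the linear inequality $K(1-\theta) \ge C(1 + \epsilon^{-\xi}\Phi)$. The base case on the finitely many indices $t \le t_0/\epsilon$ is handled by inflating $K$ to dominate the finite values $\phi_t \cdot t^\xi$ on that range.

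The main obstacle is that the symmetric split at $s = t/2$ used in the proof of Lemma \ref{lm:u2} produces the factor $2^\xi U_\Sigma$ in place of $\theta$, which is not automatically less than $1$ under the weaker hypothesis $U_\Sigma < 1$; consequently Lemma \ref{lm:u2} cannot be reused verbatim and its ``small-constant'' conclusion is not accessible by a simple rescaling. The asymmetric split resolves this: taking $\epsilon$ sufficiently close to $0$ makes $\theta = (1-\epsilon)^{-\xi} U_\Sigma$ as close to $U_\Sigma$ as desired, and in particular strictly below $1$, at the cost of a larger prefactor $\epsilon^{-\xi}$ on the far-tail contribution. Since we only need the existence of some finite $K$ and not its sharp value, this cost is harmless, and the induction closes cleanly.
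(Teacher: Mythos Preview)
Your proof is correct and takes a genuinely different route from the paper's. The paper splits $U=A+B$ into a finite initial segment $A$ (with support $\le t_0$ and $\sum_t A_t=a<1$) and a small-$\ell^1$ tail $B$ (with $\sum_t B_t<\epsilon$), then expands $(1-U)^{-1}=\sum_{n,k}\binom{n}{k}A^{*k}*B^{*(n-k)}$ binomially. It controls $B^{*r}$ by a Lemma~\ref{lm:u2}-style iteration (exploiting the smallness of $\epsilon$) and $A^{*k}$ by its compact support $[0,kt_0]$ and total mass $a^k$, and finishes by splitting the $n$-sum at $n\sim t/(2t_0)$ so that the geometric factor $(a+\epsilon_1)^n$ beats the polynomial.

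Your argument is more elementary: a direct strong induction on the renewal-type recurrence $\phi_t=\sum_{s=1}^t U_s\phi_{t-s}$, using only the finite $\ell^1$ total $\Phi=(1-U_\Sigma)^{-1}$ and an asymmetric split at $s=\epsilon t$. The key observation that choosing $\epsilon$ small makes the contraction factor $\theta=(1-\epsilon)^{-\xi}U_\Sigma<1$ is exactly what allows you to bypass the ``sufficiently small $C'$'' hypothesis of Lemma~\ref{lm:u2}, as you correctly note. What the paper's approach buys is perhaps a more explicit structural picture (separating the finite-range burst from the tail decay), while your approach buys brevity and avoids the convolution combinatorics entirely; both yield the same $O(t^{-\xi})$ conclusion with non-explicit constants.
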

\begin{proof}
The operator $(1- \mathbb U)^{-1}$ is the operator of convolution with the kernel $((1-U)^{-1})_t$ which can be expanded as
\begin{equation}
    ((1- U)^{-1})_t = \delta_t +U_t+(U*U)_t+\ldots,
\end{equation}
where $*$ denotes the convolution of two sequences, i.e.
\begin{equation}
    (A*B)_{t}=\sum_{r=0}^t A_{t-r}B_r,\quad r=0,1,\ldots
\end{equation}
for any one-sided sequences $A,B$.
Let us split the sequence $U_t$ into a finite initial segment $A_t$ and a remainder $B_t$ with a small sum:
\begin{align}
    U ={}& A+B,\\
    A_t ={}& U_t\mathbf 1_{t\le t_0},\\
    B_t ={}& U_t\mathbf 1_{t>t_0},\\
\end{align}
where $t_0$ is such that 
\begin{equation}\label{eq:bteps}
    \sum_t B_t<\epsilon 
\end{equation}
with a sufficiently small $\epsilon$ to be chosen later. We then have
\begin{equation}\label{eq:akb}
    (1-U)^{-1} = \sum_{n=0}^\infty U^{*n} = \sum_{n=0}^\infty \sum_{k=0}^n \binom{n}{k} A^{*k} *B^{*(n-k)},
\end{equation}
where the binomial expansion is legitimate due to the commutativity and associativity of convolution.
Consider the convolutional power  $B^{*r}$. Recall that $B_t=O(t^{-\xi})$. We can now argue as in Lemma \ref{lm:u2} and show by induction that for any $r=1,2,\ldots$
\begin{equation}\label{eq:bstark}
    (B^{*r})_t \le C\epsilon_1^{r-1}t^{-\xi}
\end{equation}
with $\epsilon_1$ that can be made arbitrarily small by making $\epsilon$ in Eq. \eqref{eq:bteps} small enough. Indeed, for $r\ge 2$
\begin{align}
    (B^{*r})_t ={}&\sum_{s=1}^{t-1}(B^{*(r-1)})_s B_{t-s}\\
        ={}&\sum_{s=1}^{t/2}(B^{*(r-1)})_s B_{t-s}+\sum_{s=t/2+1}^{t-1} (B^{*(r-1)})_s B_{t-s}\\
        \le{}& \Big(\sum_{s=1}^{\infty}(B^{*(r-1)})_s\Big) C(t/2)^{-\xi}+ C\epsilon_1^{r-2}(t/2)^{-\xi}\sum_{s=1}^\infty B_s \\
        \le{}& \epsilon^{r-1} C(t/2)^{-\xi}+ C\epsilon_1^{r-2}(t/2)^{-\xi}\epsilon \\
        ={}&C\epsilon_1^{r-1}t^{-\xi}((\epsilon/\epsilon_1)^{r-1}2^{\xi}+(\epsilon/\epsilon_1)2^{\xi})\\
        \le{}&C\epsilon_1^{r-1}t^{-\xi}
\end{align}
by choosing $\epsilon$ so that $(\epsilon/\epsilon_1)^{r-1}2^{\xi}+(\epsilon/\epsilon_1)2^{\xi}<1$ for $r\ge 2$. This proves \eqref{eq:bstark}.

Now let 
\begin{equation}
    a = \sum_{t}A_t.
\end{equation}
By assumption, $a\le \sum_{t}U_t<1$. Observe that $(A^{*k})_t=0$ for $t>kt_0$. It follows that
\begin{align}
    (A^{*k} *B^{*(n-k)})_t\le{}& \Big(\sum_{s=1}^{kt_0}(A^{*k})_s\Big)C\epsilon_1^{n-k-1}\max(1,t-kt_0)^{-\xi}\\
    \le{}& a^kC\epsilon_1^{n-k-1}\max(1,t-kt_0)^{-\xi}.
\end{align}
Then from  the binomial expansion \eqref{eq:akb}
\begin{align}
    ((1-\mathbb U)^{-1}-1)_t \le{}&  \sum_{n=1}^\infty \Big(\sum_{k=0}^{n-1} \binom{n}{k} a^k C \epsilon_1^{n-1-k}\max(1, t-kt_0)^{-\xi}+a^n\mathbf 1_{t\le nt_0}\Big)\\
    ={}&\sum_{n=1}^{t/(2t_0)}+\sum_{n=t/(2t_0)+1}^\infty\\
    \le{}&(t/2)^{-\xi} \sum_{n=1}^{t/(2t_0)} \sum_{k=0}^{n} \binom{n}{k} a^k C \epsilon_1^{n-1-k}+\sum_{n=t/(2t_0)+1}^\infty \sum_{k=0}^{n} \binom{n}{k} a^k C \epsilon_1^{n-1-k}\\
    ={}&(t/2)^{-\xi} C\sum_{n=1}^{t/(2t_0)} (a+\epsilon_1)^n+C\sum_{n=t/(2t_0)+1}^\infty (a+\epsilon_1)^n\\
    \le{}& C(1-(a+\epsilon_1))^{-1}((t/2)^{-\xi}+(a+\epsilon_1)^{t/(2t_0)+1}).
\end{align}
By choosing $\epsilon_1$ sufficiently small so that $a+\epsilon_1<1$ we get the desired property $((1-\mathbb U)^{-1})_t=O(t^{-\xi}).$
\end{proof}

\section{Proof of Theorem   \ref{th:lexp}
}\label{sec:lexpproof} 
Throughout this section we will use the stationary-case representation of $L_t$ from Eq. \eqref{eq:lossexp}, with the identified $U_t=U_t', V_t=V_t'$:
\begin{align}\label{eq:lexpstatio1}
    L_t ={}& \frac{1}{2}\Big(V_{t+1}+\sum_{m=1}^t\sum_{0<t_1<\ldots<t_m<t+1} U_{t+1-t_m} U_{t_m-t_{m-1}} U_{t_{m-1}- t_{m-2}}\cdots U_{t_{2}-t_{1}}V_{t_1}\Big).
\end{align}
We use the operator formalism and results from Section \ref{sec:oprepr}.

\subsection{Part 1 (convergence)}\label{sec:evconvproof}
If $U_\Sigma<1,$ then $V_t=O(1)$ implies $L_t=O(1)$ by the remark after Proposition \ref{prop:ltboundvtut}. 

To conclude $L_t=o(1)$ from $V_t=o(1),$ write
\begin{equation}\label{eq:lt1uvts}
    L_t=\frac{1}{2}\sum_{s=1}^{t}((1-\mathbb U)^{-1})_s V_{t-s+1}.
\end{equation}
Here $((1-\mathbb U)^{-1})_s\ge 0$ and
\begin{equation}
    \sum_{s=1}^{\infty}((1-\mathbb U)^{-1})_s =\frac{1}{1-U_\Sigma}<\infty.
\end{equation}
For any $\epsilon>0$ we can choose $r$ such that $\sum_{s=r}^{\infty}((1-\mathbb U)^{-1})_s<\epsilon.$ Then by Eq. \eqref{eq:lt1uvts}
\begin{equation}
    L_t\le\frac{\max_{s=1,\ldots,r-1}V_{t-s+1}}{2(1-U_\Sigma)}+\frac{\epsilon\max_s V_s}{2}\stackrel{t\to\infty}{\longrightarrow} \frac{\epsilon\max_s V_s}{2}.
\end{equation}
It follows that $L_t\to 0.$

\subsection{Part 2 (divergence)}\label{sec:evdivproof}

Since $U_t,V_t\ge 0$, we can lower bound, for any $k\ge 0$ and $s\ge 0$,
\begin{equation}\label{eq:ltguk}
    L_{t+s}\ge (\mathbb U^k)_tV_{s+1}.
\end{equation}
We choose $s$ so that $V_{s+1}>0$. Since $U_\Sigma>1,$ we can choose $t_0$ such that \begin{equation}
    \sum_{t=1}^{t_0}U_t=a>1.
\end{equation} Consider the respective truncated sequence $A_t$:
\begin{equation}
    A_t=\begin{cases}
        U_t.&t\le t_0\\
        0,&t>t_0.
    \end{cases}
\end{equation}
It follows from Eq. \eqref{eq:ltguk} that
\begin{equation}
    L_{t+s}\ge (A^{*k})_tV_{s+1}.
\end{equation}

Note that $(A^{*k})_t=0$ for $t>kt_0$ and $\sum_{t}(A^{*k})_t=(\sum_t A_t)^k=a^k.$ It follows that
\begin{equation}
    \sum_{t=s}^{kt_0+s}L_t\ge a^kV_{s+1}.
\end{equation}
Hence
\begin{equation}    \sup_{t=0,1,\ldots}L_t\ge\sup_{k\ge 0}\frac{a^kV_{s+1}}{kt_0+s}=\infty. 
\end{equation}

\subsection{Part 3 (signal-dominated regime)}
The idea of the proof is that in the signal dominated case we expect the main contribution to $L_t$ to come from the terms with large time $t_1$ of the $V$-factor and small time increments of the other, $U$-factors. We then define the potential leading term $L_t^{(V)}$ by first replacing $V_{t_1}$ by $V_{t+1}$ in Eq.  \eqref{eq:lexpstatio1}, 
\begin{align}
    L_t^{(V,0)} ={}& \frac{1}{2}\Big(\sum_{m=0}^t\sum_{0<t_1<\ldots<t_m<t+1} U_{T+1-t_m} U_{t_m-t_{m-1}} U_{t_{m-1}-t_{m-2}}\cdots U_{t_{2}-t_{1}}\Big)V_{t+1}, 
\end{align}
and then extending summation to negative $t$:
\begin{align}
    L_t^{(V)} ={}& \frac{1}{2}\Big(\sum_{m=0}^\infty\sum_{-\infty<t_1<\ldots<t_m<t+1} U_{t+1-t_m} U_{t_m-t_{m-1}} U_{t_{m-1}- t_{m-2}}\cdots U_{t_{2}-t_{1}}\Big)V_{t+1}\\
    ={}&\frac{1}{2}\Big(\sum_{m=0}^\infty\sum_{s_1,\ldots,s_m=1}^\infty U_{s_m} U_{s_{m-1}}\cdots U_{s_1}\Big)V_{t+1}\\
    ={}&\frac{V_{t+1}}{2(1-U_\Sigma)}\\
    ={}&\frac{C_V}{2(1-U_\Sigma)}(t+1)^{-\xi_V}(1+o(1)).    
\end{align}
Thus, to establish the desired asymptotics , it is sufficient to show that the differences $L_t^{(V)}-L_t^{(V,0)}$ and $L_t^{(V,0)}-L_t$ are sub-leading compared to the leading term $L_t^{(V)}$.

\paragraph{}\paragraph{The difference $L_t^{(V)}-L_t^{(V,0)}$.}  
In terms of the operator $\mathbb U$ we can write the vector representing the difference  $L_t^{(V)}-L_t^{(V,0)}$ as
\begin{equation}
    \mathbf L^{(V)}-\mathbf L^{(V,0)} = \tfrac{1}{2}\mathbf V\odot(1-\mathbb U)^{-1}\mathbf 1_{t< 0},
\end{equation}
where $\mathbf 1_{t< 0}(t)=0$ if $t\ge 0$ and 1 otherwise, and $\odot$ is the pointwise product. Consider the norm \eqref{eq:lnorm} defined with
\begin{equation}
    a_t = \begin{cases}
        1,& t\le 0\\
        (t+1)^\xi,& t\ge 0
    \end{cases}
\end{equation}
with some $\xi>0$ to be chosen later. Then if we check that $\|\mathbb U\|<1,$ then $((1-\mathbb U)^{-1}\mathbf 1_{t< 0})_t=O(t^{-\xi})$ as $t\to +\infty$, in particular implying that 
\begin{equation}\label{eq:ltvltv0}
    L_t^{(V)}-L_t^{(V,0)} = V_t\cdot o(1)=o(L_t^{(V)}),
\end{equation} as desired. We have 
\begin{align}
    \|\mathbb U\| ={}& \sup_{t}\sum_{s}|U_{t-s}|\frac{|a_t|}{|a_s|}\\
    \le{}& \sum_{r>0}U_{r}\sup_{t}\frac{a_t}{a_{t-r}}\\
    \le{}& \sum_{r>0}U_{r}(r+1)^{\xi}.
\end{align}
By assumption, $U_r=O(r^{-\xi_U})$ with some $\xi_U>1.$ Then, by the dominated convergence theorem,
\begin{equation}
    \lim_{\xi\searrow 0}\sum_{r>0}U_{r}(r+1)^{\xi}=\sum_{r>0}U_{r}<1,
\end{equation}
by the assumption $U_\Sigma<1$. Therefore, by choosing $\xi$ small enough, we can ensure $\|\mathbb U\|<1,$ as desired. 

\paragraph{The difference $L_t^{(V,0)}-L_t$.}
We start by writing 
\begin{equation}\label{eq:ltlv0}
    L_t-L_t^{(V,0)} = \frac{1}{2}\sum_{m=1}^t\sum_{0<t_1<\ldots<t_m<t+1} U_{t+1-t_m} U_{t_m-t_{m-1}} \cdots U_{t_{2}-t_{1}}(V_{t_1}-V_{t+1})
\end{equation}
or equivalently
\begin{equation}\label{eq:llv0}
    \mathbf L-\mathbf L_V^{(V,0)} = \tfrac{1}{2}(1-\mathbb U)^{-1}\mathbf V-\tfrac{1}{2}\mathbf V\odot (1-\mathbb U)^{-1}\mathbf 1_{t\ge 0}.
\end{equation}

Let $k_t$ be any integer sequence such that $k_t\to\infty$ and $k_t = o(t)$. Using $V_t=C_Vt^{-\xi_V}(1+o(1))$ and Lemma \ref{lm:utpow}, we have
\begin{align}
    2|L_t^{(V,0)}-L_t| \le{}& \sum_{s=1}^{t} ((1-\mathbb U)^{-1})_{s}  |V_{t+1-s}-V_{t+1}| \\
    ={}& \sum_{s=1}^{k_t}+\sum_{s=k_t+1}^{t}\\
    \le {}& \sum_{s=1}^{k_t} ((1-\mathbb U)^{-1})_{s}o(t^{-\xi_V})+\sum_{s=k_t+1}^{t}O(s^{-\xi_U})O((1+t-s)^{-\xi_V})\\
    ={}&o(t^{-\xi_V})+\sum_{s=k_t}^{t/2}O(s^{-\xi_U})O((t-s)^{-\xi_V})+\sum_{s=t/2+1}^{t-1}O(s^{-\xi_U})O((t-s)^{-\xi_V})\\
    ={}&o(t^{-\xi_V})+o(1)O(t^{-\xi_V})+O(t^{-\xi_U})\sum_{s=t/2+1}^{t-1}O((t-s)^{-\xi_V})\\
    ={}&o(t^{-\xi_V})+O(t^{-\xi_U})\begin{cases}O(1),& \xi_V>1\\
    O(\ln t),&\xi_V=1\\
    O(t^{1-\xi_V}),& \xi_V<1
    \end{cases}\\
    ={}&o(t^{-\xi_V})+\begin{cases}O(t^{-\xi_U}),& \xi_V>1\\
    O(t^{-\xi_U}\ln t),&\xi_V=1\\
    O(t^{(1-\xi_U)-\xi_V}),& \xi_V<1
    \end{cases}\\  
    ={}&o(t^{-\xi_V}),\label{eq:ltv0lt}
\end{align}
where we used the assumptions $\xi_V<\xi_U$ and $\xi_U>1.$ 

Having established Eqs. \eqref{eq:ltvltv0}, \eqref{eq:ltv0lt}, we have completed the proof that \begin{equation}
    L_t=L_t^{(V)}(1+o(1))=\frac{C_V}{2(1-U_\Sigma)}t^{-\xi_V}(1+o(1)).
\end{equation}

\subsection{Part 4 (noise-dominated regime)}

We transform the original loss $L_T$ to an approximation $L_T^{(U)}$ through the following three steps:

1. Isolate in $L_T$ a long factor $U_{t_{m_1}+1- t_{m_1}}$ of length $T(1-o(1))$:
\begin{align}
    L_T^{(U,0)} ={}& \frac{1}{2}\sum_{\substack{m_1=1\\m_2=0}}^\infty\sum_{\substack{0<t_1<\ldots<t_{m_1}<T_1(T)\\ T-T_1(T)<t_{m_1+1}<\ldots<t_{m_1+m_2}<T+1}} (U_{T+1-t_{m_1+m_2}}\cdots U_{t_{m_1+2}-t_{m_1+1}})\\
    &\times U_{t_{m_1+1}-t_{m_1}}(U_{t_{m_1}-t_{m_1-1}}\cdots U_{t_{2}-t_{1}})V_{t_1}\\ 
    ={}& \frac{1}{2}\sum_{\substack{0<t_1<t<T_1(T)\\ T-T_1(T)<s<T+1}}((1-\mathbb U)^{-1})_{T+1-s}U_{s-t}((1-\mathbb U)^{-1})_{t-t_1}V_{t_1},
    \end{align}
where $T_1(T)=T^{\tfrac{\xi_U+\xi_V}{2\xi_V}}$ so that in particular $T_1(T)\to\infty$ and $T_1(T)=o(T),$ due to $\xi_V>\xi_U$.

2. Replace in $L_T^{(U,0)}$ the long factor $U_{t_{m_1+1}-t_{m_1}}$ by $U_{T}$:
\begin{align}
    L_T^{(U,1)} ={}& \frac{1}{2}\sum_{\substack{0<t_1<t<T_1(T)\\ T-T_1(T)<s<T+1}}((1-\mathbb U)^{-1})_{T+1-s}U_{T}((1-\mathbb U)^{-1})_{t-t_1}V_{t_1}.   
\end{align}

3. Finally, extend summation in $L_T^{(U,1)}$, dropping the upper constraints on $t_1,t$ and the lower constraints on $s$: 
\begin{align}
    L_T^{(U)} ={}& \frac{1}{2}\sum_{\substack{0<t_1<t<\infty\\ -\infty<s<T+1}}((1-\mathbb U)^{-1})_{T+1-s}U_{T}((1-\mathbb U)^{-1})_{t-t_1}V_{t_1}\\
    ={}&\frac{V_\Sigma U_{T}}{2(1-U_\Sigma)^2}\\
    ={}&\frac{V_\Sigma C_U}{2(1-U_\Sigma)^2}(T+1)^{-\xi_U}(1+o(1)).    
\end{align}

We argue now that each of these approximations introduces an error of order $o(T^{-\xi_U})$.
\begin{enumerate}
\item $[L\to L^{(U,0)}].$ Note that the difference $L_T-L_T^{(U,0)}$ consists of the terms where either $t_1>T_1(T)$, or there exists $k>1$ for which $T_1(T)<t_k<T-T_1(T)$. Let us bound the contribution of such terms. 

In the first case, the contribution is bounded by $O((T_1(T))^{-\xi_V})=O(T^{-\tfrac{\xi_U+\xi_V}{2}})=o(T^{-\xi_U})$, since $\xi_V>\xi_U$. 

In the second case using $((1-\mathbb U)^{-1})_{t}=O(t^{-\xi_U})$ and $V_t=O(t^{-\xi_U})$, the contribution of these terms can be upper bounded by
\begin{align}
\sum_{t_k=T_1(T)}^{T-T_1(T)} O((T-t_k)^{-\xi_U})O(t_k^{-\xi_U})={}&O(T^{-\xi_U})\sum_{t_k=T_1(T)}^{T/2} O(t_k^{-\xi_U})\\
={}&O(T^{-\xi_U})O((T_1(T))^{1-\xi_U}-T^{1-\xi_U})\\
={}&o(T^{-\xi_U}),
\end{align}   
since $\xi_U>1$.
\item $[L^{(U,0)}\to L^{(U,1)}].$ Since $s-t=T(1+o(1)),$ we have $U_{s-t}-U_T=o(T^{-\xi_U})$ and then also $|L^{(U,0)}_T-L_T^{(U,1)}|=o(T^{-\xi_U})$ by the boundedness of the summed products of the remaining factors.
\item $[L^{(U,1)}\to L^{(U)}].$ Here one can use similar arguments, taking into account the relation $U_T=O(T^{-\xi_U})$ and the tail bound 
\begin{equation}
\sum_{t>T_1}((1-\mathbb U)^{-1})_{t}=O(T_1^{1-\xi_U}),
\end{equation}
following from Lemma \ref{lm:utpow}.
\end{enumerate}
This completes the proof of this part of Theorem \ref{th:lexp}.

\section{Generalization of Theorem \ref{th:lexp} to $V_t\ne V_t', U_t\ne U_t'$}\label{sec:genutut'}
In this section we drop the assumption $V_t= V_t', U_t= U_t'$ adopted for Theorem \ref{th:lexp}. Accordingly, $V_t, U_t, L_t, U_\Sigma, V_\Sigma$ are now distinct from $V'_t, U'_t, L'_t, U'_\Sigma, V'_\Sigma$. 
The respective generalized version of Theorem \ref{th:lexp} reads 

\begin{ther}\label{th:lexphen} Suppose that the numbers $L_t$ are given by expansion \eqref{eq:lossexp} with some numbers $U_t, U_t',V_t,V_t'\ge 0.$
\begin{enumerate}
    \item \textbf{[Convergence]} Let $U'_\Sigma<1$ and $U_\Sigma<\infty$. At $t\to\infty$, if $V_t, V'_t=O(1)$ (respectively, $V_t, V'_t=o(1)$), then also  $L_t=O(1)$ (respectively, $L_t=o(1)$).
    \item \textbf{[Divergence]} If $U'_\Sigma>1$, $V'_t>0$ at least for one $t$ and $U_t> 0$ at least for one $t$, then $\sup_{t=1,2,\ldots}L_t=\infty.$
    \item \textbf{[Signal-dominated regime]} Suppose that there exist constants $\xi_V,C_V>0$ such that $V_t, V'_t=C_Vt^{-\xi_V}(1+o(1))$ as $t\to \infty$. Suppose also that $U'_\Sigma<1$ and $U_t,U'_t=O(t^{-\xi_U})$ with some $\xi_U>\max(\xi_V, 1).$ Then 
    \begin{equation}\label{eq:ltsignal1}
        L_t=\frac{(1-U'_{\Sigma}+U_\Sigma)C_V}{2(1-U'_\Sigma)}t^{-\xi_V}(1+o(1)).
    \end{equation}
    \item \textbf{[Noise-dominated regime]} Suppose that there exist constants $\xi_V>\xi_U>1, C_U>0$ such that $U_t,U_t'=C_Ut^{-\xi_U}(1+o(1))$ and $V_t,V_t'=O(t^{-\xi_V})$ as $t\to \infty$. Suppose also that $U'_\Sigma<1$. Then 
    \begin{equation}\label{eq:ltnoise1}
        L_t=\frac{(1-U'_\Sigma +U_\Sigma) V'_\Sigma C_U}{2(1- U'_\Sigma)^2}t^{-\xi_U}(1+o(1)).
    \end{equation}    
\end{enumerate}
\end{ther}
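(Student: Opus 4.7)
The plan is to reduce to the symmetric case already covered by Theorem \ref{th:lexp} via an auxiliary sequence. Define $L'_t$ by the same formula as \eqref{eq:lossexp} but with every $U$ replaced by $U'$ and every $V$ replaced by $V'$—i.e., $L'_t$ is precisely the quantity to which Theorem \ref{th:lexp} applies directly, with $(U_t, V_t)$ replaced by $(U'_t, V'_t)$. As in Section \ref{sec:opformstat}, the original and auxiliary sequences are tied together by
\begin{equation*}
    \mathbf L = \tfrac{1}{2}\mathbf V + \mathbb U \mathbf L', \qquad \mathbf L' = \tfrac{1}{2}(1-\mathbb U')^{-1}\mathbf V',
\end{equation*}
where $\mathbb U, \mathbb U'$ are the convolution operators with kernels $U_t, U'_t$. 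The whole proof then consists of first applying Theorem \ref{th:lexp} to $L'_t$ and then transporting the conclusion through the single convolution by $\mathbb U$.

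For Parts 1 and 2, the first step yields the conclusion on $L'_t$ immediately; for the transfer, I use the pointwise identity $L_t = \tfrac{1}{2}V_{t+1} + \sum_{s=0}^{t-1}U_{t-s}L'_s$. With $U_\Sigma<\infty$, a bounded (resp.\ vanishing) $L'_t$ gives a bounded (resp.\ vanishing) $L_t$, with the $o(1)$ case handled by the standard tail truncation argument from Section \ref{sec:evconvproof}. For divergence, if $U'_\Sigma > 1$ and $V'_t > 0$ for some $t$, Theorem \ref{th:lexp} Part 2 forces $\sup_t L'_t = \infty$; picking any $r\ge 1$ with $U_r > 0$, the inequality $L_{s+r} \ge U_r L'_s$ lifts unboundedness from $L'$ to $L$.

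For Parts 3 and 4, the core task is the asymptotic analysis of $(\mathbb U\mathbf L')_t = \sum_{r=1}^t U_r L'_{t-r}$ once the power-law tail of $L'_t$ is supplied by Theorem \ref{th:lexp}. In the signal-dominated regime, $L'_t \sim \frac{C_V}{2(1-U'_\Sigma)}t^{-\xi_V}$ while $U_r = O(r^{-\xi_U})$ with $\xi_U > \xi_V$, so the convolution is dominated by small $r$ and evaluates to $U_\Sigma L'_t(1+o(1))$ by the same tail-truncation step used in the proof of Theorem \ref{th:lexp} Part 3. Adding $\tfrac{1}{2}V_{t+1}\sim \tfrac{1}{2}C_V t^{-\xi_V}$ and collecting terms reproduces the coefficient $\frac{(1-U'_\Sigma+U_\Sigma)C_V}{2(1-U'_\Sigma)}$ of \eqref{eq:ltsignal1}.

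The noise-dominated case is the main technical step and the main obstacle. Here $L'_t \sim \frac{V'_\Sigma C_U}{2(1-U'_\Sigma)^2}t^{-\xi_U}$ and $L'_\Sigma = \frac{V'_\Sigma}{2(1-U'_\Sigma)}$ (both from Theorem \ref{th:lexp} Part 4 applied to the primed sequences), so the two convolution factors $U_r$ and $L'_{t-r}$ carry the \emph{same} tail exponent $\xi_U$ and neither dominates the other a priori. I would repeat the ``one long factor'' splitting from the proof of Theorem \ref{th:lexp} Part 4, decomposing $\sum_{r=1}^t U_r L'_{t-r}$ into the three ranges $r\in[1,T_1]$, $[T_1,t-T_1]$, $[t-T_1,t]$ with $T_1\to\infty$ and $T_1=o(t)$. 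The first range contributes $U_\Sigma L'_t(1+o(1))$, the third contributes $U_t L'_\Sigma(1+o(1))$, and the middle is bounded by $O(T_1^{1-\xi_U}t^{-\xi_U}) = o(t^{-\xi_U})$ using $\xi_U>1$. Substituting and noting $V_{t+1} = O(t^{-\xi_V}) = o(t^{-\xi_U})$ yields exactly $\frac{(1-U'_\Sigma + U_\Sigma)V'_\Sigma C_U}{2(1-U'_\Sigma)^2}t^{-\xi_U}(1+o(1))$, as in \eqref{eq:ltnoise1}. The delicate balance is choosing $T_1$ slowly enough that both endpoint ranges produce clean limits while still forcing the middle range to be negligible—this is inherited from Section \ref{sec:oprepr} together with Lemma \ref{lm:utpow}-type bounds.
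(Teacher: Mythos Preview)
Your proposal is correct and follows essentially the same approach as the paper: define the auxiliary sequence $L'_t$ via the primed propagators, apply Theorem~\ref{th:lexp} to it, and then transport each conclusion through the single convolution identity $L_t=\tfrac{1}{2}V_{t+1}+\sum_{s=0}^{t-1}U_{t-s}L'_s$. The paper's Part~3 uses a split at $t-t^\gamma$ and its Part~4 uses the same ``one long factor'' endpoint decomposition you describe (together with the explicit evaluation $\sum_s L'_s=\tfrac{V'_\Sigma}{2(1-U'_\Sigma)}$), so the arguments coincide.
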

Note that the only essential difference in the conclusions of Theorems \ref{th:lexp} and \ref{th:lexphen} is the factor $1-U'_\Sigma +U_\Sigma$ appearing in the leading terms. In the case  $U_\Sigma=U'_\Sigma$ this factor evaluates to 1.

We remark also that theorem \ref{th:lexphen} assumes in item 3 that the quantities $V_t$ and $V'_t$, though generally different, have the same leading terms $C_Vt^{-\xi_V};$ in particular, with the same constant $C_V$ rather than two different constants $C_V,C_{V'}$. This assumption is not important and is made simply because it is fulfilled by  the actual $V_t,V'_t$ as analyzed in Theorem \ref{th:powasymp}. The same remark applies to $U_t,U'_t$ in item~4.

\begin{proof}[Proof of Theorem \ref{th:lexphen}]

1. [Convergence] The $O(1)$ part of the claim follows from the remark after Proposition \ref{prop:ltboundvtut}.

For the $o(1)$ part, we already know from Theorem \ref{th:lexp} that $U'_\Sigma<1$ and $V_t'=o(1)$ imply $L_t'=o(1)$. Using the relation
\begin{equation}
    L_t=\frac{1}{2}\Big(V_{t+1}+\sum_{s=1}^t U_{s}L'_{t-s}\Big)
\end{equation}
and the conditions $V_t=o(1),$ $U_\Sigma<\infty$, we then get $L_t=o(1)$.

2. [Divergence] We already know from Theorem \ref{th:lexp} that $\sup_{t}L_t'=\infty$. Since, for any $s>0$, $L_t\ge U_s L_{t-s}'$, and $U_s>0$ at least for one $s$, we also have  $\sup_{t}L_t=\infty$.

3. [Signal-dominated regime] We already know by Theorem \ref{th:lexp} that
\begin{equation}
        L_t'=\frac{C_V}{2(1-U'_\Sigma)}t^{-\xi_V}(1+o(1)).
    \end{equation}
Let $\gamma=\frac{\xi_U+\max(\xi_V,1)}{2\xi_U}\in(0,1)$ and observe that
\begin{align}
    L_t={}&\frac{1}{2}V_{t+1}+\sum_{s=0}^{t-1}U_{t-s}L'_{s}\\
    ={}&\frac{C_V}{2} t^{-\xi_V}(1+o(1))+\sum_{s=0}^{t-t^\gamma}U_{t-s}L'_{s}+\sum_{s=t-t^\gamma+1}^{t}U_{t-s}L'_{s}\\
    ={}&\frac{C_V}{2} t^{-\xi_V}(1+o(1))+O\Big(t^{-\gamma \xi_U}\sum_{s=1}^{t}s^{-\xi_V}\Big)+\frac{C_V}{2(1-U'_\Sigma)}t^{-\xi_V}(1+o(1))\sum_{s=t-t^\gamma+1}^{t}U_{t-s}\\
    ={}&\frac{1}{2}\Big(C_V+\frac{C_V U_\Sigma}{1-U'_\Sigma}\Big)t^{-\xi_V}(1+o(1))+O(t^{-\frac{\xi_U+\max(\xi_V,1)}{2}})\begin{cases}O(1),& \xi_V>1\\
    O(\ln t),&\xi_V=1\\
    O(t^{1-\xi_V}),& \xi_V<1
    \end{cases}\\
    ={}&\frac{1}{2}\Big(C_V+\frac{C_V U_\Sigma}{1-U'_\Sigma}\Big)t^{-\xi_V}(1+o(1))
\end{align}
as claimed.

4. [Noise-dominated regime] We know that, by Theorem \ref{th:lexp},
\begin{equation}
        L_t'=\frac{V'_{\Sigma} C_U}{2(1-U'_\Sigma)^2}t^{-\xi_U}(1+o(1)).
    \end{equation}
We have
\begin{align}
    L_t={}&\frac{1}{2}V_{t+1}+\sum_{s=0}^{t-1}U_{t-s}L'_{s}=\sum_{s=0}^{t-1}U_{t-s}L'_{s}+o(t^{-\xi_U}).
\end{align}
Arguing as earlier, the leading contribution to the sum $\sum_{s=0}^{t-1}U_{t-s}L'_{s}$ comes from terms with $s$ close to 0 or $t$, since the terms in the middle are $O(t^{-2\xi_U})$. It follows that
\begin{equation}\label{eq:ltculs'}
    L_t = \Big[C_U \sum_{s=0}^\infty L_s'+ \frac{V'_{\Sigma} C_U}{2(1-U'_\Sigma)^2} U_\Sigma\Big]t^{-\xi_U}(1+o(1)).
\end{equation}
We can compute
\begin{equation}
    \sum_{s=0}^\infty L_s' = \frac{1}{2}\sum_{s=0}^\infty\sum_{t=1}^{s}(1-\mathbb U')^{-1}_{s-t}V'_{t}=\frac{1}{2}\sum_{s=0}^\infty(1-\mathbb U')^{-1}_{s}\sum_{t=1}^{\infty}V'_{t}=\frac{V'_{\Sigma} }{2(1-U'_\Sigma)}.
\end{equation}
Substituting in \eqref{eq:ltculs'},
\begin{align}
    L_t ={}& \Big[C_U \frac{V'_{\Sigma} }{2(1-U'_\Sigma)}+ \frac{V'_{\Sigma} C_U}{2(1-U'_\Sigma)^2} U_\Sigma\Big]t^{-\xi_U}(1+o(1))\\
    ={}&\frac{(1-U'_\Sigma +U_\Sigma) V'_\Sigma C_U}{2(1- U'_\Sigma)^2}t^{-\xi_U}(1+o(1)).
\end{align}

\end{proof}

\section{Proof of Theorem \ref{th:stab}}\label{th:stabproof}
Recall that if $B,\Delta B$ are linear operators and $B$ has a simple eigenvalue $\mu_0$ associated with a one-dimensional eigenprojector $P$, then for small $\Delta B$ the operator $B+\Delta B$ has a simple eigenvalue $\mu$ given by
\begin{equation}\label{eq:mulmu0}
    \mu = \mu_0+\Tr[P\Delta B]+O(\|\Delta B\|^2). 
\end{equation}

We apply this to $B=A_{\lambda=0}$ and $\Delta B=A_{\lambda}-A_{\lambda=0}$ with $A_\lambda$ given by \eqref{eq:atl0}. Observe that the noisy term in Eq. \eqref{eq:atl0} is $O(\lambda^2),$ so it will only contribute a correction $O(\lambda^2)$ to $\mu_\lambda$. On the other hand, the main term in $A_{\lambda}$, given by $\mathbf M\mapsto S_{\lambda}\mathbf M S_\lambda^T$, acts as a tensor square $S_\lambda\otimes S_\lambda$. Accordingly, the eigenvalue $\mu_{A,\lambda}$ of $A_\lambda$ can be written as
\begin{equation}
    \mu_{A,\lambda}=z_\lambda^2+O(\lambda^2),
\end{equation}
where $z_{\lambda}$ is the leading eigenvalue of the matrix $S_\lambda$. The leading linear term in $z_\lambda$ can be found, for example, from the condition
\begin{equation}\label{eq:chizll}
    \chi(z_\lambda,\lambda)\equiv 0,
\end{equation}

where $\chi(x,\lambda)$ is the characteristic polynomial of $S_\lambda$ defined in Eq. \eqref{eq:charpoly}. Differentiating condition \eqref{eq:chizll} over $\lambda$, we get 
\begin{equation}
    \frac{\partial \chi}{\partial x}\frac{dz_\lambda}{d\lambda}+\frac{\partial \chi}{\partial \lambda}=0.
\end{equation}

Using representation \eqref{eq:chipq} of $\chi$ and taking into account that $z_{\lambda=0}=1$, this implies 
\begin{align}
    \frac{dz_\lambda}{d\lambda}(\lambda=0)={}&-\Big(\frac{\partial \chi}{\partial \lambda}\Big/\frac{\partial \chi}{\partial x}\Big)(\lambda=0)=\frac{\sum_{m=0}^Mq_m}{\sum_{m=0}^M mp_m-M-1}=-\alpha_{\mathrm{eff}},
\end{align}
where $\alpha_{\mathrm{eff}}$ is given by the second equality in \eqref{eq:alphaeff}. To see that it is equivalent to the first equality $\alpha_{\mathrm{eff}}=\alpha-\mathbf b^T(1-D)^{-1}\mathbf c$, note from Eq. \eqref{eq:chixldet} that
\begin{align}
    \frac{\partial \chi(x,\lambda)}{\partial x}(x=1,\lambda=0)={}&\det(1-D),\\
    \frac{\partial \chi(x,\lambda)}{\partial \lambda}(x=1,\lambda=0)={}&-\det \begin{pmatrix}\mathbf a^T\mathbf c-\alpha & \mathbf a^T(1-D)-\mathbf b^T \\ \mathbf c & 1-D  \end{pmatrix}\\
    ={}&\det \begin{pmatrix}\alpha & \mathbf b^T \\ \mathbf c & 1-D  \end{pmatrix}\\
    ={}&\det(1-D)\det \begin{pmatrix}\alpha & \mathbf b^T \\ (1-D)^{-1}\mathbf c & I  \end{pmatrix}\\
    ={}&\det(1-D)(\alpha-\mathbf b^T(1-D)^{-1}\mathbf c).
\end{align}

We conclude that $z_\lambda=1-\alpha_{\mathrm{eff}}\lambda+O(\lambda^2)$ and hence
$\mu_{A,\lambda}=1-2\alpha_{\mathrm{eff}}\lambda+O(\lambda^2)$, as claimed. 

If $\alpha_{\mathrm{eff}}>0,$ then for small $\lambda$ the leading eigenvalue $|\mu_{A,\lambda}|<1$, and so $A_\lambda$ are strictly stable.

\section{Proof of Theorem \ref{th:powasymp}}\label{th:powasympproof}
\paragraph{Asymptotics of $V_t, V'_t$.} Recall that
\begin{equation}
    V'_t=\sum_k \lambda_kc_k^2 \langle  (\begin{smallmatrix}
    1\\\mathbf a
    \end{smallmatrix})(\begin{smallmatrix}
    1\\\mathbf a
    \end{smallmatrix})^T, A^{t-1}_{\lambda_k} (\begin{smallmatrix}
    1&0\\0&0
    \end{smallmatrix})\rangle.
\end{equation}
Recall by Theorem \ref{th:stab} that for sufficiently small $\lambda>0$, say for $\lambda<\lambda_*$, the maps $A_\lambda$ have a simple leading eigenvalue \begin{equation}\label{eq:mual}
    \mu_{A,\lambda}=1-2\alpha_{\mathrm{eff}}\lambda+O(\lambda^2).
\end{equation}  
Let $P_\lambda:\mathbb R^{(M+1)\times (M+1)}\to \mathbb R^{(M+1)\times (M+1)}$ denote the corresponding one-dimensional spectral projector. Then, by the assumption of strict stability of $A_\lambda,$
\begin{equation}\label{eq:v'tth4}
    V'_t=\sum_{k:\lambda_k<\lambda_*} \lambda_kc_k^2 \langle  (\begin{smallmatrix}
    1\\\mathbf a
    \end{smallmatrix})(\begin{smallmatrix}
    1\\\mathbf a
    \end{smallmatrix})^T, P_{\lambda_k} (\begin{smallmatrix}
    1&0\\0&0
    \end{smallmatrix})\rangle \mu^{t-1}_{A,\lambda_k}+O(r^t), \quad t\to \infty,
\end{equation}
for some $0<r<1.$ The second term will be asymptotically negligible, so we can focus on the first one.

Next, observe that the projector $P_\lambda$ continuously depends on $\lambda$. At $\lambda=0$ the matrix $\mathbf M_0=(\begin{smallmatrix}
    1&0\\0&0
    \end{smallmatrix})$ is precisely the leading eigenvector of $A_{\lambda=0}$, so
    \begin{equation}
        \langle  (\begin{smallmatrix}
    1\\\mathbf a
    \end{smallmatrix})(\begin{smallmatrix}
    1\\\mathbf a
    \end{smallmatrix})^T, P_{\lambda} (\begin{smallmatrix}
    1&0\\0&0
    \end{smallmatrix})\rangle = 1+o(1),\quad \lambda\to 0.
    \end{equation}
Since for any fixed $\widetilde \lambda>0$ the contribution of the terms with $\lambda_k>\widetilde \lambda$ to the expansion \eqref{eq:v'tth4} is exponentially small, we get 
\begin{equation}
    V'_t=(1+o(1))\sum_{k:\lambda_k<\lambda_*} \lambda_kc_k^2  \mu^{t-1}_{A,\lambda_k}+O(r^t), \quad t\to \infty.
\end{equation}
Given power-law spectral assumptions \eqref{eq:powlaws} and relation \eqref{eq:mual}, the asymptotic of $\sum_{k:\lambda_k<\lambda_*} \lambda_kc_k^2  \mu^{t-1}_{A,\lambda_k}$ is derived by standard methods (see e.g. Theorem 1 in Appendix B of \cite{velikanov2021explicit}):
\begin{equation}
    \sum_{k:\lambda_k<\lambda_*} \lambda_kc_k^2  \mu^{t-1}_{A,\lambda_k} = (1+o(1))Q\Gamma(\zeta+1)(2\alpha_{\mathrm{eff}}t)^{-\zeta}.
\end{equation}
We conclude that 
\begin{equation}
    V_t',V_t = (1+o(1))Q\Gamma(\zeta+1)(2\alpha_{\mathrm{eff}}t)^{-\zeta},
\end{equation}
as claimed.

\paragraph{Asymptotics of $U_t, U'_t$.} We have
\begin{equation}
    U_t'=\frac{\tau_1}{|B|}\sum_{k}\lambda_k^2 \langle   (\begin{smallmatrix}
    1\\\mathbf a
    \end{smallmatrix})(\begin{smallmatrix}
    1\\\mathbf a
    \end{smallmatrix})^T, A^{t-1}_{\lambda_k} (\begin{smallmatrix}
    -\alpha \\ \mathbf{c}  
    \end{smallmatrix})(\begin{smallmatrix}
    -\alpha \\ \mathbf{c}  
    \end{smallmatrix})^T\rangle.
\end{equation}
We argue as in the previous paragraph, but the difference now is that the matrix $(\begin{smallmatrix}
    -\alpha \\ \mathbf{c}  
    \end{smallmatrix})(\begin{smallmatrix}
    -\alpha \\ \mathbf{c}  
    \end{smallmatrix})^T$ is not an eigenvector of $A_{\lambda=0},$ so we need to find the projection $P_{\lambda=0}\big[(\begin{smallmatrix}
    -\alpha \\ \mathbf{c}  
    \end{smallmatrix})(\begin{smallmatrix}
    -\alpha \\ \mathbf{c}  
    \end{smallmatrix})^T\big]$. At $\lambda=0$, this can be factored as
    \begin{equation}
        P_{\lambda=0}\big[(\begin{smallmatrix}
    -\alpha \\ \mathbf{c}  
    \end{smallmatrix})(\begin{smallmatrix}
    -\alpha \\ \mathbf{c}  
    \end{smallmatrix})^T\big]=\big[P_{S_0}(\begin{smallmatrix}
    -\alpha \\ \mathbf{c}  
    \end{smallmatrix})\big]\big[P_{S_0}(\begin{smallmatrix}
    -\alpha \\ \mathbf{c}  
    \end{smallmatrix})\big]^T,
    \end{equation}
    
    where $P_{S_0}$ is the spectral projector for the matrix $S_{\lambda=0}$ and its eigenvalue 1. The corresponding eigenvector of $S_{\lambda=0}$ is $(\begin{smallmatrix}
    1 \\ 0  
    \end{smallmatrix}),$ so
    \begin{equation}
        P_{S_0}(\begin{smallmatrix}
    -\alpha \\ \mathbf{c}  
    \end{smallmatrix}) = x(\begin{smallmatrix}
    1 \\ 0  
    \end{smallmatrix})
    \end{equation}
    
    with a coefficient $x$ that can be found, for example, from the relation
    \begin{equation}
        A_{\lambda=0}^t(\begin{smallmatrix}
    -\alpha \\ \mathbf{c}  
    \end{smallmatrix}) = x(\begin{smallmatrix}
    1 \\ 0  
    \end{smallmatrix})+o(1),\quad t\to\infty.
    \end{equation}
    We have 
    \begin{equation}
        A_{\lambda=0}^t=\begin{pmatrix}1&\mathbf b^T\sum_{s=0}^{t-1} D^s\\ 0 & D^t\end{pmatrix},
    \end{equation}
    so
    \begin{equation}
        x = \lim_{t\to \infty}\Big(-\alpha+\mathbf b^T\sum_{s=0}^{t-1} D^s \mathbf c\Big)=-\alpha_{\mathrm{eff}}.
    \end{equation}
    It follows that
    \begin{equation}
        \langle  (\begin{smallmatrix}
    1\\\mathbf a
    \end{smallmatrix})(\begin{smallmatrix}
    1\\\mathbf a
    \end{smallmatrix})^T, P_{\lambda} \big[(\begin{smallmatrix}
    -\alpha \\ \mathbf{c}  
    \end{smallmatrix})(\begin{smallmatrix}
    -\alpha \\ \mathbf{c}  
    \end{smallmatrix})^T\big]\rangle = \alpha^2_{\mathrm{eff}}+o(1),\quad \lambda\to 0
    \end{equation}
    and hence
    \begin{equation}
    U'_t=(1+o(1))\frac{\tau_1}{|B|}\alpha^2_{\mathrm{eff}}\sum_{k:\lambda_k<\lambda_*} \lambda^2_k  \mu^{t-1}_{A,\lambda_k}+O(r^t), \quad t\to \infty.
\end{equation}
Applying the power law spectral assumption on $\lambda_k$ and computing the standard asymptotic, we finally get 
\begin{equation}
    U'_t, U_t= (1+o(1))\frac{(\alpha_{\mathrm{eff}}\Lambda)^{1/\nu}\tau_1\Gamma(2-1/\nu)}{|B|\nu}(2t)^{1/\nu-2}.
\end{equation}

\section{Proof of Theorem \ref{th:m1basic}}\label{th:m1basicproof}
1. Strict stability of $D$ is equivalent to 
\begin{equation}\label{eq:0delta2}
    0<\delta<2.
\end{equation}

By Theorem \ref{th:stab}, $\mu_{A,\lambda}=1-2\alpha_{\mathrm{eff}}\lambda+O(\lambda),$ so 
\begin{equation}
    \alpha_{\mathrm{eff}}\equiv-\frac{q_0+q_1}{\delta}\ge 0
\end{equation} is a necessary condition of strict stability of $S_\lambda$ at small $\lambda>0$. We can exclude the case of equality $\alpha_{\mathrm{eff}}=0:$ in this case $q_0=-q_1$ and so $\chi(1,\lambda)=\det(1-S_\lambda)=0$ for all $\lambda$, i.e. 1 is an eigenvalue of $S_\lambda$ for all $\lambda$.

Another necessary condition of strict stability is that the product of both eigenvalues of $S_\lambda$ is less than 1 in absolute value, i.e. $|p_0+\lambda q_0|<1.$ This holds for all $0\le\lambda\le \Lambda$ iff this holds for $\lambda=0$ and $\lambda=\lambda_{\max}$. The condition for $\lambda=0$ is the already established condition $0<\delta<2$, while the condition for $\lambda_{\max}$ means $-1<p_0+\lambda q_0<1,$ i.e.
\begin{equation}\label{eq:dlmaxq0}
    -\frac{\delta}{\lambda_{\max}}<q_0<\frac{2-\delta}{\lambda_{\max}}.
\end{equation}

Under condition \eqref{eq:dlmaxq0}, stability can only fail to hold if $S_\lambda$ has real eigenvalues. At the lowest point $\lambda$ of this failure, one of the eigenvalues of $S_\lambda$ must be 1 or $-1$. Since $\chi(1,\lambda)=-\lambda(q_0+q_1),$ 1 cannot be an eigenvalue of $S_\lambda$ for $\lambda>0$ if $\alpha_{\mathrm{eff}}>0.$ On the other hand the condition $\chi(-1,\lambda)=0$ reads
\begin{equation}
    4-2\delta+\lambda(q_1-q_0)=0.
\end{equation}

At $\lambda=0$ the l.h.s. is positive, so $S_\lambda$ does not have the eigenvalue $-1$ for $0\le\lambda\le \lambda_{\max}$ iff the l.h.s. is also positive at $\lambda=\lambda_{\max}$, i.e.
\begin{equation}
    \delta \alphaeff -2q_0=q_0-q_1<\frac{4-2\delta}{\lambda_{\max}}.
\end{equation}
Note that adding this inequality to $\alphaeff>0$ gives the right inequality \eqref{eq:dlmaxq0}, so we can drop the latter.

Summarizing, the full set of necessary and sufficient conditions of strict stability is
\begin{equation}
    0<\delta<2,\quad \alphaeff>0,\quad -q_0<\frac{\delta}{\lambda_{\max}}, \quad \delta\alphaeff<\frac{4-2\delta}{\lambda_{\max}}-2q_0. 
\end{equation}

2. The characteristic equation $\det(x-S_{\lambda})=0$ is
\begin{equation}
    (x-1)(x-1+\delta)-\lambda(q_1 x +q_0)=0.
\end{equation}

a) Suppose that $q_1\le-\alpha_{\mathrm{eff}}.$ From the condition $\alpha_{\mathrm{eff}}>0$ we have $q_0+q_1<0$. We also have $q_1\le -\alpha_{\mathrm{eff}}<0$. It follows that
\begin{equation}\label{eq:q0q1<}
    -\frac{q_0}{q_1}<1.
\end{equation}
On the other hand, $q_1\le -\alpha_{\mathrm{eff}}$ means $ q_1\le\tfrac{q_0+q_1}{\delta}$ or, equivalently,
\begin{equation}\label{eq:q0q1>}
    1-\delta\le -\frac{q_0}{q_1}.
\end{equation}
Inequalities \eqref{eq:q0q1<}, \eqref{eq:q0q1>} show that the root $-\tfrac{q_0}{q_1}$ of the linear $\lambda$-term $\lambda(q_1x+q_0)$ lies between the roots $1,1-\delta$ of the quadratic term $(x-1)(x-1+\delta)$. It follows that at all $\lambda\in\mathbb R$ the matrix $S_\lambda$ has two real eigenvalues (possibly coinciding if $-\tfrac{q_0}{q_1}=1-\delta$). 

b) Now suppose that $q_1>-\alpha_{\mathrm{eff}}$. Suppose first that $q_1>0.$ Then 
\begin{equation}
    -\frac{q_0}{q_1}>1,
\end{equation}
i.e. the root of the term $\lambda(q_1x+q_0)$ lies to the right of both roots $1, 1-\delta$. Then, since $q_1+q_0<0,$ the polynomial $\chi(x,\lambda)$ has two different real $x$ roots at sufficiently small $\lambda>0.$ Then, as $\lambda$ is increased, the roots become not real and complex conjugate, and then again become real, at even larger $\lambda$. 

To find the geometric position of the roots, note that they satisfy the condition
\begin{equation}
    \frac{(x-1)(x-1+\delta)}{q_1x+q_0}\in\mathbb R.
\end{equation}
This can be written as 
\begin{equation}
    hJ\Big(\frac{q_1x+q_0}{h}\Big)\in\mathbb R,
\end{equation}
where
\begin{equation}
h = \sqrt{(q_0+q_1)(q_0+q_1-\delta q_1)}=\delta\sqrt{\alpha_{\mathrm{eff}}(\alpha_{\mathrm{eff}}+q_1)}
\end{equation}

and $J(z)=z+\tfrac{1}{z}$ is the Joukowsky function. Since $J^{-1}(\mathbb R)=\mathbb R\setminus\{0\}\cup \{z\in\mathbb C:|z|=1\},$ the roots $x$ either are real or belong to the circle $|q_1 x+q_0|=h$, as claimed.

If $q_1>-\alpha_{\mathrm{eff}}$ but $q_1<0,$ we have 
\begin{equation}
    -\frac{q_0}{q_1}<1-\delta,
\end{equation}
i.e. the root of $q_1 x+q_0$ is to the left of both roots $1,1-\delta.$ The same pattern as above then applies -- at small $\lambda>0$ there are two real eigenvalues, then complex eigenvalues, then again real, and the non-real eigenvalues again lie on the circle $|q_1 x+q_0|=h$.

In the special case $q_1=0$ the root $-\tfrac{q_0}{q_1}$ becomes infinite. For sufficiently small $\lambda=0$ the eigenvalues are real, and then for all larger $\lambda$ become complex. The circle $|q_1 x+q_0|=h$ degenerates into the straight line $\Re x=1-\tfrac{\delta}{2}.$

3. We can compute the value $R_\lambda$ using the identity
\begin{equation}R_\lambda\equiv
            \sum_{t=0}^\infty \langle (\begin{smallmatrix}
    1 \\\ a
    \end{smallmatrix})
        (\begin{smallmatrix}
    1 \\\  a
    \end{smallmatrix})^T, A^{t}_{\lambda} 
    (\begin{smallmatrix}
    -\alpha \\\ c  
    \end{smallmatrix}) (\begin{smallmatrix}
    -\alpha \\\ c  
    \end{smallmatrix})^T \rangle = \langle (\begin{smallmatrix}
    1 \\\ a
    \end{smallmatrix})
        (\begin{smallmatrix}
    1 \\\  a
    \end{smallmatrix})^T, (1-A_{\lambda})^{-1} 
    (\begin{smallmatrix}
    -\alpha \\\ c  
    \end{smallmatrix}) (\begin{smallmatrix}
    -\alpha \\\ c  
    \end{smallmatrix})^T \rangle.
        \end{equation}
Here, $A_\lambda$ can be viewed as a linear operator on $\mathbb R^{2\times 2}$ or, simpler, on the space of symmetric matrices $\mathbb R_{\mathrm{sym}}^{2\times 2}\cong \mathbb R^3.$ The expression $(1-A_{\lambda})^{-1} 
    (\begin{smallmatrix}
    -\alpha \\\ c  
    \end{smallmatrix}) (\begin{smallmatrix}
    -\alpha \\\ c  
    \end{smallmatrix})^T$ can be found by symbolically solving the $3\times 3$ linear system $(1-A_{\lambda})Z = (\begin{smallmatrix}
    -\alpha \\\ c  
    \end{smallmatrix}) (\begin{smallmatrix}
    -\alpha \\\ c  
    \end{smallmatrix})^T$ for $Z\in\mathbb R_{\mathrm{sym}}^{2\times 2}$. We perform this computation in Sympy \citep{sympy} and obtain the reported result.

\section{Proof of theorem \ref{th:acceelrated_stability}}\label{sec:accelerated_stability_proof}

\paragraph{Part 1.} Note that the theorem assumption $\lambda_k<1$ implies that for some $\varepsilon>0$ all the eigenvalues $\lambda_k\leq1-\varepsilon$. This means that the algorithm parameters never approach the boundary of strict stability of $S_\lambda$, given by theorem \ref{th:stab} at $\lambda_\mathrm{max}=1$. This observation will be useful throughout the proof. 

We start with showing the necessity of the first two conditions $\delta\to0$ and $q_0>0$ that are agnostic to spectrum $\lambda_k$. As mentioned in the main text the necessity of $\delta\to0$ to have $\alphaeff \to \infty$ follows directly from strict stability bound $\alphaeff<\frac{4}{\delta}$. Next, we observe that at fixed $\delta$, pairs of strictly stable $(\alphaeff,q_0)$ are given by $q_0>-\delta$ and $0<\alphaeff<\frac{4-2\delta}{\delta}-\frac{2q_0}{\delta}$. These inequalities define a right-angle triangle on $(\alphaeff,q_0)$ plane with vertices at $\{(\tfrac{4}{\delta},-\delta), \; (0,-\delta), \; (0,2-\delta)\}$, which can serve as a useful geometric picture of relevant $(\alphaeff,q_0)$ values.

The condition $q_0>0$ will follow from noisy stability condition $\sum_k \lambda_k^2R_{\lambda_k}=O(1)$. Hence, we bound different terms appearing in $R_\lambda$, taking into account $\lambda_k\leq1-\varepsilon$, as
\begin{equation}
\begin{split}
        2-2\delta &< \Big[\lambda q_0+2-\delta\Big] < 4-2\delta,\\
        \varepsilon(4-2\delta)&<\Big[4-2\delta-\lambda(2q_0+\delta\alphaeff)\Big]<4. 
\end{split}
\end{equation}
As $\delta\to0$, all the bounds above can be replaced by some global $\delta$-independent values. Then, we can simplify $R_\lambda$ up to a multiplicative constant as
\begin{equation}\label{eq:Rlambda_asym2}
    R_\lambda \asymp \frac{\lambda q_0^2 +\delta\alphaeff}{\lambda (\delta+\lambda q_0)},
\end{equation}
reproducing the first part of eq. \eqref{eq:Rlambda_asym}. Observe that for $q_0\le0$ we have $\delta+\lambda q_0\geq\delta$, and, therefore, $R_\lambda \geq c\tfrac{\alphaeff}{\lambda}$ with some $c>0$. Then, for the stability sum we would have $\sum_k \lambda_k^2R_{\lambda_k}\geq c\alphaeff\sum_k \lambda_k \to \infty,$ since $\alphaeff\to\infty$. This settles impossibility of $q_0\leq0$ for noisy stability. 

Now, we proceed to the last two stability conditions condition, assuming $q_0>0$. Then, both nominator and denominator in \eqref{eq:Rlambda_asym2} have only positive terms, and we can write them as $\delta+\lambda q_0 \asymp \max(\delta,\lambda q_0)$ and $\lambda q_0^2 +\delta\alphaeff\asymp \max(\lambda q_0^2, \delta\alphaeff)$. The characteristic spectral scales that separate two choices in these maximums are $\lambda^\mathrm{cr}_1=\tfrac{\delta}{q_0}$ and $\lambda^\mathrm{cr}_2=\tfrac{\delta\alpha_\mathrm{eff}}{q_0^2}$, as given in the main text. Note that we asymptotically have $\lambda^\mathrm{cr}_1<\lambda^\mathrm{cr}_2$ since $\alphaeff\to\infty$ while $q_0$ is bounded. This gives the second part of \eqref{eq:Rlambda_asym}
\begin{equation}\label{eq:Rlambda_asym3}
    R_\lambda \asymp \begin{cases}
        \frac{\alpha_\mathrm{eff}}{\lambda}, \quad &0<\lambda<\lambda_1^\mathrm{cr}\\
        \frac{\delta\alpha_\mathrm{eff}}{\lambda^2q_0}, \quad &\lambda_1^\mathrm{cr}<\lambda<\lambda_2^\mathrm{cr}\\
        \frac{q_0}{\lambda}, \quad &\lambda_2^\mathrm{cr}<\lambda<\lambda_\mathrm{max}-\varepsilon.
    \end{cases}
\end{equation}
Substituting the above into the stability sum leads to the condition
\begin{equation}\label{eq:Usigma_asym}
    \sum_k \lambda_k^2 R_{\lambda_k} \asymp \alphaeff\Big[\sum_{k: \lambda_k\le\frac{\delta}{q_0}}\lambda_k\Big] +  \frac{\delta\alphaeff}{q_0}\Big[\sum_{k:\frac{\delta}{q_0} <\lambda_k<\frac{\delta\alpha_\mathrm{eff}}{q_0^2}}1\Big] + q_0\Big[\sum_{k:\lambda_k\geq\frac{\delta\alpha_\mathrm{eff}}{q_0^2}}\lambda_k\Big] = O(1) 
\end{equation}
Here, the third term is always bounded since $\sum_k \lambda_k<\infty$ and $q_0$ is also bounded. The boundness of the first and second term corresponds to the third and forth stability conditions of theorem \ref{th:acceelrated_stability}, respectively. From the argumentation presented above, it is clear that those conditions are both necessary and sufficient. 

\paragraph{Part 2.} In fact, we can consider even weaker power-law spectrum assumption $\lambda_k \asymp k^{-\nu}$, while in the theorem statement we opted for spectral condition \eqref{eq:powlaws} in the sake of main text consistency. Assuming $\frac{\delta}{q_0}\to0$, the sums in \eqref{eq:Usigma_asym} can be estimated with a standard power-law arguments in terms of characteristic spectral index $k^{\mathrm{cr}}_1\asymp (\tfrac{\delta}{q_0})^{-\frac{1}{\nu}}$ defined by $\lambda_{k^{\mathrm{cr}}_1}=\frac{\delta}{q_0}$
\begin{equation}
    \Big[\sum_{k: \lambda_k\le\frac{\delta}{q_0}}\lambda_k\Big] \asymp \left(\frac{\delta}{q_0}\right)^{-\frac{1-\nu}{\nu}}, \qquad \Big[\sum_{k:\frac{\delta}{q_0} <\lambda_k<\frac{\delta\alpha_\mathrm{eff}}{q_0^2}}1\Big] \asymp \left(\frac{\delta}{q_0}\right)^{-\frac{1}{\nu}}.
\end{equation}
Then, both the third and fourth stability conditions become $\alphaeff \delta^{-\frac{1-\nu}{\nu}}q_0^{\frac{1-\nu}{\nu}}=O(1)$. Substituting the ansatz $\alphaeff=\delta^{-h}$ and $q_0=\delta^g$, we get $\delta^{-h-(1-g)\frac{1-\nu}{\nu}}=O(1)$, which is equivalent to $h\leq(1-g)(1-\tfrac{1}{\nu})$ given in the statement of the theorem. As for the case $g\geq1$, it breaks the previous assumption $\frac{\delta}{q_0}\to0$. This brings us back to $q_0\leq0$ case, where the first sum in \eqref{eq:Usigma_asym} is estimated as $\Omega(\alphaeff)$, leading to the divergence of $\sum_k \lambda_k^2 R_{\lambda_k}$. Finally, when $h<h_\mathrm{max}$ and $g>0$, we can see that all three sums in \eqref{eq:Usigma_asym} converge to zero, which translates to $U'_\Sigma$ at any constant batch size $|B|$.

\section{Convergence of non-stationary algorithm with power-law schedule}\label{sec:pl_alg_convergence}
In this section, we derive the convergence rate of signal propagator $V_t$ given by \eqref{eq:vtvt'} 
for the power-law ansatz schedule $(\delta_t,\alpha_{\mathrm{eff},t},q_{0,t})\sim(t^{-\overline{\delta}},t^{\overline{\alpha}},const)$. 

\paragraph{Adiabatic approximation.} For a non-stationary algorithm, evolution operators $A_{t,\lambda}$ essentially depend on $t$, making the analysis of the respective products $A_{\lambda,t}A_{\lambda,t-1}\ldots A_{\lambda,0}$ very challenging. To overcome this, we rely on the fact that in the considered schedule $\delta_t=t^{-\overline{\delta}},\alpha_{\mathrm{eff},t}=t^{\overline{\alpha}}$, the parameters are changing very slowly at large $t$. Specifically, we assume that a significant change of eigenvectors of $A_{\lambda,t}$ also requires a significant relative change of algorithm parameters (e.g. 2 times increase/decrease). To achieve this change on a time window $(t, t+\Delta t)$ for any power-law relation $f_t=t^{-F}$, a large window size $\Delta t\sim t$ is required. Let $\mu_1,\mu_2,\mu_3,\mu_4$ be the eigenvalues of $A_{\lambda_t}$, ordered as $|\mu_1|>|\mu_2|\geq|\mu_3|\geq|\mu_4|$, and $\mathbf{S}=(\mathbf{s_1} \ldots \mathbf{s}_4)$ be the matrix of respective eigenvectors. Then, if we manage to pick window size such that $\Delta t \ll t$ and $|\mu_2|^{\Delta t} \ll |\mu_1|^{\Delta t} \lesssim 1$, for any starting vector $\mathbf{r}\in\mathbb{R}^2\otimes\mathbb{R}^2$ we have
\begin{equation}
    \prod_{i=t}^{t+\Delta t-1}A_{\lambda,i} \mathbf{r} \approx \big(A_{\lambda,t}\big)^{\Delta t} \mathbf{r} = \sum_{e=1}^4\mathbf{s}_e (\mu_e)^{\Delta t} \big(\mathbf{S}^{-1} \mathbf{r}\big)_e \approx \mathbf{s}_1 (\mu_1)^{\Delta t} \big(\mathbf{S}^{-1} \mathbf{r}\big)_1. 
\end{equation}
Thus, evolving starting vector $\mathbf{r}$ with $A_{\lambda,t}$ aligns the result with eigenvector $\mathbf{s}_1$ corresponding to the slowest eigenvalue $\mu_1$ of $A_{\lambda,t}$. The magnitude of the evolution result is given, up to initial projection $\big(\mathbf{S}^{-1} \mathbf{r}\big)_1$, by the product of slowest eigenvalue $\mu_1$. Extending this argument to many consecutive time windows $(t_j,t_{j+1})$, with $\Delta t_i=t_{i+j}-t_j$ growing with $t_i$, we naturally arrive to the following \emph{adiabatic approximation}\footnote{our usage of this approximation has many parallels with adiabatic approximation in quantum mechanics \hyperlink{https://en.wikipedia.org/wiki/Adiabatic_theorem}{wiki/Adiabatic theorem}}
\begin{equation}\label{eq:adiabatic_approximation}
    \prod_{i=1}^{t}A_{\lambda,i} \mathbf{r} \asymp \left(\prod_{i=1}^{t}\mu_{1,i}\right) \mathbf{s}_{1,t}, \quad t\to\infty
\end{equation}
We will use this approximation to compute asymptotic of $V_t$. Applying \eqref{eq:adiabatic_approximation} to a single term $V_{\lambda,t}$ in $V_t$ sum gives
\begin{equation}
\begin{split}
    V_{\lambda,T}\equiv \langle  (\begin{smallmatrix}
    1\\\mathbf a
    \end{smallmatrix})(\begin{smallmatrix}
    1\\\mathbf a
    \end{smallmatrix})^T, A_{T,\lambda}A_{T-1,\lambda}\ldots A_{1,\lambda} (\begin{smallmatrix}
    1&0\\0&0
    \end{smallmatrix})\rangle &\asymp \mu_{1,T}\mu_{1,T-1}\ldots\mu_{1,1}\\
    &\asymp \operatorname{exp}\left[\int_1^T \log \mu_1(t) dt\right]\equiv V^\mathrm{adi}_{\lambda,T}.
\end{split}
\end{equation}
Here in the last transition, we used that our schedule can be smoothly interpolated to non-integer $t$. Then, replacing the sum by the integral can change the result by at most a constant, as soon as $\varepsilon<|\mu_{1,t}|<1$ for all $t$. 

\paragraph{Evolution eigenvalues.} Before proceeding to the calculation of the signal propagator as $V_T \asymp \sum_k c_k^2\lambda_k V^\mathrm{adi}_{\lambda_k,T}$, let us write down the required properties of eigenvalues of $A_\lambda$. First, we remind that, as in sec.~\ref{sec:mem1}, we consider $\tau_2=0$, which leads to factorizable evolution operator $A_\lambda=S_\lambda\otimes S_\lambda$ with eigenvalues equal to products of eigenvalues of $S_\lambda$. With a slight abuse of notation, in the rest of the section we will denote $\mu_1,\mu_2$ the eigenvalues of $S_\lambda$. These eigenvalues are given by zeros of characteristic equation $\chi(x,\lambda)=0$ (see \eqref{eq:chipq}), which in variables $(\delta,\alpha_\mathrm{eff},q_0)$ is written as
\begin{equation}
    x^2-x\big(2-\delta-\lambda(\delta\alpha_\mathrm{eff}+q_0)\big) + 1-\delta-\lambda q_0=0.
\end{equation}
The respective roots are
\begin{equation}
    \mu_{1,2}=1-\frac{\delta+\lambda(\delta\alpha_\mathrm{eff}+q_0) \pm \sqrt{E}}{2}, \quad E=(\delta+\lambda(\delta\alpha_\mathrm{eff}+q_0))^2-4\lambda\delta\alpha_\mathrm{eff}
\end{equation}
Let us denote $\widetilde{\lambda}_{1,2}^\mathrm{cr}$ the solutions of $E(\lambda)=0$, as they separate the spectrum into regions either real and complex eigenvalues $\mu_{1,2}$.
In the case $\delta\ll\delta\alpha_\mathrm{eff}\ll q_0\sim1$, they are given by $\widetilde{\lambda}_1^\mathrm{cr}\approx\frac{\delta}{4\alpha_\mathrm{eff}}$ and $\widetilde{\lambda}_2^\mathrm{cr}\approx\frac{4\delta\alpha\mathrm{eff}}{q_0^2}$. 
These critical spectral scales, although connected, should not be confused with the other scales $\lambda_{1,2}^\mathrm{cr}$ appearing in \eqref{eq:Rlambda_asym} and separating different regions of the stability sum. In particular, we have $\widetilde{\lambda}_2^\mathrm{cr} \asymp\lambda_2^\mathrm{cr}$ but $\widetilde{\lambda}_1^\mathrm{cr} \ll \lambda_1^\mathrm{cr}$, suggesting that behaviour of the stability sum changes in the middle of complex region $\lambda\in(\widetilde{\lambda}_1^\mathrm{cr}, \widetilde{\lambda}_2^\mathrm{cr})$.

In the left real region $\lambda<\widetilde{\lambda}_1^\mathrm{cr}$ evolution eigenvalues can be approximated for $\lambda \ll \widetilde{\lambda}_1^\mathrm{cr}$ as $\mu_1\approx1-\alpha_\mathrm{eff}\lambda$ and $\mu_2\approx1-\delta$, while at the border $\mu_1(\widetilde{\lambda}_1^\mathrm{cr})=\mu_2(\widetilde{\lambda}_1^\mathrm{cr})\approx 1- \frac{\delta}{2}$ (see also fig.~\ref{fig:circle} and theorem \ref{th:stab}). 

As the schedule parameters $(\delta_t,\alpha_{\mathrm{eff},t},q_{0,t})$ changes with time $t$ so will the spectral scales $\widetilde{\lambda}_{1,2}^\mathrm{cr}(t)$ and the evaluation eigenvalues $\mu_{1,2}(\lambda,t)$ within the respective spectral regions. The specific behavior of this change is central to our computation of the signal propagator asymptotic below.

\paragraph{Signal propagator asymptotic.} Now, we come back to signal propagator $V_T$ and its adiabatic approximation $V^\mathrm{adi}_{\lambda,T}$. Let us fix some $\lambda\ll1$ and increase time $T$. At first, the chosen eigenvalue $\lambda$ will be very small compared to $\widetilde{\lambda}_1^\mathrm{cr}(t)$. But as the schedule advances, it will enter the complex region at a transition time $t_*(\lambda): \widetilde{\lambda}_1^\mathrm{cr}(t_*)=\lambda$. By substituting $\delta_t,\alpha_{\mathrm{eff},t}$ from our schedule, we get $t_*(\lambda)\approx (4\lambda)^{-\frac{1}{\overline{\alpha}+\overline{\delta}}}$. This time signalizes the end of the adiabatic approximation applicability period: the latter requires $|\mu_1|>|\mu_2|$ but at $t=t_*(\lambda)$ we have $\mu_1=\mu_2$ since the roots of a quadratic polynomial merge at the border between complex and real regions.

At times $T < t_*(\lambda)$, we can use $\mu_1(\lambda,t)\approx1-\alpha_\mathrm{eff}(t)\lambda$ and estimate the propagator in adiabatic approximation as (recall that $\mu_1(A_\lambda)=\big(\mu_1(S_\lambda)\big)^2$, giving extra factor of $2$)
\begin{equation}\label{eq:signal_propagator_term_adiabaticapprox}
    V^\mathrm{adi}_{\lambda,T} = \operatorname{exp}\left[\int_1^T 2\log \mu_1(t) dt\right] \approx \operatorname{exp}\left[-\int_1^T 2\lambda t^{\overline{\alpha}}dt\right] \approx \operatorname{exp}\left[-\frac{2}{1+\overline{\alpha}}\lambda T^{1+\overline{\alpha}}\right]
\end{equation}

Let us look at the convergence reached by the end of adiabatic approximation applicability time $t_*(\lambda)\approx (4\lambda)^{-\frac{1}{\overline{\alpha}+\overline{\delta}}}$. Substituting this value into $V^\mathrm{adi}_{\lambda,T}$ leads to
\begin{equation}\label{eq:convergence_at_end_of_real_region}
    V^\mathrm{adi}_{\lambda,t_*(\lambda)} \approx \operatorname{exp}\left[-C_{V^\mathrm{adi}}\lambda^{-\frac{1-\overline{\delta}}{\overline{\alpha}+\overline{\delta}}}\right], \quad C_{V^\mathrm{adi}}=\frac{2^{-\frac{2+\overline{\alpha}-\overline{\delta}}{\overline{\alpha}+\overline{\delta}}}}{1+\overline{\alpha}}.
\end{equation}
We see that, for $\overline{\delta}<1$, the propagator at $t=t_*(\lambda)$ has the form $e^{-c_1 \lambda^{-c_2}}$ with some $c_2,c_1>0$. Thus, as $\lambda\to0$, we have exponential convergence of the propagator $V^\mathrm{adi}_{\lambda,t_*(\lambda)}=e^{-c_1 \lambda^{-c_2}}\to0$. Moreover, after adiabatic approximation applicability time $T=t_*(\lambda)$ the propagator cannot increase back, because we have strictly stable $S_\lambda$ with $|\mu_{1,2}|<1$ at all times. 

If we now combine the contribution to the propagator from all eigenvalues $V_T=\sum_k c_k^2\lambda_k V_{\lambda_k,T}$ at some large $T$, we can neglect the terms that are exponentially small in $T$ (like $e^{-c_3 T^{c_4}}, c_3,c_4>0$) compared to polynomially small terms of the size $T^{-c_5}, c_5>0$. The neglected terms include $V_{\lambda_k,T}$ for large eigenvalues $\lambda_k\sim1$ since large eigenvalues converge exponentially fast with $T$, regardless of whether adiabatic approximation is used. They also include $V_{\lambda_k,T}$ for eigenvalues that have already left adiabatic approximation applicability region $T>t_*(\lambda_k)$. For all such terms we can, in fact, formally use the adiabatic expression \eqref{eq:signal_propagator_term_adiabaticapprox} because it will also be exponentially converged.  
 
Summarizing, in the limit $T\to\infty$ we can use the expression \eqref{eq:signal_propagator_term_adiabaticapprox} at all $T$ and $\lambda_k$ and correctly capture the leading contributions into $V_T$. The resulting asymptotic is 

\begin{equation}
\begin{split}
    V_T&\asymp \sum_k c_k^2\lambda_k  V^\mathrm{adi}_{\lambda_k,T} \approx \int_0^{\lambda_\mathrm{max}}Q\zeta \lambda^{\zeta-1} V^\mathrm{adi}_{\lambda,T}d\lambda\\
    &\approx \int_0^{\lambda_\mathrm{max}}Q\zeta \lambda^{\zeta-1}e^{-\frac{2}{1+\overline{\alpha}}\lambda T^{1+\overline{\alpha}}}d\lambda\approx Q\Gamma(\zeta+1) \left(\frac{2}{1+\overline{\alpha}}\right)^{-\zeta} T^{-\zeta (1+\overline{\alpha})}.
\end{split}
\end{equation}
Here we used continuous approximation for the sum over $\lambda_k$, according to \eqref{eq:powlaws}. The integral in the second line is computed by reducing it to Gamma function integral with linear change of variables $z=\frac{2}{1+\overline{\alpha}}\lambda T^{1+\overline{\alpha}}$.

Finally, let's comment on the allowed values of momentum exponent $\overline{\delta}$. The derivation above requires $\overline{\delta}<1$, otherwise, $V^\mathrm{adi}_{\lambda,t_*(\lambda)}$ will not be exponentially converged according to \eqref{eq:convergence_at_end_of_real_region} and adiabatic approximation will not correctly capture the leading contributions to $V_T$. However, the Jacobi schedule, which is provably optimal in the noiseless setting described by $V_T$, uses $\overline{\delta}=\overline{\alpha}=1$. This suggests that the requirement $\overline{\delta}<1$ is reasonable and approaches the optimal scaling of momentum term $\delta$. Yet, we need a method more accurate than adiabatic approximation to treat $\overline{\delta}=1$ case.

\section{Experiments}\label{sec:experiments}
In all the experiments we consider Memory-1 algorithms with $a=0$. There are many equivalent ways to implement a Memory-1 algorithm. In practice, we found it the most convenient to use a minimal modification of standard \emph{SGD with momentum} algorithm (referred to as HB in this work) implemented in modern deep learning frameworks\footnote{See, for example, \href{https://pytorch.org/docs/stable/generated/torch.optim.SGD.html}{pytorch/SGD}}
\begin{equation}\label{eq:HB_vs_memory1}
 \mathbf{u}_{t+1}=\beta \mathbf{u}_t - \nabla L(\mathbf{w}_t), \qquad 
\begin{cases}
\text{HB}: \quad &\mathbf{w}_{t+1}=\mathbf{w}_t + \alpha\mathbf{u}_{t+1}\\
\text{Memory-1}: \quad &\mathbf{w}_{t+1}=\mathbf{w}_t + \alpha_2\mathbf{u}_{t+1} - \alpha_1 \nabla L(\mathbf{w}_t),\\
\end{cases}
\end{equation}
This way, one can use an update step of standard SGD optimizer class with learning rate $\alpha=\alpha_2$, and then manually subtract parameter gradients with new learning rate $\alpha_1$  

The new learning rates in parametrization \eqref{eq:HB_vs_memory1} are expressed in terms of our main notations as $\beta=1-\delta$, $\alpha_1=\tfrac{q_0}{\beta}, \alpha_2=\delta(\alpha_\mathrm{eff}-\alpha_1)$. Then, the accelerated region $\delta^{-1}\gg\alpha_\mathrm{eff}\gg q_0=const$ translates to $\delta\ll\alpha_2\ll\alpha_1=const$. 

This gives another perspective on the accelerated AM1 algorithm we identified in sec.~\ref{sec:mem1}. To achieve acceleration, we still want, as in HB, to make the velocity vector more inertive (i.e. making the ball ``heavier''). However, while the optimally accelerated HB adds this velocity with $\alpha=O(1)$ learning rate, we add it with vanishing $\alpha_2=O(\delta\alphaeff)\to0$ learning rate but compensate with an extra ``kick'' term $\nabla L(\mathbf{w}_t)$ with constant $\alpha_1=O(1)$ learning rate. Apparently, this combination leads both acceleration and bounded effect of mini-batch noise.   

Next, we discuss our two experimental setups.

\subsection{Gaussian data}
For Gaussian data experiments, we generate inputs $\mathbf{x}\sim \mathcal{N}(0,\Lambda)$ from Gaussian distribution with diagonal covariance $\Lambda = \operatorname{diag}(\lambda_1, \lambda_2, \ldots \lambda_M)$, leading to diagonal Hessian $\mathbf{H}=\Lambda$. For the optimal parameters, we simply take the vector of target coefficients $\mathbf{w}_*=(c_1, c_2, \ldots, c_M)$. Then, we set ideal power-laws for $\lambda_k=k^{-\nu}$ and $c_k^2 = k^{-\kappa-1}$ which satisfy our asymptotic power-law conditions \eqref{eq:powlaws} with $\zeta=\frac{\kappa}{\nu}$. This way, we create a simple setting where all the loss curves are expected to have a pronounced power-law shape, and we will be able to directly compare experimental convergence exponents with theoretically predicted ones. We set $M=10^4$ to ensure that truncation of the spectrum has negligible finite size effects and the experiments are well described by the asymptotic regime $t,k\to\infty$. 

For the experiments in fig.~\ref{fig:accleration_loss} and fig.~\ref{fig:accleration_divergence_Gauss}, we pick spectral exponents $\zeta=0.5, \nu=3$. For the AM1 schedule exponent we use $\overline{\delta}=0.95, \overline{\alpha}=\overline{\delta}(1-\tfrac{1}{\nu})$, at the edge of allowed AM1 exponents values. In both figures~\ref{fig:accleration_loss} and~\ref{fig:accleration_divergence_Gauss}, we compare empirical loss trajectories with power-law curves (dashed lines) with predicted convergence rate, which show a good agreement between each other. In fig.~\ref{fig:accleration_divergence_Gauss} we try more $\overline{\delta},\overline{\alpha}$ values, and find good agreement of the experimental loss rates with predicted $L_t=O(t^{-\zeta(1+\overline{\alpha})})$.

\subsection{MNIST}
We design the MNIST experiment to be the opposite of the Gaussian data setting to test our results in more general scenario beyond quadratic problems with asymptotically power-law spectrum. Specifically, we apply a single-hidden layer neural network to a flattened MNIST images $\mathbf{x}\in\mathbb{R}^{768}$:
\begin{equation}
    f(\mathbf{W}_1, \mathbf{W}_2, \mathbf{b}, \mathbf{x}) =\frac{1}{\sqrt{n}} \mathbf{W}_2 \operatorname{ReLU}(\mathbf{W}_1\mathbf{x}-\mathbf{b}) \in\mathbb{R}^{10}, 
\end{equation}
where $\mathbf{W}_1\in\mathbb{R}^{n\times768}, \; \mathbf{b},\mathbf{x}\in\mathbb{R}^{768}, \; \mathbf{W}_2\in\mathbb{R}^{10\times n}$, with model width taken as $n=1000$. Then, we consider one hot encoded vectors $y=(0,\ldots,1,\ldots,0)\in\mathbb{R}^10$ as quadratic loss $L_B=\frac{1}{2b}\sum_{i=1}^b \|f(\mathbf{x}_i)-\mathbf{y}_i\|^2$ as an empirical loss function to be minimized.

The above setting, although much simpler than modern deep-learning problems, provides an example of non-convex optimization function. In particular, if one linearizes $f(\mathbf{W}_1, \mathbf{W}_2, \mathbf{b}, \mathbf{x})$ with respect to parameters on the current optimization step, the resulting quadratic approximation to the original loss function will evolve during and the respective spectral distributions $\lambda_k,c_k$ are expected to deform significantly from the roughly power-law shapes with $\nu\approx1.33$ and $\kappa=\zeta\nu \approx 0.33$ reported in \cite{velikanov2021explicit} for the limit $n\to\infty$. 

Yet, on figure\ref{fig:accleration_loss} we can still see that both Jacobi-scheduled HB and AM1 display qualitatively similar behavior to that of quadratic problems with power-law spectrum, though with slightly different exponents than can be expected from $\nu\approx1.33, \; \kappa\approx0.33$ estimation. Most importantly, Jacobi-scheduled HB retains its instability in stochastic settings, while AM1 is stable at least for extremely long $t=10^5$-steps duration of the experiment.

AM1 algorithm in fig.\ref{fig:accleration_loss} uses $\overline{\delta}=1$ and $\overline{\alpha}=0.5$. This chosen effective learning rate exponent is, in fact, larger than the maximal stable exponent $\overline{\alpha}_\mathrm{max}=\overline{\delta}(1-\tfrac{1}{\nu})\approx0.25$ that could be estimated with MNIST eigenvalue exponent $\nu\approx1.33$. Yet, we don't see any signs of divergence on the whole training duration, but which were observed on Gaussian data experiments in fig.~\ref{fig:accleration_divergence_Gauss}. That might suggest that non-quadratic models could exhibit different stability conditions due to some recovery mechanism, with the catapult effect \cite{lewkowycz2020large} being one example of such recovery. We provide MNIST loss trajectories for extra values of Batch size and algorithm parameters in fig.~\ref{fig:accleration_divergence_MNIST}.

\begin{figure}
    \centering
    \includegraphics[scale=0.28, clip, trim=5mm 5mm 3mm 31mm]{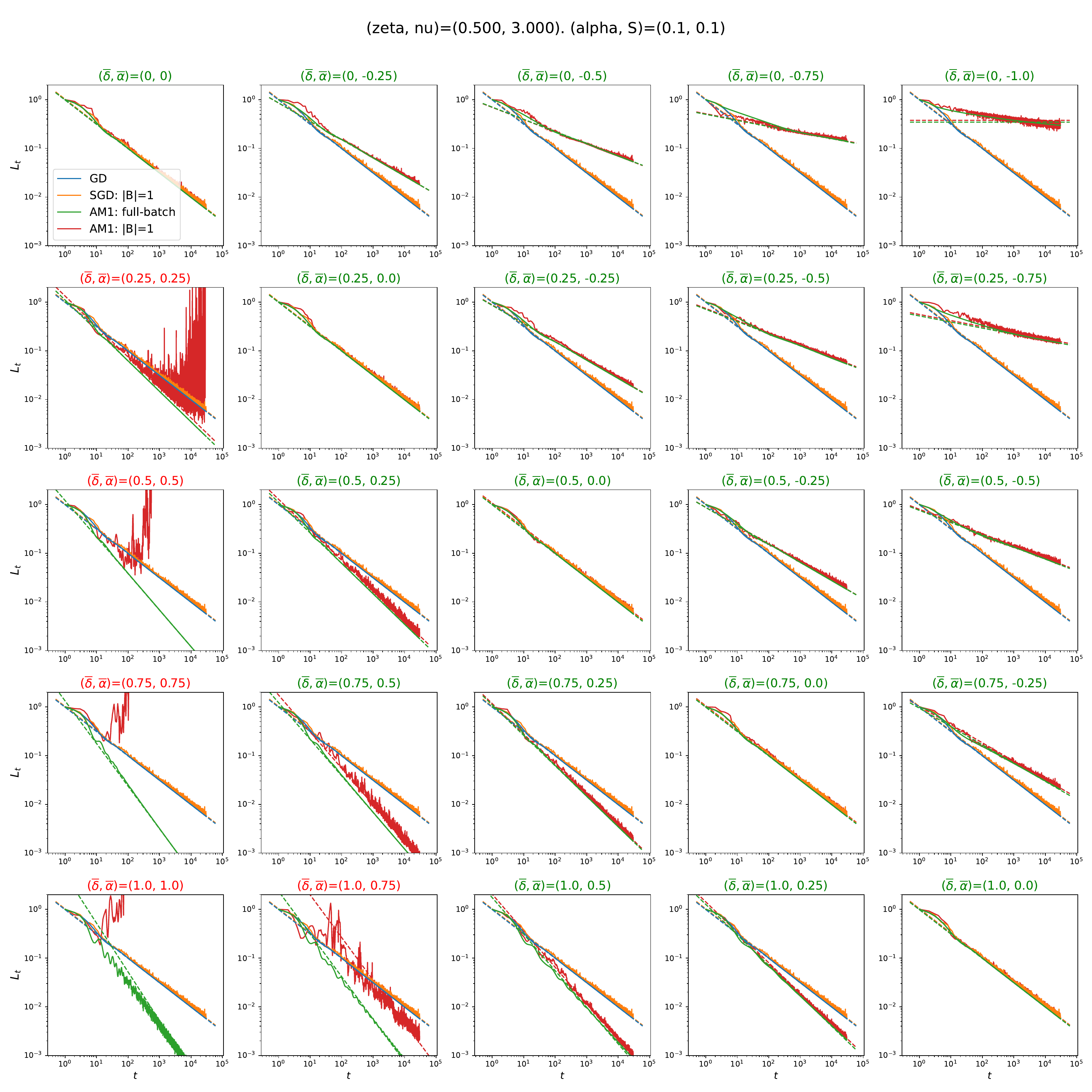}
    \caption{Comparison of plain GD and AM1 on a grid of $\overline{\delta},\overline{\alpha}$. We run both algorithms in full-batch setting and mini-batch with $|B|=1$, leading to 4 trajectories on each subplot. Also, we choose $\alpha=0.1$ for both algorithms, and $\alphaeff[,t]=0.1 t^{\overline{\alpha}}$ to avoid any non-asymptotic divergence. For each couple $\overline{\delta},\overline{\alpha}$, we color in green the subplot titles if stability condition $\overline{\alpha}\le\overline{\delta}(1-\tfrac{1}{\nu})$ holds, and in red otherwise. We observe that all empirically diverged trajectories are predicted correctly by this stability condition, while a single trajectory $(\overline{\delta},\overline{\alpha})=(1.0,0.75)$ is expected to diverge but demonstrate only large but bounded noise fluctuations. Finally, along each experimental trajectory, we plot an exact power-law (dashed) line with rate: $t^{-\zeta}$ for GD and $t^{-\zeta(1+\overline{\alpha})}$ for AM1. Again, we observe very good agreement between theoretical prediction and empirical rates, validating adiabatic approximation of sec.~\ref{sec:pl_alg_convergence}. Moreover, the rate $t^{-\zeta(1+\overline{\alpha})}$ remains valid even for the case $\overline{\delta}=1$ not covered by adiabatic approximation.}
    \label{fig:accleration_divergence_Gauss}
\end{figure}  

\begin{figure}
    \centering
    \includegraphics[scale=0.3, clip, trim=5mm 5mm 3mm 3mm]{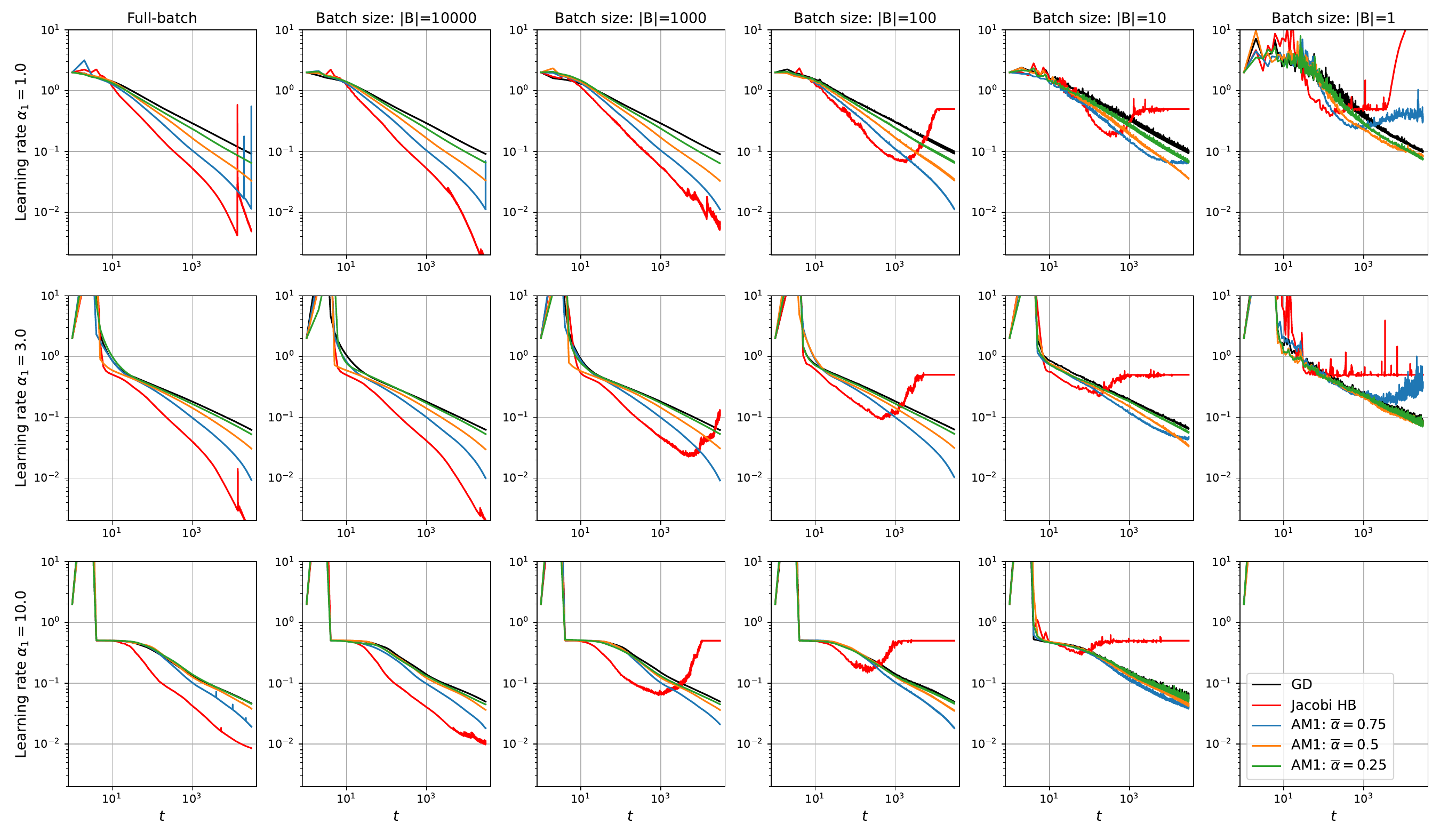}
    \caption{Trajectories of plain GD, Jacobi scheduled HB and AM1 for different learning rates and batch sizes. For each subplot, learning rate $\alpha$ of GD and Jacobi-scheduled HB is equal to ``kick'' learning rate $\alpha_1$ of AM1. Also, Jacobi-scheduled HB and AM1 have exactly the same values of momentum $\beta_t=1-\tfrac{2}{t}$ on each iteration. We observe that AM1 is indeed much more stable than Jacobi-scheduled HB for a range of learning rates and batch sizes. Moreover, it is stable for larger values of $\overline{\alpha}$ than estimated maximal value $\overline{\alpha}_\mathrm{max}\approx0.25$. Another interesting observation is that the divergent trajectories of Jacobi scheduled HB, unlike for pure quadratic problems, only get stuck around the loss of random prediction instead of diverging to infinity.}
    \label{fig:accleration_divergence_MNIST}
\end{figure}  

\end{document}